\definecolor{azure}{rgb}{0.0, 0.5, 1.0}
\definecolor{cadmiumred}{rgb}{0.89, 0.0, 0.13}
\DeclareRobustCommand{\eg}{e.g.,\@\xspace}
\DeclareRobustCommand{\ie}{i.e.,\@\xspace}
\newcommand{\ldpucbvi}{\textsc{LDP-OBI}\xspace}
\newcommand{\ucbvi}{\textsc{UCB-VI}\xspace}
\newcommand{\algo}{\ldpucbvi}
\newcommand{\algol}{\textsc{LDP-OBI-L}\xspace}
\newcommand{\algog}{\textsc{LDP-OBI-G}\xspace}
\newcommand{\algob}{\textsc{LDP-OBI-RR}\xspace}
\newcommand{\algobnd}{\textsc{LDP-OBI-Bnd}\xspace}
\newcommand{\ppsrl}{\textsc{LDP-PSRL}\xspace}
\newcommand{\psrl}{\textsc{PSRL}\xspace}
\newcommand{\transp}{\mathsf{T}}
\DeclareMathOperator*{\argmax}{\arg\,\max}
\newcommand{\wt}[1]{\widetilde{#1}}
\newcommand{\wb}[1]{\overline{#1}}
\newcommand{\wh}[1]{\widehat{#1}}
\newtheorem{theorem}{Theorem}
\newtheorem{proposition}[theorem]{Proposition}
\newtheorem{definition}[theorem]{Definition}
\newtheorem{assumption}[theorem]{Assumption}
\newtheorem{lemma}[theorem]{Lemma}
\newtheorem*{theorem*}{Theorem}
\newtheorem*{proposition*}{Proposition}
\newtheorem*{corollary*}{Corollary}
\def\showcomments{0}
\newcommand{\todoe}[1]{}
\newcommand{\todoeout}[1]{}
\newcommand{\todomp}[1]{\todo[color=Green!10, inline]{\small MP: #1}}
\newcommand{\todompout}[1]{\todo[color=Green!10]{\scriptsize MP: #1}}
\newcommand{\todovp}[1]{\todo[color=Magenta!10, inline]{\small VP: #1}}
\newcommand{\todovpout}[1]{\todo[color=Magenta!10]{\scriptsize VP: #1}}
\newcommand{\todoc}[1]{}
\newcommand{\todocout}[1]{\todo[color=Blue!10]{\scriptsize CPB: #1}}
\definecolor{violet}{rgb}{0, 0, 0}
\definecolor{red}{rgb}{0, 0, 0}
\definecolor{purple}{rgb}{0, 0, 0}
\newcommand{\todoe}[1]{}
\newcommand{\todoeout}[1]{}
\newcommand{\todomp}[1]{}
\newcommand{\todompout}[1]{}
\newcommand{\todovp}[1]{}
\newcommand{\todovpout}[1]{}
\newcommand{\todoc}[1]{}
\newcommand{\todocout}[1]{}
\title{Local Differential Privacy for Regret Minimization in Reinforcement Learning}
\author{%
  Evrard Garcelon\\
  Facebook AI Research \& CREST, ENSAE\\
  Paris, France\\
  \texttt{evrard@fb.com}\\
  \And
  Vianney Perchet \\
  CREST, ENSAE Paris \& Criteo AI Lab\\
  Palaiseau, France, \\
  \texttt{vianney@ensae.fr}\\
  \And
  Ciara Pike-Burke\\
  Imperial College London\\
  London, United Kingdom\\
  \texttt{c.pikeburke@gmail.com}\\
  \And
  Matteo Pirotta \\
  Facebook AI Research\\
  Paris, France\\
  \texttt{matteo.pirotta@gmail.com}\\
}
\begin{document}

\maketitle
\doparttoc 
\faketableofcontents
\begin{abstract}
  Reinforcement learning algorithms are widely used in domains where it is desirable to provide a personalized service. In these domains it is common that user data contains sensitive information that needs to be protected from third parties. Motivated by this, we study privacy in the context of finite-horizon Markov Decision Processes (MDPs) by requiring information to be obfuscated on the user side. We formulate this notion of privacy for RL by leveraging the local differential privacy (LDP) framework. We establish a lower bound for regret minimization in finite-horizon MDPs with LDP guarantees which shows that guaranteeing privacy has a multiplicative effect on the regret. This result shows that while LDP is an appealing notion of privacy, it makes the learning problem significantly more complex. Finally, we present an optimistic algorithm that simultaneously satisfies $\varepsilon$-LDP requirements, and achieves $\sqrt{K}/\varepsilon$ regret in any finite-horizon MDP after $K$ episodes,  matching the lower bound dependency on the number of episodes $K$.
\end{abstract}

\section{Introduction} \label{sec:introduction}
The practical successes of Reinforcement Learning (RL) algorithms have led to them becoming
ubiquitous in many settings such as digital marketing, healthcare and finance, where it is desirable to provide a personalized service~\citep[e.g.,][]{mao2020realworld, wang2021roboadvising}.
However, users are becoming increasingly wary of the amount of personal information that these services require. 
This is particularly pertinent in many of the aforementioned domains where the data obtained by the RL algorithm are highly sensitive.
For example, in healthcare, the state encodes personal information such as gender, age, vital signs, etc.
In advertising, it is normal for states to include browser history, geolocalized information, etc.
Unfortunately,~\citep{pan2019how} has shown that, unless sufficient precautions are taken, the RL agent leaks information about the environment {\color{red}(i.e., states containing sensitive information)}. That is to say,
observing {\color{red}the policy computed by the RL algorithm} 
is sufficient to infer information about the data (e.g., states and rewards) used
to compute the policy (scenario \ding{172}). This puts users' privacy at jeopardy.
Users therefore want to keep their sensitive information private, not only to an observer but also to the service provider itself (i.e., the RL agent). 
In response, many services are adapting to provide stronger protection of user privacy and personal data, for example by guaranteeing privacy directly on the user side (scenario \ding{173}).
{\color{red}
This often means that user data (i.e., trajectories of states, actions, rewards) are privatized before being observed by the RL agent.
In this paper, we study the effect that this has on the learning problem in RL.
}

Differential privacy (DP)~\citep{dwork2006calibrating} is a standard mechanism for preserving data privacy, both on the algorithm and the user side. 
{\color{red} The $(\varepsilon,\delta)$-DP definition guarantees that it is statistically hard to infer information about the data used to train a model by observing its predictions, thus addressing scenario \ding{172}.}
In online learning, $(\varepsilon,\delta)$-DP has been studied in the multi-armed bandit framework~\citep[\eg][]{mishra2015Nearly,tossou2016dpmab}.
 However, 
\citep{shariff2018differentially} showed that DP is incompatible with regret minimization in the contextual bandit problems. This led to considering weaker or different notions of privacy~\citep[\eg][]{shariff2018differentially, boursier2020utility}.
Recently,~\citep{vietri2020privaterl} transferred some of these techniques to RL, presenting the first private algorithm for regret minimization in finite-horizon problems. In~\citep{vietri2020privaterl}, they considered a relaxed definition of DP called \emph{joint differential privacy} (JDP) and showed that, under JDP constraints, the regret only increases by an additive term which is logarithmic in the number of episodes.
Similarly to DP, in the JDP setting the privacy burden lies with the learning algorithm which directly observes user states and trajectories containing sensitive data. In particular, this means that the data itself is not private and could potentially be used --for example by the owner of the application-- to train other algorithms with no privacy guarantees.
An alternative and stronger definition of privacy is \emph{Local Differential Privacy} (LDP)~\citep{duchi2013local}. This requires that the user’s data is protected at collection time before the learning agent has access to it. This covers scenario \ding{173} and implies that the learner is DP.
Intuitively, in RL, LDP ensures that each sample (states and rewards associated to an user) is already private when observed by the learning agent, while JDP requires computation on the entire set of samples to be DP. 
Recently, \citep{zheng2020locally} showed that, in contrast to DP, LDP is compatible with regret minimization in contextual bandits.\footnote{
    \color{violet}
    This shows that there are peculiarities in the DP definitions that are unique to sequential decision-making problems such as RL. The discrepancy between DP and LDP in RL is due to the fact that, when guaranteeing DP, actions taken by the learner cannot depend on the current state (this would break the privacy guarantee). On the other hand, in the LDP setting, the user executes a policy prescribed by the learner on its end (i.e., directly on non-private states) and send a privatized result (sequence of states and rewards observed by executing the policy) to the learner. Hence the user can execute actions based on its current state leading to a sublinear regret.
}
LDP is thus a stronger definition of privacy, simpler to understand and more user friendly.
These characteristics make LDP more suited for real-world applications. However, as we show in this paper, guaranteeing LDP in RL makes the learning problem more challenging.

\textbf{Contributions.} In this paper, we study LDP for regret minimization in finite horizon reinforcement learning problems with $S$ states, $A$ actions, and a horizon of $H$.\footnote{We do not explicitly focus on preventing malicious attacks or securing the communication between the RL algorithm and the users.
This is outside the scope of the paper.} Our contributions are 
as follows. \textbf{1)} We provide a regret lower bound for $(\varepsilon,\delta)$-LDP of $\Omega\big(H\sqrt{SAK}/\min\{e^\varepsilon - 1, 1\} \big)$, showing LDP is inherently harder than JDP, where the lower-bound is only $\Omega\big( H\sqrt{SAK} + SAH \log(KH)/\varepsilon \big)$~\citep{vietri2020privaterl}. \textbf{2)} We propose the first LDP algorithm for regret minimization in RL.
We use a general privacy-preserving mechanism to perturb information associated to each trajectory and derive~\algo, an optimistic model-based $(\varepsilon,\delta)$-LDP algorithm with regret guarantees. \textbf{3)} We present multiple privacy-preserving mechanisms 
that are compatible with \algo and show that their regret is  $\wt{O}(\sqrt{K}/\varepsilon)$ up to some mechanism dependent terms depending on $S,A,H$. \textbf{4)} We perform numerical simulations to evaluate the impact of LDP on the learning process. For comparison, we build a Thompson sampling algorithm~\citep[e.g.,][]{Osband2013more} for which we provide LDP guarantees but no regret bound.

\textbf{Related Work.}
The notion of differential privacy was introduced in~\citep{dwork2006calibrating} and is now a standard in machine learning~\citep[\eg][]{erlingsson2014rappor, dwork2014algorithmic, abowd2018us}.
Several notions of DP have been studied in the literature, including the standard DP and LDP notions.
While LDP is a stronger definition of privacy compared to DP, recent works have highlighted that it possible to achieve a trade-off between the two settings in terms of privacy and utility.
The shuffling model of privacy~\citep{Cheu_2019,feldman2020hiding,chen2020distributed, balle2019privacy, erlingsson2020encode} allows to build $(\varepsilon, \delta)$-DP algorithm with an additional $(\varepsilon',\delta')$-LDP guarantee (for $\varepsilon' \approx \varepsilon + \ln(n)$, any $\delta'>0$ where $n$ is the number of samples), hence it is possible to trade-off between DP, LDP, and utility in this setting. However, the scope of this paper is ensuring $(\varepsilon, \delta)$-LDP guarantees for a fixed $\varepsilon$. In this case, shuffling will not provide an improvement in utility (error) (see Thm~$5.2$ in Sec.~$5.1$ of \citep{feldman2020hiding} and App.~\ref{app:shuffle_model_rl}).

The bandit literature has investigated different privacy notions, including DP, JDP and LDP~\citep{mishra2015Nearly,tossou2016dpmab,gajane2018corrupt,shariff2018differentially,sajed2019optimal,chen2020locallycombinatorial,zheng2020locally,ren2020multi}.
In contextual bandits,~\citep{shariff2018differentially} derived an impossibility result for learning under DP by showing a regret lower-bound $\Omega(T)$ for any $(\epsilon,\delta)$-DP algorithm. Since the contextual bandit problem is a finite-horizon RL problem with horizon $H=1$, this implies that DP is incompatible with regret minimization in RL as well.
Regret minimization in RL with privacy guarantees has only been considered in~\citep{vietri2020privaterl}, where the authors extended the JDP approach from bandit to finite-horizon RL problems. They proposed a variation of UBEV~\citep{dann2017unifying} using a randomized response mechanism 
to guarantee $\varepsilon$-JDP with an additive cost to the regret bound.
While \emph{local differential privacy}~\citep{duchi2013local} has attracted increasing interest in the bandit literature~\citep[e.g.,][]{gajane2018corrupt,chen2020locallycombinatorial,zheng2020locally,ren2020multi}, it remains unexplored in the RL literature, and we provide the first contribution in that direction. Finally, outside regret minimization, DP has been studied in off-policy evaluation~\citep{balle2016differentially}, in control with DP guarantees on only the reward function~\citep{wang2019privacy}, and in distributional RL~\citep{ono2020locally}.

\section{Preliminaries}

                We consider a finite-horizon time-homogeneous Markov Decision Process (MDP)~\citep[][Chp. 4]{puterman1994markov} $M = (\mathcal{S}, \mathcal{A}, p, r, H)$ with state space $\mathcal{S}$, action space $\mathcal{A}$, and horizon $H \in \mathbb{N}^+$. Every state-action pair is characterized by a reward distribution with mean $r(s,a)$ supported in $[0, 1]$ and a transition distribution $p(\cdot|s,a)$ over next state.\footnote{We can simply modify the algorithm to handle step dependent transitions and rewards. The regret is then multiplied by a factor $H\sqrt{H}$.}
                We denote by $S = |\mathcal{S}|$ and $A = |\mathcal{A}|$ the number of states and actions.
                A non-stationary Markovian deterministic (MD) policy is defined as a collection $\pi = (\pi_1, \ldots, \pi_H)$ of MD policies $\pi_h : \mathcal{S} \to \mathcal{A}$.
                For any $h \in [H] := \{1, \ldots, H\}$ and state $s \in \mathcal{S}$, the value functions of a policy $\pi$
                are defined as $Q_h^\pi(s,a) = r(s,a)+\mathbb{E}_{\pi}\left[ \sum_{i=h+1}^H r(s_i,a_i) \right]$ and $V^{\pi}_{h}(s) = Q^\pi(s, \pi_h(s))$.
                There exists an optimal Markovian and deterministic policy $\pi^\star$~\citep[][Sec. 4.4]{puterman1994markov}
                such that $V^\star_h(s) = V_h^{\pi^\star}(s) = \max_{\pi} V_h^\pi(s)$.
                The Bellman equations at stage $h \in [H]$ are defined as $Q_h^\star(s,a) = r_h(s,a) + \max_{a'} \mathbb{E}_{s' \sim p_h(s,a')}\left[ V^\star_{h+1}(s')\right]$.
                The value iteration algorithm (a.k.a.\ backward induction) computes $Q^\star$ by applying the Bellman equations starting from stage $H$ down to $1$, with $V_{H+1}^\star(s) = 0$ for any $s$. The optimal policy is simply the greedy policy: $\pi^\star_h(s) = \argmax_{a} Q^\star_h(s,a)$.
                By boundness of the reward, all value functions $V^\pi_h(s)$ are bounded in $[0, H-h+1]$
                for any $h$ and $s$.

                \textbf{The general interaction protocol.}
{\color{red}                The learning agent (e.g., a personalization service) interacts with an unknown MDP with multiple users in a sequence of episodes $k \in [K]$ of fixed length $H$. At each episode $k$, an user $u_k$ arrives and their personal information (e.g., location, gender, health status, etc.) is encoded by the state $s_{1,k}$.
                The learner selects a policy $\pi_k$ that is sent to the user $u_k$ for local execution on ``clear'' states.
                The outcome of the execution, i.e., a trajectory, $X_k = (s_{kh}, a_{kh}, r_{kh}, s_{k,h+1})_{h\in[H]}$ is sent to the learner to update the policy. Note that we have not yet explicitly taken into consideration privacy in here.}
                We evaluate the performance of a learning algorithm $\mathfrak{A}$ which plays policies $\pi_1,\dots, \pi_K$ by its cumulative regret after $K$ episodes
                \begin{align}\label{eq:regret_finite_horizion}
                        \Delta(K) = \sum_{k = 1}^{K} ( V^{\star}_{1}(s_{1,k}) - V^{\pi_{k}}_{1}(s_{1,k})).
                \end{align}

        \subsection{Local Differential Privacy in RL}
        		In many application settings, when modelling a decision problem as a finite horizon MDP, it is natural to
                view each episode $k \in [K]$ as a trajectory associated to a specific user.
                In this paper, we assume that the sensitive information is contained in the states and
                rewards of the trajectory. Those quantities need to be kept private.
                This is reasonable in many settings such as healthcare, advertising, and finance, where states encode personal information, such as location, health, income etc.
                {\color{red}For example, an investment service may aim to provide each user with investment suggestions tailored to their income, deposit amount, age, risk level, properties owned, etc. This information is encoded in the user state and evolves over time as a consequence of investment decisions. The service provides guidances in the form of a policy (e.g., where, when and how much to invest) and the user follows the strategy for a certain amount of time. After that and based on the newly acquired information the provider may decide to change the policy.
                However, the user may want to keep their personal and sensitive information private to the company, while still receiving 
                a personalised and meaningful service.}
                This poses a fundamental challenge since in many cases, this information about actions
                taken in each state is essential for learning and creating a personalized experience
                for the user.
                The goal of a private RL algorithm is thus to ensure that the sensitive information remains private, while preserving the learnability of the problem.

                Privacy in RL has been tackled in~\citep{vietri2020privaterl} through the lens of
                \emph{joint differential privacy} (JDP). Intuitively, JDP requires that when a user changes, the actions observed by the other $K-1$ users will not change
                 much~\citep{vietri2020privaterl}.
                 The privacy burden thus lies with the RL algorithm.
                 The algorithm has access to all the information about the users (\ie trajectories)
                 containing sensitive data. It then has to provide guarantees about the privacy
                 of the data and carefully select the policies to execute in order to guarantee JDP.
                This approach to privacy requires the user to trust the RL algorithm to privately handle
                the data and not to expose or share sensitive information, and does not cover the examples mentioned above.

                In contrast to prior work, in this paper, we consider \emph{local differential privacy}
                (LDP) in RL. This removes the requirement that the RL algorithm observes the true
                sensitive data, achieving stronger privacy guarantees.
                LDP requires that an algorithm
                has access to user information (trajectories in RL) only through samples that have been privatized before being passed to the learning agent.
                This is different to JDP or DP where the trajectories are directly fed to the RL agent.
                In LDP, information is secured locally by the user using a private randomizer
                $\mathcal{M}$, before being sent to the RL agent.
                The appeal of this local model is that \emph{privatization can be done locally on the user-side}. Since nobody other than the user has ever
                access to any piece of non private data, this local setting is far more private. There are several variations of
                LDP available in the literature. In this paper, we focus on the non-interactive setting.
                We argue that this is more appropriate for RL. Indeed, due to the RL interaction framework, the data generated by
                user $k$ is a function of the data of all users $l < k$, therefore the data are not i.i.d. and
                the standard definition of sequential interactivity for LDP
                (Eq.~$1$ in \citep{duchi2013local}) is not applicable.
                It is therefore
                more natural to study the non-interactive setting (Eq.~$2$ in \citep{duchi2013local}) in RL. We formally define this below.

                Following the definition in \citep{vietri2020privaterl}, a user $u$
                is characterized by a starting state distribution $\rho_{0,u}$ (\ie for user $u$,
                 $s_{1}\sim \rho_{0,u}$) and a tree of depth $H$, describing all the possible sequence of states and
                 rewards corresponding to all possible sequences of actions.
                Alg.~\ref{alg:local_privacy} describes the LDP private interaction protocol
                between $K$ unique users $\{u_{1}, \ldots, u_{K}\} \subset \mathcal{U}^{K}$,
                with $\mathcal{U}$ the set of all users, and an RL algorithm $\mathfrak{A}$.
                For any $k \in [K]$, let $s_{1,k} \sim \rho_{0,u_k}$ be the initial state for
                user $u_k$ and denote by $X_{u_k} =\{(s_{k,h}, a_{k,h}, r_{k,h})\mid h\in [H]\}
                \in \mathcal{X}_{u_k}$ the trajectory corresponding to user $u_k$ {\color{violet} executing a policy $\pi_{k}$}. We write
                $\mathcal{M}(X_{u_k})$ to denote the privatized data generated by the randomizer
                for user $u_k$. The goal of mechanism $\mathcal{M}$ is to privatize sensitive informations
                 while encoding sufficient information for learning.
                With these notions in mind, 
                LDP in RL can be defined as follows:

                \begin{definition}\label{def:RL-LDP}
                        For any $\varepsilon\geq0$ and $\delta\geq 0$, a privacy preserving
                        mechanism $\mathcal{M}$  is said to be $(\varepsilon, \delta)$-Locally
                        Differential Private (LDP) if and only if for all users $u, u'\in \mathcal{U}$,
                        trajectories $(X_u,X_{u'}) \in \mathcal{X}_u \times \mathcal{X}_{u'}$ and all $O\subset
                        \{ \mathcal{M}(\mathcal{X}_{u}) \mid u\in \mathcal{U}\}$:
                        \begin{align}\label{eq:LDP_RL}
                                \mathbb{P}\left( \mathcal{M}({X}_{u}) \in O\right) \leq e^\varepsilon\, \mathbb{P}\left( \mathcal{M}({X}_{u'}) \in O\right) + \delta
                        \end{align}
                        where $\mathcal{X}_{u}$ is the space of trajectories associated to user $u$.
                \end{definition}

                Def.~\ref{def:RL-LDP} ensures that if the RL algorithm observes the output of the privacy mechanism $\mathcal{M}$ for two different input trajectories, then it is statistically difficult to guess which output is from which input trajectory. As a consequence, the users' privacy is preserved.


\section{Lower Bound}\label{sec:lower_bound}
We provide a lower bound on the regret that any LDP RL algorithm must incur. For this, as is standard when proving lower bounds on the regret in RL \citep[e.g.,][]{auer2002nonstochastic,lattimore2020bandit}, we construct a hard instance of the problem. The proof (see App.~\ref{app:proof_lower_bound}) relies on the fact that LDP acts as Lipschitz function, with respect to the KL-divergence, in the space of probability distribution.
\begin{theorem}[Lower-Bound]\label{thm:regret_lower_bound}
For any algorithm $\mathfrak{A}$ associated to a $\varepsilon$-LDP mechanism, any number of states $S\geq 3$, actions $A\geq2$ and $H \geq 2\log_{A}(S-2) + 2$, there exists an MDP $M$ with $S$ states and $A$ actions such that:
$\mathbb{E}_{M}(\Delta(K)) \geq \Omega\left( \frac{H\sqrt{SAK}}{\min\left\{\exp(\varepsilon) - 1, 1\right\}}\right)$.
\end{theorem}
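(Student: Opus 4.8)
The plan is to marry the classical information-theoretic regret lower bound for tabular RL with the contraction property of $\varepsilon$-LDP channels: by \citep[Thm.~1]{duchi2013local}, passing two raw trajectory distributions $P_1,P_2$ through any $\varepsilon$-LDP mechanism $\mathcal{M}$ yields outputs $Q_1,Q_2$ with $D_{\mathrm{KL}}(Q_1\|Q_2)+D_{\mathrm{KL}}(Q_2\|Q_1)\leq 4(e^\varepsilon-1)^2\|P_1-P_2\|_{\mathrm{TV}}^2$, so privacy shrinks the usable information by a factor $(e^\varepsilon-1)^2$. I build a hard instance on which this loss translates directly into a $1/(e^\varepsilon-1)$ blow-up of the regret.

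First I would construct the family of MDPs. Reserve two absorbing states, a rewarding $s_+$ (reward $1$) and a non-rewarding $s_-$ (reward $0$), and organize the remaining $S-2$ states as $\Theta(S)$ ``decision'' states addressed through an $A$-ary structure; the hypothesis $H\geq 2\log_A(S-2)+2$ is exactly what lets $\Theta(S)$ such sub-problems coexist while still leaving at least $(H+2)/2=\Theta(H)$ steps to accumulate reward after a decision state is reached. Each instance is indexed by a hidden vector $(a^\star_s)_s$ of planted optimal actions: at decision state $s$, action $a^\star_s$ transitions to $s_+$ with probability $1/2+\Delta$ and every other action with probability $1/2$, after which the agent is absorbed for the $\Theta(H)$ remaining steps. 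Thus a single suboptimal choice at $s$ costs a per-episode value gap $g=\Theta(H\Delta)$, while the \emph{only} quantity distinguishing two instances is one $\mathrm{Bernoulli}(1/2\pm\Delta)$ transition. Crucially, in the protocol of Alg.~\ref{alg:local_privacy} the initial state is drawn by the user, so I let the user-side distribution route each episode to a near-uniform decision state; this forces $N_s=\Theta(K/S)$ visits to every $s$ \emph{regardless} of $\mathfrak{A}$, ruling out the concentration that would otherwise weaken the bound.

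Second I would control distinguishability under privacy. Because each user applies $\mathcal{M}$ to its own trajectory, a chain-rule/divergence-decomposition over episodes reduces the KL between the two privatized interaction laws to a sum of per-episode terms, analogous to the standard bandit divergence lemma. For instances differing only at $s$, the raw trajectory laws differ only through one Bernoulli transition, so $\|P_1-P_2\|_{\mathrm{TV}}=O(\Delta)$, and the contraction of \citep[Thm.~1]{duchi2013local} bounds each per-episode privatized KL by $O((e^\varepsilon-1)^2\Delta^2)$; chaining gives total KL of order $\mathbb{E}[N_s]\,(e^\varepsilon-1)^2\Delta^2$. Via Pinsker (or Bretagnolle--Huber) together with a uniform prior on $a^\star_s$, the probability of a suboptimal choice at $s$ stays bounded below by a constant as long as $N_s(e^\varepsilon-1)^2\Delta^2\lesssim A$, whence
\begin{align*}
\mathbb{E}[\Delta(K)] \;\geq\; c\, g \sum_{s} N_s\Big(1-O\big((e^\varepsilon-1)\Delta\sqrt{N_s/A}\big)\Big) \;\geq\; c'\, H\Delta K\Big(1-O\big((e^\varepsilon-1)\Delta\sqrt{K/(SA)}\big)\Big).
\end{align*}

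Finally I would optimize the single free parameter $\Delta$. Taking $\Delta=\Theta\!\big(\tfrac{1}{e^\varepsilon-1}\sqrt{SA/K}\big)$ makes the bracket a constant and yields $\mathbb{E}[\Delta(K)]\geq\Omega\big(H\sqrt{SAK}/(e^\varepsilon-1)\big)$; since $\Delta$ must be a valid probability perturbation ($\Delta\leq 1/4$, say), when $e^\varepsilon-1>1$ one instead uses the non-private optimal gap and replaces the information bound $(e^\varepsilon-1)^2\Delta^2$ by $\Theta(\Delta^2)$, which recovers $\Omega(H\sqrt{SAK})$ and produces the $\min\{e^\varepsilon-1,1\}$ in the denominator. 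I expect the privacy step to be the main obstacle: one must check that the contraction of \citep[Thm.~1]{duchi2013local} applies cleanly to the \emph{per-episode} privatized trajectory and composes correctly across the $K$ adaptively chosen episodes, and that the construction genuinely isolates all discriminating information into a single Bernoulli transition, so that the raw total-variation distance is truly $O(\Delta)$ and not inflated by the shared remainder of the trajectory.
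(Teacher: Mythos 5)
Your proposal is correct, and it runs on exactly the same information-theoretic engine as the paper's proof: the contraction of $\varepsilon$-LDP channels from \citep[Thm.~1]{duchi2013local} (you quote the $\|P_1-P_2\|_{\mathrm{TV}}^2$ form, the paper uses the slightly looser KL form), composed over the $K$ adaptively chosen episodes by the chain rule for KL, converted into regret via an Auer/Pinsker-style change-of-measure with a uniform prior over planted instances, the tuning $\Delta \asymp \frac{1}{e^\varepsilon-1}\sqrt{SA/K}$, and capping the contraction coefficient at a constant to produce the $\min\{e^\varepsilon-1,1\}$ term. Where you genuinely differ is the hard instance. The paper plants a \emph{single} optimal leaf--action pair in an $A$-ary navigation tree with a fixed initial state: the agent chooses its own path, the problem is an $LA=\Theta(SA)$-armed bandit, and under the null the uniform prior gives expected visits $K/(LA)$ to the planted pair. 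You instead plant one optimal action \emph{per} decision state and force $N_s=\Theta(K/S)$ visits through the users' initial-state distribution, which buys a modular per-state testing argument with deterministic visit counts. Two caveats on your route, neither fatal. First, the theorem asserts ``there exists an MDP''; in the paper the hardness is intrinsic to the MDP (all users start at the root), whereas your instance is a joint choice of MDP and user population --- this is repaired by adding a dummy start state whose every action transitions uniformly to the decision states, at the cost of one state and one step of horizon. Second, your multi-needle design needs the per-state KL comparison to be against a reference instance where only the planted action at that state is neutralized, so that the divergence collects contributions only from episodes routed there; this is standard but must be stated, and it is precisely the complication the paper's single-needle construction avoids. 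The depth hypothesis $H \geq 2\log_A(S-2)+2$ plays the same role in both arguments: it guarantees $\Theta(H)$ reward-bearing steps after the decision, so a wrong choice costs $\Theta(H\Delta)$ per episode.
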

{\color{violet} The lower bound of Thm.~\ref{thm:regret_lower_bound} shows that the price to pay for LDP in the RL setting is a factor $1/(\exp(\varepsilon) - 1)$ compared to the non-private lower bound of $H\sqrt{SAK}$.} The regret lower bound scales multiplicatively with the privacy parameter $\varepsilon$. The recent work of \citep{vietri2020privaterl} shows that for JDP, the regret in finite-horizon MDPs is
lower-bounded by $\Omega\left( H\sqrt{SAK} + \frac{1}{\varepsilon}\right)$.
Thm.~\ref{thm:regret_lower_bound} shows that the local differential privacy
setting is inherently harder than the joint differential privacy one
for small $\epsilon$, as our lower-bound scales with $\sqrt{K}/\varepsilon$
when $\varepsilon \approxeq 0$. Both bounds scale with $\sqrt{K}$
when $\varepsilon \rightarrow +\infty$.
%


\begin{figure}[t]
  \begin{minipage}[t]{.49\textwidth}
    \begin{algorithm}[H]
            \small
            \caption{Locally Private Episodic RL}
            \label{alg:local_privacy}
         \begin{algorithmic}
            \STATE {\bfseries Input:} Agent: $\mathfrak{A}$, Local Randomizer: $\mathcal{M}$,
                    Users: $u_{1},\ldots, u_{K}$
            \FOR{$k=1$ {\bfseries to} $K$}
                    \STATE Agent $\mathfrak{A}$ computes $\pi_k$ using $\{\mathcal{M}(X_{u_l})\}_{l\in[K-1]}$
                    \STATE User $u_{k}$ receives $\pi_{k}$ from agent $\mathfrak{A}$ and observes $s_{1,k} \sim \rho_{0,u_k}$
                    \STATE User $u_k$ executes policy $\pi_k$ on ``non-private'' states and observes a trajectory $X_{u_k} = \{(s_{h,k}, a_{h,k}, r_{h,k})\}_{h\in[H]}$
                    \STATE User $u_k$ sends back private data $\mathcal{M}(X_{u_k})$ to $\mathfrak{A}$
            \ENDFOR
         \end{algorithmic}
    \end{algorithm}
  \end{minipage}\hfill
  \begin{minipage}[t]{.49\textwidth}
\begin{algorithm}[H]
  \caption{\algo($\mathcal{M}$)}
  \label{alg:LDP-UCB-VI}
  \begin{algorithmic}
    \small
    \STATE {\bfseries Input:} $\delta \in (0,1)$, $\alpha>1$, randomizer $\mathcal{M}$
    with parameters $(\epsilon_0,\delta_0)$
    \FOR{$k=1$ {\bfseries to} $K$}
          \STATE Compute $\wt{p}_{k}$ and $\wt{r}_{k}$ as in Eq.~\eqref{eq:average_statistics} using $\{\mathcal{M}(X_{u_l})\}_{l\in[K-1]}$, $\beta_k^r$ and $\beta_k^p$ as in Prop.~\ref{prop:concentration_ldp} using $\{c_{k,i}(\varepsilon_{0}, \delta_{0}, \frac{3\delta}{2 k^2 \pi^2})\}_{i}$, and $b_{h,k}$ 
          \STATE Compute $\pi_k$ as in Eq.~\eqref{eq:truncated.vi} and send it to user $u_{k}$
          \STATE User $u_{k}$ executes policy $\pi_{k}$, collects trajectory $X_{k}$ and sends back privatized value $\mathcal{M}(X_{k})$
    \ENDFOR
  \end{algorithmic}
\end{algorithm}
\end{minipage}
\end{figure}

\section{Exploration with Local Differential Privacy}

  A standard approach to the design of the private randomizer $\mathcal{M}$ is to inject noise into the data to be preserved~\citep{dwork2014algorithmic}. A key challenge in RL is that we cannot simply inject noise to each component of the trajectory since this will break the \emph{temporal consistency} of the trajectory and possibly prevent learning. In fact, a trajectory is not an arbitrary sequence of states, actions, and rewards but obeys the Markov reward process induced by a policy.
  Fortunately, Def.~\ref{def:RL-LDP} shows that the output of the randomizer need not necessarily be a trajectory but could be any private information built from it. In the next section, we show how to leverage this key feature to output succinct information that preserves the information encoded in a trajectory while satisfying the privacy constraints.
  We show that the output of such a randomizer can be used by an RL algorithm to build estimates of the unknown rewards and transitions. While these estimates are biased, we show that they carry enough information to derive optimistic policies for exploration. We leverage these tools to design \algo, an optimistic model-based algorithm for exploration with LDP guarantees.



\subsection{Privacy-Preserving Mechanism} \label{sec:privmech}

Consider the locally-private episodic RL protocol described in Alg.~\ref{alg:local_privacy}.
At the end of each episode $k \in [K]$, user $u_{k}$ uses a private randomizer $\mathcal{M}$ to generate a private statistic $\mathcal{M}(X_{u_{k}})$ to pass to the RL algorithm $\mathfrak{A}$.
This statistic should encode sufficient information for the RL algorithm to improve the policy while maintaining the user's privacy.
In \emph{model-based} settings, a sufficient statistic is a local estimate of the rewards and transitions. Since this cannot be reliably obtained from a single trajectory, we resort to counters of visits and rewards that can be aggregated by the RL algorithm.

For a given trajectory $X= \{ (s_{h}, a_{h}, r_{h})\}_{h\in [H]}$,
let $R_X(s,a) = \sum_{h=1}^{H} r_{h}\mathds{1}_{\{s_{h} = s, a_{h} = a\}}$, $N^r_X(s,a) = \sum_{h=1}^{H} \mathds{1}_{\{s_{h} = s, a_{h} = a\}}$ and $N^p_X(s,a,s') = \sum_{h=1}^{H-1} \mathds{1}_{\{s_{h} = s, a_{h} = a, s_{h+1} = s'\}}$ be  the true non-private statistics, which the agent will never observe.
We design the mechanism $\mathcal{M}$ so that for a given trajectory $X$, $\mathcal{M}$ returns private versions $\mathcal{M}(X) = (\wt{R}_{X}, \wt{N}^{r}_{X}, \wt{N}^{p}_{X})$
 of these statistics.
%
Here, $\wt{R}_X(s,a)$ is a noisy version of the cumulative reward $R_X(s,a)$, 
and $\wt{N}^r_X$ 
and $\wt{N}^p_X$ 
are perturbed counters of visits to state-action and state-action-next state tuples, respectively.
At the beginning of episode $k$, the algorithm has access to the aggregated private statistics:
\begin{equation}\label{eq:noisy_aggregate}
\begin{aligned}
\wt{R}_{k}(s,a)= \sum_{l < k} \wt{R}_{X_{u_{l}}}(s,a),~ \wt{N}_{k}^{r}(s,a) = \sum_{l < k} \wt{N}^{r}_{X_{u_{l}}}(s,a),~ \wt{N}_{k}^{p}(s,a,s') = \sum_{l < k} \wt{N}_{X_{u_{l}}}^{p}(s,a,s')
\end{aligned}
\end{equation}
We denote the non-private counterparts of these aggregated statistics as
$R_{k}(s,a) = \sum_{l < k} R_{X_{u_{l}}}(s,a)$, $N_{k}^{r}(s,a) = \sum_{l < k} N_{X_{u_{l}}}^{r}(s,a)$ and $N_{k}^{p}(s,a,s') = \sum_{l < k} N_{X_{u_{l}}}^{p}(s,a,s')$, these are also \emph{unknown} to the RL agent.
%
Using these private statistics, we can define conditions that a private randomizer
must satisfy in order for {\color{violet} our RL agent, \algo,} to be  able to learn the reward and dynamics of the MDP.
\todocout{I'm a bit confused by this. Is this section about any private randomizer or just the one we consider? But we consider a class of randomizers right?}
\begin{assumption} 
  \label{assumption:concentration_privacy}
  The private randomizer $\mathcal{M}$ satisfies $(\varepsilon_{0}, \delta_{0})$-LDP, Def.~\ref{def:RL-LDP}, with $\varepsilon_{0},\delta_{0} \geq0$.
  Moreover, for any $\delta>0$ and $k\geq 0$, there exist four finite strictly positive function, $c_{k,1}(\varepsilon_{0}, \delta_{0}, \delta), c_{k,2}(\varepsilon_{0}, \delta_{0},\delta), c_{k,3}(\varepsilon_{0}, \delta_{0},\delta) 
  ,c_{k,4}(\varepsilon_{0}, \delta_{0},\delta)\in \mathbb{R}^{\star}_{+}$ such that with probabilty at least $1-\delta$ for all $(s,a,s')\in \mathcal{S}\times\mathcal{A}\times\mathcal{S}$:
\begin{align*}
  &\left| \wt{R}_{k}(s,a) - R_{k}(s,a) \right| \leq c_{k,1}(\varepsilon_{0}, \delta_{0},\delta), && \left|\wt{N}_{k}^{r}(s,a) - N_{k}^{r}(s,a) \right| \leq c_{k,2}(\varepsilon_{0}, \delta_{0},\delta)\\
  &\left|  \sum_{s'} N_{k}^{p}(s,a,s') - \wt{N}_{k}^{p}(s,a,s')\right| \leq c_{k,3}(\varepsilon_{0}, \delta_{0},\delta), && \left| N_{k}^{p}(s,a,s') - \wt{N}_{k}^{p}(s,a,s')\right| \leq c_{k,4}(\varepsilon_{0}, \delta_{0},\delta)
\end{align*}
The functions  $c_{k,1}(\varepsilon_{0}, \delta_{0},\delta)$, $c_{k,2}(\varepsilon_{0}, \delta_{0},\delta)$, $c_{k,3}(\varepsilon_{0}, \delta_{0},\delta)$ and $c_{k,4}(\varepsilon_{0}, \delta_{0},\delta)$ must be increasing functions of $k$ and decreasing functions of $\delta$.
 We also write $c_{k,1}(\varepsilon_{0},\delta)$, $c_{k,2}(\varepsilon_{0},\delta)$, $c_{k,3}(\varepsilon_{0},\delta)$ and $c_{k,4}(\varepsilon_{0}, \delta)$ when $\delta_{0} = 0$.
\end{assumption}
In Sec.~\ref{sec:inst}, we will present schemas satisfying Asm.~\ref{assumption:concentration_privacy} and discuss their impacts on privacy and regret.

\subsection{Our LDP Algorithm For Exploration}
In this section, we introduce \algo (\emph{Local Differentially Private Optimistic Backward Induction}), a flexible optimistic model-based algorithm for exploration that can be paired with any privacy mechanism satisfying Asm.~\ref{assumption:concentration_privacy}.
When developing optimistic algorithms it is necessary to define confidence intervals using an estimated model that are broad enough to capture the true model with high probability, but narrow enough to ensure low regret.
This is made more complicated in the LDP setting, since the estimated model is defined using randomized counters.
In particular, this means we cannot use standard concentration inequalities such as those used in~\citep{azar2017minimax,zanette2019tight}.
Moreover, when working with randomized counters, classical estimators like the empirical mean can even be ill-defined as the number of visits to a state-action pair, for example, can be negative.

Nevertheless, we show that by exploiting the properties of the mechanism $\mathcal{M}$ in Asm.~\ref{assumption:concentration_privacy}, it is still possible to define an empirical model which can be shown to be close to the true model with high probability.
%
To construct this empirical estimator, we rely on the fact that for each state-action pair $(s,a)$, $\wt{N}_{k}^{r}(s,a) + c_{k,2}(\varepsilon_{0}, \delta_{0}, \delta) \geq N_{k}^{r}(s,a) \geq 0$ with high probability where the precision $ c_{k,2}(\varepsilon_{0}, \delta_{0}, \delta) $ ensures the positivity of the noisy number of visits to a state action-pair. A similar argument holds for the transitions.
Formally, the estimated \emph{private} rewards and transitions before episode $k$ are defined as follows:
\begin{equation}\label{eq:average_statistics}
  \begin{aligned}
  &\wt{r}_k(s,a) = \frac{\wt{R}_k(s,a)}{\wt{N}_{k}^{r}(s,a) + \alpha c_{k,2}(\varepsilon_{0}, \delta_{0}, \delta)}, \qquad \wt{p}_{k}(s'\mid s,a) = \frac{\wt{N}_{k}^{p}(s,a,s')}{\wt{N}_{k}^{p}(s,a) + \alpha c_{k,3}(\varepsilon_{0}, \delta_{0}, \delta)}
  \end{aligned}
\end{equation}
Note that unlike in classic optimistic algorithms, $\wt{p}_{k}$ is not a probability measure but a signed sub-probability measure.
However, this does not preclude good performance.
By leveraging properties of Asm.~\ref{assumption:concentration_privacy} we are able to build confidence intervals using these private quantities (see App.~\ref{app:proof_algo}).

\begin{proposition}\label{prop:concentration_ldp}
  For any $\varepsilon_{0}>0$, $\delta_{0}\geq0$, $\delta >0$, $\alpha > 1$ and episode $k$, using mechanism $\mathcal{M}$ satisfying Asm.~\ref{assumption:concentration_privacy}, then with probability at least $1-2\delta$, for any $(s,a)\in\mathcal{S}\times\mathcal{A}$
  {\small\begin{align*}
    \left|r(s,a) - \wt{r}_k(s,a)\right| \leq \beta_k^r(s,a) = \sqrt{\frac{2\ln\left(\frac{4\pi^{2}SAHk^{3}}{3\delta}\right)}{\wt{N}_{k}^{r}(s,a) + \alpha c_{k,2}(\varepsilon_{0},\delta_{0},\delta)}} +\frac{(\alpha + 1)c_{k,2}(\varepsilon_{0}, \delta_{0}, \delta) + c_{k,1}(\varepsilon_{0}, \delta_{0},\delta)}{\wt{N}_{k}^{r}(s,a) + \alpha c_{k,2}(\varepsilon_{0}, \delta_{0}, \delta)}\\
    \|p(\cdot| s,a) - \wt{p}_{k}(\cdot| s,a)\|_{1}\leq \beta_k^p(s,a) = 
    \sqrt{\frac{14S\ln\left(\frac{4\pi^{2}SAHk^{3}}{3\delta}\right)}{\wt{N}_{k}^{p}(s,a) + \alpha c_{k,3}(\varepsilon_{0},\delta_{0}, \delta)}}+ \frac{Sc_{k,4}(\varepsilon_{0},\delta_{0}, \delta)}{\wt{N}_{k}^{p}(s,a) + \alpha c_{k,3}(\varepsilon_{0},\delta_{0}, \delta)} +
    \\\frac{(\alpha + 1)c_{k,3}(\varepsilon_{0},\delta_{0}, \delta)}{\wt{N}_{k}^{p}(s,a) + \alpha c_{k,3}(\varepsilon_{0},\delta_{0}, \delta)}
  \end{align*}}
\end{proposition}
The shape of the bonuses in Prop.~\ref{prop:concentration_ldp} highlights two terms. The first term is reminiscent of Hoeffding bonuses as it scales with $\mathcal{O}\Big(1/\sqrt{\wt{N}_{k}^{p}}\Big)$. The other term is of order $\mathcal{O}\Big(1/\wt{N}_{k}^{p}\Big)$ and accounts for the variance (and potentially bias) of the noise added by the privacy-preserving mechanism.

As commonly done in the literature~\citep[\eg][]{azar2017minimax,Qian2019bonus,neu2020unifying}, we use these concentration results to define a bonus function
  $b_{h,k}(s,a) := (H-h+1)\cdot \beta_k^p(s,a) + \beta_k^r(s,a)$
which is used to define an optimistic value function and policy by running the following backward induction procedure:
\begin{align}\label{eq:truncated.vi}
  Q_{h,k}(s,a)s= \wt{r}_k(s,a) + b_{h,k}(s,a) + \wt{p}_k(\cdot|s,a)^\transp V_{h+1,k}, ~~~
  \pi_{h,k}(s)= \argmax_{a} Q_{h,k}(s,a)
\end{align}
where $V_{h,k}(s) = \min\{H-h+1, \max_a Q_{h,k}(s,a)\}$ and $V_{H+1,k}(s) = 0$.



\subsection{Regret Guarantees}\label{sec:regret}
We get the following general guarantees for any LDP mechanism satisfying Asm.~\ref{assumption:concentration_privacy} in \ldpucbvi.

\begin{theorem}\label{thm:regret_any_mechanism}
For any privacy mechanism $\mathcal{M}$ satisfying Asm.~\ref{assumption:concentration_privacy} with $\varepsilon>0$, $\delta_{0}\geq 0$, and for any $\delta>0$ the regret of \ldpucbvi is bounded with probability at least $1-\delta$ by:
\begin{equation}\label{eq:regret}
\begin{aligned}
\Delta(K) \leq \tilde{\mathcal{O}}\Bigg( \underbrace{HS\sqrt{AT}}_{\text{\ding{182}}} + SAH^{2}c_{K,3}\left(\varepsilon ,\delta_{0}, \frac{3\delta}{2\pi^{2}K^{2}}\right) +
H^{2}S^{2}Ac_{K,4}\left(\varepsilon ,\delta_{0},\frac{3\delta}{2\pi^{2}K^{2}}\right)&\\
+ SAH c_{K,2}\left(\varepsilon ,\delta_{0},\frac{3\delta}{2\pi^{2}K^{2}}\right) +
SAH c_{K,1}\left(\varepsilon ,\delta_{0},\frac{3\delta}{2\pi^{2}K^{2}}\right)   \Bigg)&
\end{aligned}
\end{equation}
The combination of $\mathcal{M}$ and \algo is also $(\varepsilon, \delta_{0})$-LDP.
\end{theorem}
Thm.~\ref{thm:regret_any_mechanism} shows that the regret of \ldpucbvi \emph{1)} is lower bounded by the regret in non-private settings; and \emph{2)} depends directly on the precision of the privacy mechanism used though $c_{K,1}, \dots, c_{K,4}$. Thus improving the precision, that is to say reducing the amount of noise that needs to be added to the data to guarantee LDP of the privacy mechanism, directly improves the regret bounds of \ldpucbvi.
{\color{red}
The first term in the regret bound (\ding{182}) is of the order expected in the non-private setting (see e.g., \cite{Jaksch10}).
Classical results in DP suggest that the $\{c_{K,i}\}_{i\leq 4}$ terms should be \emph{approximately} of order $\sqrt{K}/\varepsilon$ (this is indeed the case for many natural choices of randomizer).
In such a case, the dominant term in \eqref{eq:regret}, is no longer \ding{182} but rather a term of order $H^2S^2A\sqrt{K}/\varepsilon$ (from e.g. $c_{K,4}$).
The dependency on $S,A,H$ is larger than in the non-private setting.
This is because the cost of LDP is multiplicative, so it also impacts the lower order terms in the concentration results (see e.g. the second term in~\ref{prop:concentration_ldp}), 
which are typically ignored in the non-private setting.
In addition, this implies that variance reduction techniques for RL (e.g., based on Bernstein) classically used to decrease the dependence on $S,H$ will not lead to any improvement here.
This is to be contrasted with the JDP setting where \cite{vietri2020privaterl} shows that the cost of privacy is additive so using variance reduction techniques can reduce the dependency of the regret on $S,A,H$.
\todocout{I've changed this paragraph, is it still correct? The writing definitely needs to be clarified further if it is correct}
}


\begin{table}[t!]
  \centering
    \footnotesize
    \renewcommand{\arraystretch}{1.75}
    \begin{tabular}{|c|c|c|c|}
      \hline
      $\mathcal{M}$ & Noise  & $(\epsilon,\delta)$-LDP level & Regret $\Delta(T)$\\
      \hline \hline
      Laplace &  $\text{Lap}(6H/\varepsilon)$ & $(\varepsilon,0)$ & $\wt{O}(H^3 S^2 A\sqrt{K}/\varepsilon)$\\
      \hline
      Gaussian & $\mathcal{N}(0, (H/\varepsilon)^{2})$   & $(\varepsilon, \delta_0)$ & $\wt{O}(H^3 S^2 A\sqrt{K\ln(1/\delta_{0})}/\varepsilon)$\\
      \hline
      \makecell{Randomized\\ Response} & $\text{Ber}((e^{\varepsilon/H} - 1)^{-1})$
      & $(\varepsilon, 0)$ & $\wt{O}(H^{7/2} S^2 A\sqrt{K}/\varepsilon)$\\
      \hline
      \makecell{Bounded\\ Noise} &See \citep{dagan2020boundednoise}  and App.~\ref{app:bounded_noise}& $(\varepsilon, \delta_{0})$ & $\wt{O}(H^{2} S^{3} A^{3/2}\sqrt{K\ln(1/\delta_{0})}/\varepsilon)$\\
      \hline
    \end{tabular}
  \vspace{0.04in}
  \caption{
    Summary of the guarantees of $\algo$ with different randomizers for $\varepsilon >0$ and $\delta_0 > 0$. For the mechanism in this table, we have approximately
    $c_{k,i} = \wt{\mathcal{O}}(\sqrt{kH}/\varepsilon)$ for $i\in\{1,2,4\}$ (ignoring $\log$ terms) and $c_{k,3} = \wt{\mathcal{O}}(\sqrt{SkH}/\varepsilon)$}

  \label{tab:algovariants.guarantees}
\end{table}

\section{Choice of Randomizer} \label{sec:inst}
There are several randomizers that satisfy Asm.~\ref{assumption:concentration_privacy}, for example Laplace~\citep{dwork2014algorithmic}, randomized response~\citep{erlingsson2014rappor, kairouz2016discrete}, Gaussian~\citep{wang2019locally} and bounded noise~\citep{dagan2020bounded} mechanisms.
Since one method can be preferred to another depending on the application, we believe it is important to understand the regret and privacy guarantees achieved by \ldpucbvi with these randomizers.
Tab.~\ref{tab:algovariants.guarantees} provides a global overview of the properties of \ldpucbvi with different randomized mechanism.
The detailed derivations are deferred to App.~\ref{app:other_mechanisms}.

\textbf{Privacy.} All the mechanisms satisfy Asm.~\ref{assumption:concentration_privacy} but only the Laplace and Randomized Response mechanisms guarantees $(\varepsilon, 0)$-LDP.
Note that in all cases, in order to guarantee a $\varepsilon$ level of privacy (or $(\varepsilon, \delta)$ for the Gaussian and bounded noise mechanisms), it is necessary to scale the parameter $\varepsilon$ proportional to $1/H$. This is because the statistics computed by the privacy-preserving mechanism are the sum of $H$ observations which are bounded in $[0,1]$, the sensitivity\footnote{For a function $f: \mathcal{X} \rightarrow \mathbb{R}$ the sensitivity is defined as $S(f) = \max_{x,y\in \mathcal{X}} \left| f(x) - f(y)\right|$} of those statistics is bounded by $H$. Directly applying the composition theorem for DP~\citep[][Thm 3.14]{dwork2014algorithmic} over the different counters, 
would lead to an upper-bound on the privacy of the mechanism of $S^{2}AH\varepsilon$ and corresponding regret bound of $\wt{O}\left((H^{4}S^{4}A^{2}\sqrt{K})/\varepsilon\right)$.
For the randomizers that we use, the impact on $\varepsilon$ is lower thanks to fact that they are designed to exploit the structure of the input data (a trajectory). \todocout{Is this what we are trying to say? I didn't follow the commented out sentence}

\textbf{Regret Bound.}
From looking at Table~\ref{tab:algovariants.guarantees}, we see that while all the mechanisms achieve a regret bound of order $\wt{O}(\sqrt{K})$ the dependence on the privacy level $\varepsilon$ varies as well as the privacy guarantees.
The regret of Laplace, Gaussian and bounded noise mechanisms scale with $\varepsilon^{-1}$,
whereas the randomized response has an exponential dependence in $\varepsilon$ similar to
the lower bound. However, this improvement comes at the price of worse dependency in $H$ when $\varepsilon$ is small, and a worse multiplicative constant in the regret. This is due to the randomized response mechanism perturbing the counters for each stage $h \in [H]$, leading to up to $HS^2A$ obfuscated elements. 
This worse dependence is also observed in our numerical simulations.

For many of the randomizers, our regret bounds scale as $\wt{O}(H^3S^2A\sqrt{K}/\varepsilon)$.
Aside from the $\sqrt{K}/\epsilon$ rate which is expected, our bounds exhibit worse dependence on the MDP characteristics when compared to the non-private setting.
We believe that this is unavoidable due to the fact that we have to make $S^2A$ terms private, while the extra dependence on $H$ comes from dividing $\varepsilon$ by $H$ to ensure privacy over the whole trajectory.
%
{\color{red} Moreover, the DP literature~\citep[e.g.,][]{duchi2017minimax,duchi2019lower,ye2017optimal} suggests that the extra dependency on $S, A, H$ may be inherent to model-based algorithms due to the explicit estimation of private rewards and transitions. Indeed,
\citep{ye2017optimal} shows that the minimax error rate in $\ell_{1}$ norm for estimating a distribution over $S$ states is $\Omega\left(\frac{S}{\sqrt{n}(\exp(\varepsilon) - 1)}\right)$ with $n$ samples in the high privacy regime ($\varepsilon<1$),
while there is no change in the low privacy regime.
This means that in the high privacy regime the concentration scales with a multiplicative $\sqrt{S}$ term which would translate directly into the regret bound. Furthermore, this results assumes that the number $n$ of samples is known to the learner. In our setting, $n$ maps to $N_k(s,a)$ which is unknown to the algorithm. 
Since we only observe a perturbed estimate of $n$, estimating $p(\cdot| s,a)$ here is strictly harder than the aforementioned setting.}\footnote{\color{violet}
We are not aware of any lower-bound in the literature that applies to this setting but we 
believe that the $S^{2}A\sqrt{KH}/\varepsilon$ dependence may be unavoidable for model-based algorithms. This is because
 $N_{k}(s,a)$ and $\wt{N}_{k}(s,a)$ differ by at most $\sqrt{kH\log(SA)}$ (which is a well-known lower bound for the counting elements problem see \citep{bassily_2015}). Intuitively this difference creates a bias when estimating each component
$p(\cdot|s,a)$, a bias that would scale with the size of the support $p(\cdot|s,a)$ and the relative difference between $N_{k}(s,a)$ and $\wt{N}_{k}(s,a)$. Hence, the bias would scale with $S\sqrt{kH}/N_{k}(s,a)$. Summing over all episodes and $SA$ counters gives the conjectured result.}
\todocout{This discussion needs polishing. I've had a go but it still needs more work. (I also kept refering to the lower bound as a conjecture, but is it just a conjecture?)}
%
This suggests that it is impossible for any model-based algorithm which directly estimates the transition probabilities to match the lower bound.
However, this does not rule out the possibility of a model-free algorithm being able to match the lower bound. Designing such a model-free algorithm which is able to work with LDP trajectories is non-trivial and we leave it to future work.


Another direction for future work is to investigate whether the recently developed shuffling model~\citep{erlingsson2020amplification} may be used to improve our regret bounds in the LDP setting.
Preliminary investigations of the shuffling model (see App.~\ref{app:shuffle_model_rl}) show that it is not possible while preserving a fixed $\varepsilon$-LDP constraint, which is the focus of this paper. Nonetheless, if we were to relax the privacy constraint to only guarantee $\varepsilon$-JDP then the shuffling model could be used to retrieve the regret bound in~\citep{vietri2020privaterl} while guaranteeing some level of local differential privacy, although the level of LDP would be much weaker than the one considered in this paper.
We believe the study of this model sitting in-between the joint and local DP settings for RL is a promising direction for future work and that the tools developed in this paper will be helpful for tackling this problem.

\section{Numerical Evaluation}\label{sec:experiments}
In this section, we evaluate the empirical performance of \algo on a toy MDP.
We compare \algo with the \textit{non-private} algorithm \ucbvi~\citep{azar2017minimax}.
To the best of our knowledge there is no other LDP algorithm for regret minimization in MDPs in the literature.
To increase the comparators, we introduce a novel LDP algorithm based on Thompson sampling~\citep[\eg][]{Osband2013more}.

\paragraph{LDP-PSRL.}
Thompson sampling algorithms~\citep[e.g., PSRL,][]{Osband2013more} have proved to be effective in several applications~\citep{RussoRKOW18}.
Due to their inherent randomization, one may imagine that they are also well suited to LDP regret minimization. Here, we introduce and evaluate \ppsrl, an LDP variant of PSRL and provide a first empirical evaluation.
Informally, by defining by $\mathcal{W}_k = \{ (S,A,p,r,H) : \|p - \wt{p}\|_1 \leq \beta_k^p, |r - \wt{r}| \leq \beta_k^r\}$ the \emph{private} set of plausible MDPs constructed using the definition in Prop.~\ref{prop:concentration_ldp}, we can see posterior sampling as drawing an MDP from this set at each episode $k$ and running the associated optimal policy:
\[
\emph{i)} ~M_k \sim \mathbb{P}(\mathcal{W}_k),~~~~ \emph{ii)}~ \pi_k = \max_{\pi} \{V^{\pi}_1(M_k)\}.
\]
More formally, we consider Gaussian and Dirichlet prior for rewards and transition which lead to Normal-Gamma and Dirichlet distributions as posteriors. We use the private counters defined in Asm.~\ref{assumption:concentration_privacy} to update the parameters of the posterior distribution and thus the distribution over plausible models.
We provide full details of this schema in App.~\ref{app:psrl_ldp} and show that it is LDP. However, we were not able to provide a regret bound for this algorithm.

\textbf{Simulations.} We consider the RandomMDP environment described in \citep{dann2017unifying} where for each state-action pair transition probabilities are sampled from a $\text{Dirichlet}(\alpha)$ distribution (with $\alpha_{s,a,s'} = 0.1$ for all $(s,a,s')$)  and rewards are deterministic in $\{0, 1\}$ with $r(s,a) = \mathds{1}_{\{U_{s,a} \leq 0.5\}}$ for $(U_{s,a})_{(s,a)\in \mathcal{S}\times\mathcal{A}} \sim \mathcal{U}([0,1])$
sampled once when generating the MDP. We set the number of states $S = 2$, number of actions $A=2$ and horizon $H=2$. We evaluate the regret of our algorithm for $\varepsilon\in\{0.2, 2, 20\}$ and $K = 1\times 10^{8}$ episodes. For each $\varepsilon$, we run $20$ simulations. Confidence intervals are the minimum and maximum runs.
Fig.~\ref{fig:exp.randommdp} shows that the learning speed of the optimistic algorithm \algo
is severely impacted by the LDP constraint.
This is consistent with our theoretical results.
The reason for this is the very large confidence intervals that are needed to deal with the noise from
the privacy preserving mechanism that is necessary to guarantee privacy.
While the regret looks almost linear for $\varepsilon=0.2$, the decreasing trend of the per-step regret shows that \algol is learning.
%
Although these experimental results only consider a small MDP, we expect that many of the observations will carry across to larger, more practical settings. However, further experiments are needed to conclusively assess the impact of LDP in large MDPs.
%
Fig.~\ref{fig:exp.randommdp} also shows that \ppsrl performs slightly better than \algo. This is to be expected, since even in the non-private case \psrl usually outperforms optimistic algorithm empirically. 
Finally, Fig.~\ref{fig:comp} compares the mechanisms with different privacy levels and illustrates the empirical impact of the privacy-preserving mechanism on the performance of \algo. We observe empirically that the bounded noise mechanism is the most effective approach, followed by the Laplace mechanism. However, the former suffers from a higher variance in its performance. 

    \begin{figure}[t]
      \begin{minipage}[t]{.42\linewidth}
        \hspace{-.3in}
      \includegraphics[width=1.15\textwidth]{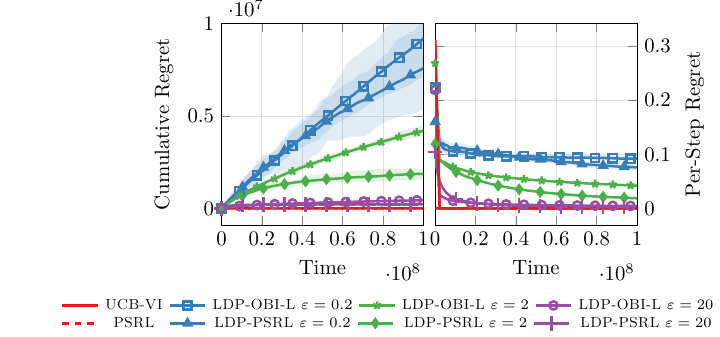}
      \caption{Evaluation of \algo{} with the Laplace mechanism and \ppsrl. 
      \emph{Left)} Cumulative regret. \emph{Right)} per-step regret ($k \mapsto R_{k}/k$).
      }
      \label{fig:exp.randommdp}
      \end{minipage}
      \hfill
      \begin{minipage}[t]{.55\linewidth}
      \hspace{-.3in}\includegraphics[width=1.1\textwidth]{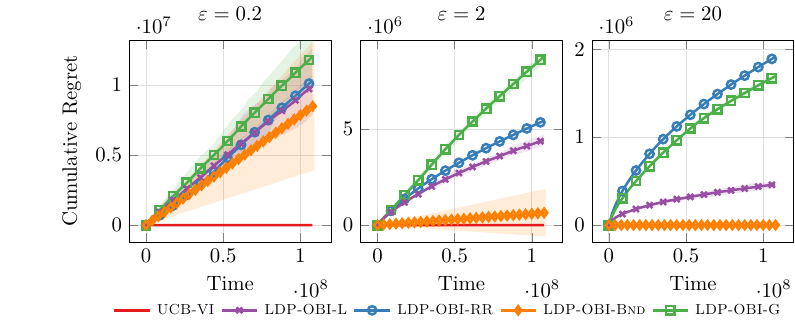}
      \caption{Regret for \algo coupled with different mechanisms.
      For all $\varepsilon$, $\delta = 0.1$ for the Gaussian and Bounded Noise mechanism.}
      \label{fig:comp}
      \end{minipage}
    \end{figure}

\vspace{-0.2cm}
\section{Conclusion} \label{sec:conclusions}
\vspace{-0.2cm}
We have introduced the definition of local differential privacy in RL and designed the first LDP algorithm, \algo, for regret minimization in finite-horizon MDPs.
We provided an intuition why model-based approaches may suffer a higher dependence in the MDP characteristics. Designing a model-free algorithm able to reduce or close the gap with the lower-bound is an interesting technical question for future works.
%
As mentioned in the paper, the shuffling privacy model does not provide any privacy/regret improvement in the strong LDP setting. An interesting direction is to investigate the trade-off between JDP and LDP that can be obtained in RL using shuffling. In particular, we believe that, sacrificing LDP guarantees, it is possible to achieve better regret leveraging variance reduction techniques (that are not helpful in strong LDP settings). Finally, there are other privacy definition that can be interesting for RL. For example, profile-based privacy~\citep{GeumlekC19,AcharyaBKRS20} allows to privatize only specific information or geo-privacy~\citep{AndresBCP13} focuses on privacy between elements that are ``similar''.

\begin{ack}
V. Perchet acknowledges support from the French National Research Agency
(ANR) under grant number \#ANR-$19$-CE$23$-$0026$ as well as the support grant, as well as from
the grant “Investissements d’Avenir” (LabEx Ecodec/ANR-$11$-LABX-$0047$).
\end{ack}
\bibliography{citations}
\bibliographystyle{unsrtnat}

\newpage{}
\appendix
\part{Appendix}
\parttoc

\section{Extended Related Work}

The notion of differential privacy was introduced in~\citep{dwork2006calibrating} and is now a standard in machine learning~\citep[\eg][]{erlingsson2014rappor, dwork2014algorithmic, abowd2018us}.
In stochastic multi-armed bandits, $\epsilon$-DP algorithms have been extensively studied~\citep[see \eg][]{mishra2015Nearly,tossou2016dpmab}. Recently,~\citep{sajed2019optimal} proposed an $\epsilon$-DP algorithm for stochastic multi-armed bandits that achieves the private lower-bound presented in~\citep{shariff2018differentially}. 
In contextual bandits,~\citep{shariff2018differentially} derived an impossibility result for learning under DP by showing a regret lower-bound $\Omega(T)$ for any $(\epsilon,\delta)$-DP algorithm. Instead, they considered the relaxed JDP setting and proposed an optimistic algorithm with sublinear regret and $\epsilon$-JDP guarantees.
Since the contextual bandit problem is an episodic RL problem with horizon $H=1$, this suggests that DP is incompatible with regret minimization in RL as well.

Recently, \emph{local differential privacy}~\citep{duchi2013local} has attracted increasing interest in the bandit literature.
\citep{gajane2018corrupt} were the first to study LDP in stochastic MABs. 
\citep{chen2020locallycombinatorial} extended LDP to combinatorial bandits, and \citep{zheng2020locally, ren2020multi} focused on LDP for MAB 
and contextual bandit. 
Private algorithms for regret minimization have also been investigated in multi-agent bandits (a.k.a. federated learning) in centralized and decentralized settings~\citep[\eg][]{tossou2015privatemultiagent,dubey2020federated, dubey2020private},
and empirical approaches have been considered in~\citep{hannun2019privacy, malekzadeh2020privacy}.
%

In RL,~\citep{balle2016differentially} proposed the first private algorithm for policy evaluation with linear function approximation that ensures privacy with respect to the change of trajectories collected off-policy. \citep{wang2019privacy} considered the RL problem in continuous space, where reward information is protected. They designed a private version of Q-learning with function approximation where privacy with respect to different reward functions is achieved by injecting noise in the value function. 
\citep{ono2020locally} recently studied LDP for actor-critic methods in the context of distributed RL.
None of these works considered regret minimization under privacy constraints.
Regret minimization with privacy guarantees has only been considered in RL recently.
\citep{vietri2020privaterl} designed a private optimistic algorithm for regret minimization with JDP. They proposed a variation of UBEV~\citep{dann2017unifying} 
 using a randomized response mechanism with parameter $\epsilon/H$ to guarantee privacy. 
 Their algorithm \textsc{PUCB} achieves a regret bound $\wt{O}(\sqrt{H^4SAK} + SAH^{3}(S + H)/\varepsilon)$ while enjoying $\varepsilon$-JDP. Compared to the worst case regret of 
UBEV, 
the penalty for JDP privacy is only additive, as 
shown by their lower-bound of $\wt{\Omega}\big(H\sqrt{SAK} + SAH/\varepsilon\big)$.
 \section{Regret Lower Bound (Proof of Thm.~\ref{thm:regret_lower_bound})}\label{app:proof_lower_bound}

Let's consider the following MDP for a given number of states $S$ and actions $A$. The initial state $0$ has $A$ actions which deterministically lead the next state. The MDP is a tree with $A$ children for each node and exactly $S-2$ states.

We denote by $x_{1}, \dots, x_{L}$ the leaves of this tree.
Each leaf can transition to one of the two terminal states denoted by $+$ and $-$, where the agent will receive reward of 1 or 0 respectively, and the agent will stay there until the end of the episode.
There exists a unique action $a^{\star}$ and leaf $x_{i^{\star}}$ such that: $\mathbb{P}(+ \mid x_{i^\star}, a^{\star}) = 1/2 + \Delta$ for a chosen $\Delta$. Each other leaf transitions with equal probability to two states $+$ and $-$ where each has a reward of $1$ and $0$.
All other states have a reward of 0 and every other transition is deterministic.

\begin{figure}[h]
\centering
\includegraphics[width=0.8\linewidth]{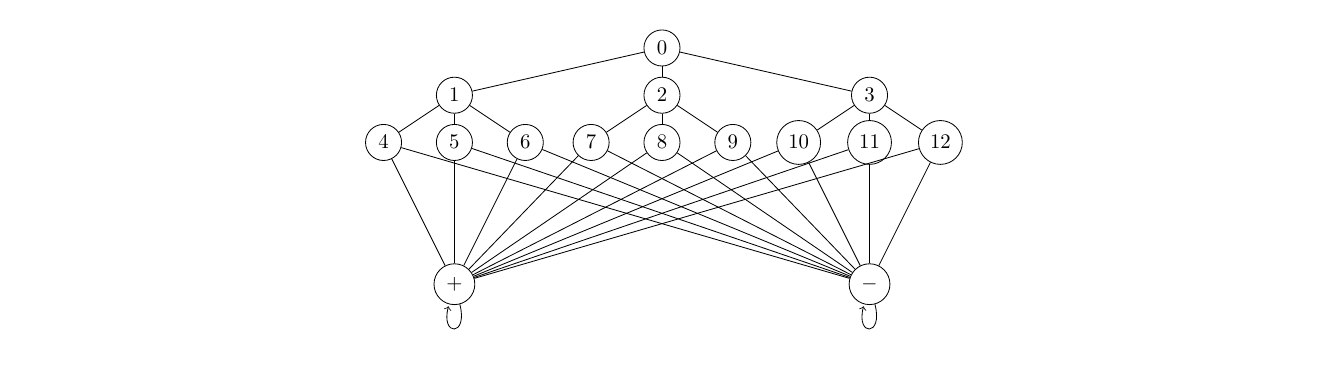}
\caption{{Example of an MDP described in this section with $S = 15$ and $A = 3$}}
\end{figure}

Once the agent arrives at $+$ or $-$, it stay there until the end of the episode.  In addition, we assume that $H \geq 2\ln(S-2)/\ln(A) + 2$.
Let $d>0$ be the depth of the tree, \ie the depth of the tree with $S-2$ nodes is $d-1$ and nodes $+$,$-$ are at depth $d$. Then leaves $x_{1}, \hdots, x_{L}$ are at depth either $d-1$ or $d-2$. Without loss of generality we assume that all $x_{1}, \hdots, x_{L}$ are at depth $d-1$, \ie the number of leaves is $L = A^{d-1} \geq (S-2)/2$, stated otherwise, the tree without the nodes $+$ and $-$ is a perfect $A$-ary tree. In the general case we have that $L \geq (S-2)/2$.

For a policy $\pi$, the value function can be written:
\begin{align}
V^{\pi}(0) = (H-d)\mathbb{P}(s_{d} = +) = (H-d)(1/2 + \Delta\mathbb{P}\left(s_{d-1} = x_{i^{\star}}, a_{d-1} = a^{\star} \right))
\end{align}
Thus the regret can be written as:
\begin{align}
R(K, I) = (H-d)\Delta \Big( K - \underbrace{\sum_{k=1}^{K} \mathbb{P}\left( s_{k,d-1} = x_{i^{\star}}, a_{k,d-1} = a^{\star}\right)}_{:= \mathbb{E}(T(K,I))} \Big)
\end{align}
where $I = (x_{i^{\star}}, a^{\star})$ \textcolor{purple}{is the optimal state action pair} and we define $T(K,I)$ as:
\begin{align}
T(K, I) = \sum_{k=1}^{K} \mathds{1}_{\{s_{k, d-1} = x_{i^{\star}}, a_{k, d-1} = a^{\star}\}}.
\end{align}
$T(K,I)$ is a function of the history observed by the algorithm. Since we consider the LDP setting,   this history can be written as:
\begin{align}
\mathcal{M}(\mathcal{H}_{K}) = \{\mathcal{M}(X_{l}) \mid l\leq K\}
\end{align}
where $X_{l} = \{ (s_{l,h}, a_{l,h}, r_{l,h}) \mid h\leq H\}$ is the trajectory observed by the user for episode $l$ and $\mathcal{M}$ is a privacy mechanism which maintains $\varepsilon$-LDP. Thus $T(K,I)$ is a function of $\mathcal{M}(\mathcal{H}_{K})$. By Lem. A.1 in~\citep{auer2002nonstochastic}:
\begin{align}\label{eq:KL_privacy}
\mathbb{E}(T(K,I)) \leq \mathbb{E}_{0}(T(K,I)) + K\sqrt{\text{KL}\Big(\mathbb{P}_{0}(\mathcal{M}(\mathcal{H}_{K})) \mid \mid \mathbb{P}(\mathcal{M}(\mathcal{H}_{K})) \Big)}
\end{align}
where $\mathbb{E}_{0}$ is the expectation when $\Delta = 0$.
However, because $T(K,I)$ can be seen as a function on the history only, we can use Exercise $14.4$ in~\citep{lattimore2020bandit} which states that for any random variable $Y:\Omega \to [a,b]$ with $(\Omega, \mathcal{F})$ a measurable space, $a<b$ and two distributions $P$ and $Q$ on $\mathcal{F}$, then:
\begin{align}
\left| \int_{w\in \Omega} Y(w)dP(w) - \int_{w\in \Omega} Y(w)dQ(w)\right| \leq (b-a)\sqrt{\frac{\text{KL}(P||Q)}{2}}
\end{align}
In our case the random variable $Y$ is the combination of $T(K,I)$ and the privacy mechanism $\mathcal{M}$ so we have:
\begin{align}\label{eq:KL_no_privacy}
\mathbb{E}(T(K,I)) \leq \mathbb{E}_{0}(T(K,I)) + K\sqrt{\text{KL} \Big( \mathbb{P}_{0}(\mathcal{H}_{K}) \mid \mid \mathbb{P}(\mathcal{H}_{K}) \Big)}
\end{align}
Putting together Eq.~\eqref{eq:KL_privacy} and~\eqref{eq:KL_no_privacy}, we get:
\begin{equation}\label{eq:ET_decomposition}
    \begin{aligned}
    \mathbb{E}(T(K,I)) \leq \mathbb{E}_{0}(T(K,I)) + K \min \Bigg\{
        \underbrace{
            \sqrt{\text{KL}\Big(\mathbb{P}_{0}(\mathcal{M}(\mathcal{H}_{K})) \mid \mid \mathbb{P}(\mathcal{M}(\mathcal{H}_{K})) \Big)}
        }_{\textcircled{1}},& \\
        \underbrace{
            \sqrt{\text{KL} \Big( \mathbb{P}_{0}(\mathcal{H}_{K}) \mid \mid \mathbb{P}(\mathcal{H}_{K}) \Big)}
        }_{\textcircled{2}}&
    \Bigg\}
    \end{aligned}
\end{equation}

\paragraph{Bounding \textcircled{1}.}
Now we bound the KL-divergence between the two measures for the history. Using the chain rule we have:
{\small
\begin{align}
\text{KL}\left(\mathbb{P}_{0}(\mathcal{M}(\mathcal{H}_{K})) \mid \mid \mathbb{P}(\mathcal{M}(\mathcal{H}_{K})) \right) = \sum_{k=1}^{K} \mathbb{E}_{\mathcal{H}_{k-1}\sim \mathbb{P}_{0}}\left( \text{KL}\left(\mathbb{P}_{0}(\cdot | \mathcal{M}(\mathcal{H}_{k-1}))\mid\mid\mathbb{P}(\cdot | \mathcal{M}(\mathcal{H}_{k-1}))\right)\right)
\end{align}}
But because $\mathcal{M}$ is an $\varepsilon$-LDP mechanism, Thm.~$1$ in~\citep{duchi2013local} ensures that:
\begin{align}
\text{KL}\left(\mathbb{P}_{0}(\cdot | \mathcal{M}(\mathcal{H}_{k-1}))\mid\mid\mathbb{P}(\cdot | \mathcal{M}(\mathcal{H}_{k-1}))\right) \leq 4(\exp(\varepsilon) - 1)^{2}\text{KL}\left(\mathbb{P}_{0}(\cdot | \mathcal{H}_{k-1})\mid\mid\mathbb{P}(\cdot | \mathcal{H}_{k-1})\right) \label{eq:klepsM}
\end{align}
Additionally, the KL-divergence can be written as:
{\small\begin{align}
\text{KL}\left(\mathbb{P}_{0}(\cdot | \mathcal{H}_{k-1})\mid\mid\mathbb{P}(\cdot | \mathcal{H}_{k-1})\right)  &= \sum_{h=1}^{H} \mathbb{E}_{X_{k}\sim \mathbb{P}_{0}}\left( \ln\left(\frac{\mathbb{P}_{0}(s_{k,h},a_{k,h},r_{k,h}\mid \mathcal{H}_{k-1}, (s_{k,j},a_{k,j},r_{k,j})_{j\leq h-1})}{\mathbb{P}(s_{k,h},a_{k,h},r_{k,h} \mid \mathcal{H}_{k-1},(s_{k,j},a_{k,j},r_{k,j})_{j\leq h-1})}\right)\right)
\end{align}}
where $X_{k} = \{ (s_{k,h}, a_{k,h}, r_{k,h})\mid h\leq H\}$ is a trajectory sampled from the MDP with the transitions distributed according to $\mathbb{P}_{0}$ and for each step $h$, $s_{k,h}$ is a state, $a_{k,h}$ an action and $r_{k,h}$ the reward associated with $(s_{k,h}, a_{k,h})$.

Therefore for a step $h \geq 1$,
{\small\begin{align}
\ln\left(\mathbb{P}_{0}(s_{k,h},a_{k,h},r_{k,h} \mid \mathcal{H}_{k-1}, (s_{k,j},a_{k,j},r_{k,j})_{j\leq h-1})\right)= \ln\left(\mathbb{P}_{0}(s_{k,h} \mid \mathcal{H}_{k-1}, (s_{k,j},a_{k,j},r_{k,j})_{j\leq h-1})\right)&\nonumber\\
+ \ln\left(\mathbb{P}_{0}(a_{k,h} \mid \mathcal{H}_{k-1}, (s_{k,j},a_{k,j},r_{k,j})_{j\leq h-1}, s_{k,h})\right)& \nonumber\\
+ \ln\left(\mathbb{P}_{0}(r_{k,h} \mid \mathcal{H}_{k-1}, (s_{k,j},a_{k,j},r_{k,j})_{j\leq h-1}, s_{k,h}, a_{k,h})\right)& \nonumber
\end{align}}
\todocout{Is $\ln$ here the same as $\ln$? shall we change one so notations are consistent?}
\todoeout{yes done}
By the Markov property of the environment:
\begin{align}
\ln\left(\mathbb{P}_{0}(s_{k,h} \mid \mathcal{H}_{k-1}, (s_{k,j},a_{k,j},r_{k,j})_{j\leq h-1})\right) = \ln\left(\mathbb{P}_{0}(s_{k,h} \mid s_{k,h-1},a_{k,h-1})\right)
\end{align}
Also, since the reward only depends on the current state-action pair:
\begin{align}
\ln\left(\mathbb{P}_{0}(r_{k,h} \mid \mathcal{H}_{k-1}, (s_{k,j},a_{k,j},r_{k,j})_{j\leq h-1}, s_{k,h}, a_{k,h})\right)= \ln\left(\mathbb{P}_{0}(r_{k,h} \mid s_{k,h}, a_{k,h})\right).
\end{align}
The same results holds for $\mathbb{P}$, thus:
\begin{equation}
    \begin{aligned}
        \text{KL}&\left(\mathbb{P}_{0}(\cdot | \mathcal{H}_{k-1})\mid\mid\mathbb{P}(\cdot | \mathcal{H}_{k-1})\right)
        = \sum_{h=1}^{H} \mathbb{E}_{X_{k}\sim \mathbb{P}_{0}}\Bigg( \ln\left(\frac{\mathbb{P}_{0}(s_{k,h}\mid s_{k,h-1}, a_{k,h-1})}{\mathbb{P}_{}(s_{k,h}\mid s_{k,h-1}, a_{k,h-1})}\right) \\
        &+ \ln\left(\frac{\mathbb{P}_{0}(a_{k,h}\mid \mathcal{H}_{k-1}, (s_{k,j},a_{k,j},r_{k,j})_{j\leq h-1}, s_{k,h})}{\mathbb{P}(a_{k,h}\mid \mathcal{H}_{k-1}, (s_{k,j},a_{k,j},r_{k,j})_{j\leq h-1}, s_{k,h})}\right) + \ln\left(\frac{\mathbb{P}_{0}(r_{k,h}\mid s_{k,h}, a_{k,h})}{\mathbb{P}(r_{k,h}\mid s_{k,h}, a_{k,h})}\right)\Bigg)
    \end{aligned}
\end{equation}
But for $\mathbb{P}$ and $\mathbb{P}_{0}$ the rewards are distributed accordingly to the same distribution hence $\ln\left(\frac{\mathbb{P}_{0}(r_{k,h}\mid s_{k,h}, a_{k,h})}{\mathbb{P}(r_{k,h}\mid s_{k,h}, a_{k,h})}\right)= 0$ for each $h\leq H$. Also, the action taken at each step depends only the history of data and the current state, thus
 $\ln\left(\frac{\mathbb{P}_{0}(a_{k,h}\mid \mathcal{H}_{k-1}, (s_{k,j},a_{k,j},r_{k,j})_{j\leq h-1})}{\mathbb{P}(a_{k,h}\mid \mathcal{H}_{k-1}, (s_{k,j},a_{k,j},r_{k,j})_{j\leq h-1})}\right) = 0$.
 Lastly, transition dynamics between $\mathbb{P}$ and $\mathbb{P}_{0}$ only differ when at step $d-1$ thus for all $h\neq d-1$ \todocout{$d$ or $d-1$?}\todoeout{$d-1$, chagned it}, $\ln\left(\frac{\mathbb{P}_{0}(s_{k,h}\mid s_{k,h-1}, a_{k,h-1})}{\mathbb{P}_{0}(s_{k,h}\mid s_{k,h-1}, a_{k,h-1})}\right) =0$.
  Overall, we get:
{\small\begin{align*}
\text{KL}&\left(\mathbb{P}_{0}(\cdot | \mathcal{H}_{k-1})\mid\mid\mathbb{P}(\cdot | \mathcal{H}_{k-1})\right)= \sum_{l=1}^{L}\sum_{a=1}^{A}\sum_{j\in \{-, +\}}\mathbb{E}_{X_{k}\sim \mathbb{P}_{0}}
\Bigg( \ln\left(\frac{\mathbb{P}_{0}(j\mid x_{l}, a)}{\mathbb{P}(j\mid x_{l}, a)}\right)\mathds{1}_{\Big\{\begin{subarray}{l}s_{k,d-1} = x_{l}, \\a_{k,d-1} = a,\\ s_{k,d} = j\end{subarray}\Big\}} \Bigg)
\end{align*}}

Finally, for $j\in \{-,+\}$, $x_{l} \neq x_{i^{\star}}$ and $a \neq a^{\star}$, $\mathbb{P}(j\mid x_{l}, a) = \mathbb{P}_{0}(j\mid x_{l}, a)$. Hence,
\begin{align}\label{eq:final_bound_kl_per_epsiode}
\text{KL}&\left(\mathbb{P}_{0}(\cdot | \mathcal{H}_{k-1})\mid\mid\mathbb{P}(\cdot | \mathcal{H}_{k-1})\right)= \frac{1}{2}\ln\left(\frac{1}{1-4\Delta^{2}}\right)\mathbb{E}_{X_{k}\sim\mathbb{P}_{0}}\left( \mathds{1}_{\{s_{k,d-1} = x_{i^{\star}}, a_{k,d-1} = a^{\star}\}}\right)
\end{align}
where we have used $\mathbb{P}(+\mid x_{i^{\star}}, a^{\star}) = \frac{1}{2} + \Delta$, $\mathbb{P}_{0}(+\mid x_{i^{\star}}, a^{\star}) = \frac{1}{2}$, $\mathbb{P}(-\mid x_{i^{\star}}, a^{\star}) = \frac{1}{2} - \Delta$ and $\mathbb{P}_{0}(-\mid x_{i^{\star}}, a^{\star}) = \frac{1}{2}$.

Therefore combining~\eqref{eq:klepsM}~and~\eqref{eq:final_bound_kl_per_epsiode} and summing over the episodes, we get:
{\small
\begin{equation}\label{eq:bound_private_KL}
\begin{aligned}
\text{KL}\Big(\mathbb{P}_{0}(\mathcal{M}(\mathcal{H}_{K})) \mid \mid \mathbb{P}(\mathcal{M}(\mathcal{H}_{K})) \Big) &\leq 2(e^{\varepsilon} - 1)^{2}\ln\left(\frac{1}{1 - 4\Delta^{2}}\right)\sum_{k=1}^{K} \mathbb{P}_{0}\left( s_{k,d-1} = x_{i^{\star}}, a_{k,d-1} = a^{\star} \right)\\
&=2(e^{\varepsilon} - 1)^{2}\ln\left(\frac{1}{1 - 4\Delta^{2}}\right)\mathbb{E}_{0}(T(K,I))
\end{aligned}
\end{equation}}

\paragraph{Bounding \textcircled{2}.}
Using again the chain rule of the KL-divergence, we have that:
\begin{align}
\text{KL}\left(\mathbb{P}_{0}(\mathcal{H}_{K}) \mid \mid \mathbb{P}(\mathcal{H}_{K}) \right) = \sum_{k=1}^{K} \mathbb{E}_{\mathcal{H}_{k-1}\sim \mathbb{P}_{0}}\left( \text{KL}\left(\mathbb{P}_{0}(\cdot | \mathcal{H}_{k-1})\mid\mid\mathbb{P}(\cdot |\mathcal{H}_{k-1})\right)\right)
\end{align}
Therefore, using Eq.~\eqref{eq:final_bound_kl_per_epsiode}, we have:
{\small \begin{equation}\label{eq:bound_non_private_KL}
  \begin{aligned}
  \text{KL}\left(\mathbb{P}_{0}(\mathcal{H}_{K}) \mid \mid \mathbb{P}(\mathcal{H}_{K}) \right)&= \sum_{k=1}^{K} \mathbb{E}_{\mathcal{H}_{k-1}\sim \mathbb{P}_{0}}\left(\frac{1}{2}\ln\left(\frac{1}{1-4\Delta^{2}}\right)\mathbb{E}_{X_{k}\sim\mathbb{P}_{0}}\left(\mathds{1}_{\Big\{\begin{subarray}{l}s_{k,d-1} = x_{i^{\star}},\\ a_{k,d-1} = a^{\star}\end{subarray}\Big\}}  \right)\right) \\
  &= \frac{1}{2}\ln\left(\frac{1}{1-4\Delta^{2}}\right)\mathbb{E}_{0}(T(K,I))
  \end{aligned}
\end{equation}}
\paragraph{Finishing the proof.}
Hence using Eq.~\eqref{eq:bound_private_KL} and Eq.~\eqref{eq:bound_non_private_KL} in Eq.~\eqref{eq:ET_decomposition}:
\begin{align}
\mathbb{E}(T(K,I)) \leq \mathbb{E}_{0}(T(K,I)) + K\min\left\{\sqrt{2}(e^{\varepsilon} - 1), \frac{1}{\sqrt{2}}\right\}\sqrt{\mathbb{E}_{0}(T(K,I))\ln\left(\frac{1}{1-4\Delta^{2}}\right)}
\end{align}
Now, let's assume that $I= (x_{i^{\star}}, a^{\star})$ is distributed uniformly over $\{x_{1}, \dots, x_{L}\}\times [A]$. That is to say, that the leaf $i^{\star}\sim \mathcal{U}( [L])$ and given the realization of $i^{\star}$, $a^{\star}$ is drawn uniformly in the action set of node $x_{i^{\star}}$ \ie $a^{\star} \sim\mathcal{U}( [A])$.
We denote the expectation over the random variable $(x_{i^{\star}}, a^{\star})$ by $\mathbb{E}_{I}$. It then holds that:
\begin{align}\label{eq:uniform_I}
\mathbb{E}_{I}\mathbb{E}_{0}(T(K,I)) = \mathbb{E}_{0}\sum_{k=1}^{K} \sum_{l=1}^{L} \sum_{a = 1}^{A} \frac{1}{LA}\mathds{1}_{\{s_{k,d-1} = s, a_{k,d-1} = a\}} = \frac{K}{LA}
\end{align}
Therefore thanks to Jensen's inequality the regret is lower-bounded by:
{\small\begin{align}
\mathbb{E}_{I} R(K, I) \geq (H-d)\Delta K \left( 1 - \frac{1}{LA} - \min\left\{\sqrt{2}(e^{\varepsilon} - 1), \frac{1}{\sqrt{2}}\right\}\sqrt{\frac{K}{LA}\ln\left(1 + \frac{4\Delta^{2}}{1 - 4\Delta^{2}}\right)}\right)
\end{align}}
Therefore for $LA\geq 2$, $K\geq \frac{LA}{\min\left\{8(e^{\varepsilon} - 1), 4\right\}^{2}}$ and choosing $\Delta = \sqrt{\frac{LA}{K}}\times \frac{1}{16\sqrt{2}\min\left\{(e^{\varepsilon} - 1), \frac{1}{2}\right\}}$ we get that:
\begin{align*}
\min\left\{\sqrt{2}(\exp(\varepsilon) - 1), \frac{1}{\sqrt{2}}\right\}\sqrt{\frac{K}{LA}\ln\left(1 + \frac{4\Delta^{2}}{1 - 4\Delta^{2}}\right)} \leq \frac{1}{4}
\end{align*}
Hence:
\begin{align}
\max_{I \in \{x_{1}, \dots, x_{L}\}\times [A]} R(K, I) \geq \mathbb{E}_{I} R(K, I) \geq \frac{(H-d)\sqrt{KLA}}{64\min\left\{(\exp(\varepsilon) - 1), \frac{1}{2}\right\}}
\end{align}
And because $I$ is a finite random variable there exist $I^{\star}$ such that $\max_{I \in \{x_{1}, \dots, x_{L}\}\times [A]} R(K, I) = R(K, I^{\star})$.
\begin{align}
R(K, I^{\star}) \geq \frac{(H-d)\sqrt{KLA}}{64\min\left\{(\exp(\varepsilon) - 1), \frac{1}{2}\right\}}
\end{align}
Thus we have that there exists an MDP such that its frequentist regret is $\Omega\left(\frac{H\sqrt{SAK}}{\min\{1, \exp(\varepsilon)-1\}}\right)$.

%

\section{Concentration under Local Differential Privacy (Proof of Prop.~\ref{prop:concentration_ldp}):}\label{app:proof_concentration}

In this subsection, we proceed with the proof of Prop.~\ref{prop:concentration_ldp} (recalled below).

\begin{proposition*}
  For any $\varepsilon_{0}>0$, $\delta_{0}\geq0$, $\delta >0$, $\alpha > 1$ and episode $k$, using mechanism $\mathcal{M}$ satisfying Asm.~\ref{assumption:concentration_privacy}, then with probability at least $1-2\delta$, for any $(s,a)\in\mathcal{S}\times\mathcal{A}$
  {\small\begin{align*}
    \left|r(s,a) - \wt{r}_k(s,a)\right| \leq \beta_k^r(s,a) = \sqrt{\frac{2\ln\left(\frac{4\pi^{2}SAHk^{3}}{3\delta}\right)}{\wt{N}_{k}^{r}(s,a) + \alpha c_{k,2}(\varepsilon_{0},\delta_{0},\delta)}} +\frac{(\alpha + 1)c_{k,2}(\varepsilon_{0}, \delta_{0}, \delta) + c_{k,1}(\varepsilon_{0}, \delta_{0},\delta)}{\wt{N}_{k}^{r}(s,a) + \alpha c_{k,2}(\varepsilon_{0}, \delta_{0}, \delta)}\\
    \|p(\cdot| s,a) - \wt{p}_{k}(\cdot| s,a)\|_{1}\leq \beta_k^p(s,a) = 
    \sqrt{\frac{14S\ln\left(\frac{4\pi^{2}SAHk^{3}}{3\delta}\right)}{\wt{N}_{k}^{p}(s,a) + \alpha c_{k,3}(\varepsilon_{0},\delta_{0}, \delta)}}+ \frac{Sc_{k,4}(\varepsilon_{0},\delta_{0}, \delta)}{\wt{N}_{k}^{p}(s,a) + \alpha c_{k,3}(\varepsilon_{0},\delta_{0}, \delta)} +
    \\\frac{(\alpha + 1)c_{k,3}(\varepsilon_{0},\delta_{0}, \delta)}{\wt{N}_{k}^{p}(s,a) + \alpha c_{k,3}(\varepsilon_{0},\delta_{0}, \delta)}
  \end{align*}}
\end{proposition*}

\begin{proof}
On the event that all inequalities of Def.~\ref{assumption:concentration_privacy} holds, we have:
\begin{align}\label{eq:deviation_reward}
\left| \frac{\wt{R}_k(s,a)}{\wt{N}_{k}^{r}(s,a) + \alpha c_{k,2}(\varepsilon_{0}, \delta_{0}, \delta)}  - \frac{R_{k}(s,a)}{\wt{N}_{k}^{r}(s,a) + \alpha c_{k,2}(\varepsilon_{0}, \delta_{0}, \delta)}\right| \leq \frac{c_{k,1}(\varepsilon_{0}, \delta_{0}, \delta)}{\wt{N}_{k}^{r}(s,a) + \alpha c_{k,2}(\varepsilon_{0}, \delta_{0}, \delta)}
\end{align}
since  $\wt{N}_{k}^{r}(s,a) + \alpha c_{k,2}(\varepsilon_{0}, \delta_{0}, \delta) > N_k^k(s,a) \geq 0$ with $\alpha > 1$.
But, we also have that with probability $1-\delta$:
{\small\begin{align}
\Bigg| &\frac{R_{k}(s,a)}{\wt{N}_{k}^{r}(s,a) + \alpha c_{k,2}(\varepsilon_{0}, \delta_{0}, \delta)} - r(s,a) \Bigg| \leq \Bigg|r(s,a)\left(\frac{N_{k}^{r}(s,a)}{\wt{N}_{k}^{r}(s,a) + \alpha c_{k,2}(\varepsilon_{0}, \delta_{0}, \delta)} - 1\right)\Bigg|\\
&\hspace{5cm}+\Bigg|\frac{N_{k}^{r}(s,a)}{\wt{N}_{k}^{r}(s,a) + \alpha c_{k,2}(\varepsilon_{0}, \delta_{0}, \delta)} \times
\underbrace{
\left(\frac{R_{k}(s,a)}{N_{k}^{r}(s,a)} - r(s,a)\right)
}_{:= \wb{r}_k(s,a) - r(s,a)}
\Bigg| \nonumber\\
&\leq \frac{N_{k}^{r}(s,a)}{\wt{N}_{k}^{r}(s,a) + \alpha c_{k,2}(\varepsilon_{0}, \delta_{0}, \delta)}\frac{L(\delta)}{\sqrt{N_{k}^{r}(s,a)}} + r(s,a)\left|  1 - \frac{N_{k}^{r}(s,a)}{\wt{N}_{k}^{r}(s,a) + \alpha c_{k,2}(\varepsilon_{0}, \delta_{0}, \delta)} \right|  \\
&\leq \frac{L(\delta)\sqrt{N_{k}^{r}(s,a)}}{\wt{N}_{k}^{r}(s,a) + \alpha c_{k,2}(\varepsilon_{0}, \delta_{0}, \delta)} + \frac{(\alpha + 1)c_{k,2}(\varepsilon_{0}, \delta_{0}, \delta)}{\wt{N}_{k}^{r}(s,a) + \alpha c_{k,2}(\varepsilon_{0}, \delta_{0}, \delta)}
\label{eq:concentration_reward_biased}
\end{align}}
where the second inequality follows from Chernoff-Hoeffding bound on the empirical non-private rewards with $L(\delta) = \sqrt{2\ln(4\pi^{2}SAHk^{3}/3\delta)}$, and we use Def.~\ref{assumption:concentration_privacy} for the last.
Furthermore:
{\small\begin{equation}\label{eq:bound_nb_visits_reward}
\frac{L(\delta)\sqrt{N_{k}^{r}(s,a)}}{\wt{N}_{k}^{r}(s,a) + \alpha c_{k,2}(\varepsilon_{0}, \delta_{0}, \delta)} \leq \frac{L(\delta)\sqrt{\wt{N}_{k}^{r}(s,a) + c_{k,2}(\varepsilon_{0}, \delta_{0}, \delta)}}{\wt{N}_{k}^{r}(s,a) + \alpha c_{k,2}(\varepsilon_{0}, \delta_{0}, \delta)} \leq \frac{L(\delta)}{\sqrt{\wt{N}_{k}^{r}(s,a) + \alpha c_{k,2}(\varepsilon_{0}, \delta_{0}, \delta)}}
\end{equation}}
Therefore combining Eq.~\eqref{eq:deviation_reward}, \eqref{eq:concentration_reward_biased} and \eqref{eq:bound_nb_visits_reward}, we have:
{\small\begin{align*}
\left| \frac{\wt{R}_k(s,a)}{\wt{N}_{k}^{r}(s,a) + \alpha c_{k,2}(\varepsilon_{0}, \delta_{0}, \delta)}  - r(s,a)\right|\leq \frac{c_{k,1}(\varepsilon_{0}, \delta_{0}, \delta) + (\alpha + 1)c_{k,2}(\varepsilon_{0}, \delta_{0}, \delta)}{\wt{N}_{k}^{r}(s,a) + \alpha c_{k,2}(\varepsilon_{0}, \delta_{0}, \delta)} &\\
+ \frac{L(\delta)}{\sqrt{\wt{N}_{k}^{r}(s,a) + \alpha c_{k,2}(\varepsilon_{0}, \delta_{0}, \delta)}}&
\end{align*}}
thus proving the first statement of the proposition.
Now, we bound the deviation between the private estimate $\wt{p}_{k}$ and the true transition dynamics $p$.
First, because $\alpha>1$, we have that $\sum_{s'} \wt{N}_{k}^{p}(s,a,s') + \alpha c_{k,3}(\varepsilon_{0}, \delta_{0},\delta) \geq \sum_{s'}N_{k}^{p}(s,a,s') + (\alpha - 1)c_{k,3}(\varepsilon_{0}, \delta_{0},\delta) > 0$.
%
We start by decomposing the error as
{\small\begin{equation}
  \begin{aligned}
  &\sum_{s'\in \mathcal{S}} \left| \wt{p}(s'|s,a) - p(s'| s,a) \right| =
  \sum_{s'\in \mathcal{S}}\left| \frac{\wt{N}_{k}^{p}(s,a,s')}{\sum_{s'}\wt{N}_{k}^{p}(s,a,s') + \alpha c_{k,3}(\varepsilon_{0}, \delta_{0}, \delta)}  - p(s'| s,a) \right|\\
  &\leq \underbrace{
    \sum_{s'\in \mathcal{S}}\left|\frac{N_{k}^{p}(s,a,s')}{\sum_{s'}\wt{N}_{k}^{p}(s,a,s') + \alpha c_{k,3}(\varepsilon_{0}, \delta_{0}, \delta)}  - p(s'\mid s,a) \right|
  }_{\textcircled{1}} +
  \underbrace{
    \sum_{s'\in \mathcal{S}} \left| \frac{\wt{N}_{k}^{p}(s,a,s') - N_{k}^{p}(s,a,s')}{\sum_{s'}\wt{N}_{k}^{p}(s,a,s') + \alpha c_{k,3}(\varepsilon_{0}, \delta_{0}, \delta)}\right|
  }_{\textcircled{2}}
\end{aligned}
\end{equation}}
Recall that $\sum_{s'}\wt{N}_{k}^{p}(s,a,s') = \wt{N}_k^p(s,a)$ and $\sum_{s'}{N}_{k}^{p}(s,a,s') = {N}_k^p(s,a)$ and define $\wb{p}_{k}(\cdot| s,a)=\frac{N_k^p(s,a,\cdot)}{N_k^p(s,a)}$.
Therefore:
{\small
\begin{equation}\label{eq:concentration_transisition}
  \begin{aligned}
\textcircled{1} &= \sum_{s'\in \mathcal{S}} \left| \frac{N_{k}^{p}(s,a,s')}{N_{k}^{p}(s,a)}\frac{N_{k}^{p}(s,a)}{\wt{N}_k^p(s,a)+\alpha c_{k,3}(\varepsilon_{0}, \delta_{0}, \delta)}  - p(s'\mid s,a) \right| \nonumber\\
&= \sum_{s'} \Bigg|
\underbrace{
\frac{\left(\frac{N_{k}^{p}(s,a,s')}{N_{k}^{p}(s,a)} - p(s'|s,a)\right)N_{k}^{p}(s,a)}{\wt{N}_k^p(s,a)+\alpha c_{k,3}(\varepsilon_{0}, \delta_{0}, \delta)}
}_{\color{red}>0}  + p(s'| s,a)\left(\frac{N_{k}^{p}(s,a)}{\wt{N}_k^p(s,a)+\alpha c_{k,3}(\varepsilon_{0}, \delta_{0}, \delta)}  - 1\right)\Bigg|\\
&\leq \sum_{s'} \bigg(p(s'| s,a) \frac{(\alpha + 1)c_{k,3}(\varepsilon_{0}, \delta_{0}, \delta)}{\wt{N}_k^p(s,a) + \alpha c_{k,3}(\varepsilon_{0}, \delta_{0}, \delta)}  \bigg) + \frac{N_{k}^{p}(s,a)\|\wb{p}_{k}(\cdot| s,a) - p(\cdot|s,a)\|_{1}}{\wt{N}_k^p(s,a) + \alpha c_{k,3}(\varepsilon_{0}, \delta_{0}, \delta)} \nonumber \\
&\stackrel{(a)}{\leq}  \frac{(\alpha + 1)c_{k,3}(\varepsilon_{0}, \delta_{0}, \delta)}{\wt{N}_k^p(s,a) + \alpha c_{k,3}(\varepsilon_{0}, \delta_{0}, \delta)} + \frac{N_{k}^{p}(s,a)}{\wt{N}_k^p(s,a) + \alpha c_{k,3}(\varepsilon_{0}, \delta_{0}, \delta)}\frac{L(\delta)}{\sqrt{N_{k}^{p}(s,a)}} \nonumber \\
&\leq \frac{(\alpha + 1)c_{k,3}(\varepsilon_{0}, \delta_{0}, \delta)}{\wt{N}_k^p(s,a) + \alpha c_{k,3}(\varepsilon_{0}, \delta_{0}, \delta)} + \frac{L(\delta)}{\sqrt{\wt{N}_k^p(s,a) + \alpha c_{k,3}(\varepsilon_{0}, \delta_{0}, \delta)}}
  \end{aligned}
\end{equation}}
where $L(\delta) = \sqrt{14S\ln(4\pi^{2}SAHk^{3}/3\delta)}$ and inequality $(a)$ follows from the Weissman inequality \citep{weissman2003inequality}, and we have again used the fact that the inequalities in Def.~\ref{assumption:concentration_privacy} hold. 

\todocout{we might need to be a bit careful here. Probably want to do the above analysis on the event that $\tilde N + \alpha c_{k,3} >0$? or is this already covered in the event everything in def 2 holds? if so state this}
\todoeout{I have added an other comment that because $\alpha>1$ on the event of Def.2 then $\sum_{s'} \wt{N}_{k}^{p}(s,a,s') + \alpha c_{k,3}(\varepsilon_{0}, \delta_{0},\delta) \geq \sum_{s'}N_{k}^{p}(s,a,s') + (\alpha - 1)c_{k,3}(\varepsilon_{0}, \delta_{0},\delta) > 0$}
In addition,we have:
\begin{align}\label{eq:biased_visits}
  \textcircled{2}
\leq \sum_{s'\in \mathcal{S}}  \frac{\left|c_{k,4}(\varepsilon_{0}, \delta_{0}, \delta)\right|}{\wt{N}_{k}^{p}(s,a) + \alpha c_{k,3}(\varepsilon_{0}, \delta_{0}, \delta)} = \frac{Sc_{k,4}(\varepsilon_{0}, \delta_{0}, \delta)}{\wt{N}_{k}^{p}(s,a) + \alpha c_{k,3}(\varepsilon_{0}, \delta_{0}, \delta)}
\end{align}
Hence putting together Eq.~\eqref{eq:biased_visits} and Eq.~\eqref{eq:concentration_transisition}, we have:
{\small\begin{equation}
\begin{aligned}
  \sum_{s'\in \mathcal{S}} \left| \frac{\wt{N}_{k}^{p}(s,a,s')}{\wt{N}_{k}^{p}(s,a) + \alpha c_{k,3}(\varepsilon_{0}, \delta_{0}, \delta)}  - p(s'\mid s,a) \right| \leq \frac{Sc_{k,4}(\varepsilon_{0}, \delta_{0}, \delta) + (\alpha + 1)c_{k,3}(\varepsilon_{0}, \delta_{0}, \delta)}{\wt{N}_{k}^{p}(s,a) + \alpha c_{k,3}(\varepsilon_{0}, \delta_{0}, \delta)}  \\
  +\frac{L(\delta)}{\sqrt{\wt{N}_{k}^{p}(s,a) + \alpha c_{k,3}(\varepsilon_{0}, \delta_{0}, \delta)}}&
  \end{aligned}
\end{equation}}\end{proof}

\section{Regret Upper Bound (Proof of Thm.~\ref{thm:regret_any_mechanism})}\label{app:proof_regret_upper_bound}

In this section, we prove Thm~\ref{thm:regret_any_mechanism}, which we recall below.
\begin{theorem*}
    For any privacy mechanism $\mathcal{M}$ satisfying Asm.~\ref{assumption:concentration_privacy} with $\varepsilon>0$, $\delta_{0}\geq 0$, and for any $\delta>0$ the regret of \ldpucbvi is bounded with probability at least $1-\delta$ by:
    \begin{equation}\label{eq:regret}
    \begin{aligned}
    \Delta(K) \leq \tilde{\mathcal{O}}\Bigg( \underbrace{HS\sqrt{AT}}_{\text{\ding{182}}} + SAH^{2}c_{K,3}\left(\varepsilon ,\delta_{0}, \frac{3\delta}{2\pi^{2}K^{2}}\right) +
    H^{2}S^{2}Ac_{K,4}\left(\varepsilon ,\delta_{0},\frac{3\delta}{2\pi^{2}K^{2}}\right)&\\
    + SAH c_{K,2}\left(\varepsilon ,\delta_{0},\frac{3\delta}{2\pi^{2}K^{2}}\right) +
    SAH c_{K,1}\left(\varepsilon ,\delta_{0},\frac{3\delta}{2\pi^{2}K^{2}}\right)   \Bigg)&
    \end{aligned}
    \end{equation}
    The combination of $\mathcal{M}$ and \algo is also $(\varepsilon, \delta_{0})$-LDP.
\end{theorem*}

\paragraph{Good Event:}
Before proceeding the proof of the regret we define a good event under which all concentration inequalities holds with probability at least $1-\delta$.
First, we define the event that all inequalities from Def.~\ref{assumption:concentration_privacy} holds. Let:
\begin{align*}
&L_{1,k} = \bigcap_{s,a}\left\{\left| \wt{R}_{k}(s,a) - R_{k}(s,a) \right| \leq  c_{k,1}(\varepsilon_0, \delta_{0}, 3\delta/2k^{2}\pi^{2})\right\}\\
&L_{2,k} =  \bigcap_{s,a}\left\{\left|\wt{N}_{k}^{r}(s,a) - N_{k}^{r}(s,a) \right| \leq c_{k,2}(\varepsilon_0, \delta_{0}, 3\delta/2k^{2}\pi^{2})\right\}\\
&L_{3,k} = \bigcap_{s,a} \left\{ \left|  \sum_{s'} N_{k}^{p}(s,a,s') - \sum_{s‘}\wt{N}_{k}^{p}(s,a,s')\right| \leq c_{k,3}(\varepsilon_0, \delta_{0}, 3\delta/2k^{2}\pi^{2})\right\} \\
&L_{4,k} = \bigcap_{s,a,s'} \left\{\left| N_{k}^{p}(s,a,s') - \wt{N}_{k}^{p}(s,a,s')\right| \leq c_{k,4}(\varepsilon_0, \delta_{0}, 3\delta/2k^{2}\pi^{2})\right\}
\end{align*}
then thanks to Def.~\ref{assumption:concentration_privacy} we have :
\begin{align}
\mathbb{P}\left(\bigcup_{k=1}^{+\infty} L_{1,k}^{c}\cup L_{2,k}^{c} \cup L_{3,k}^{c}\cup L_{4,k}^{c}\right) \leq \sum_{k=1}^{+\infty} \frac{3\delta}{\pi^{2}k^{2}} = \frac{\delta}{4}
\end{align}

In addition, for all $k\in \mathbb{N}^{\star}$, we can define $\wb{r}_{k}(s,a) = R_k(s,a)/N_k^r(s,a)$ and $\wb{p}_{k}= N_k^p(s,a,s')/ \sum_{s'} N_k^p(s,a,s')$ as the empirical reward and transition probability computed with the non-private counters.
Note that in this case $N_k(s,a):=N_k^r(s,a) = \sum_{s'} N_k^p(s,a,s')$.
We also define $\wb{\beta}_{k}^{r}(\delta,s,a) = \sqrt{\frac{2\ln(1/\delta)}{N_{k}(s,a)}}$ and $\wb{\beta}_{k}^{p}(\delta,s,a) = \sqrt{\frac{14S\log(1/\delta)}{N_{k}(s,a)}}$.
as the size of the confidence intervals using Hoeffding and Weissman inequalities. Thus, we get:
{\small\begin{align*}
\mathbb{P}&\left(\bigcup_{k=1}^{+\infty}\bigcup_{s,a} \left| \wb{r}_{k}(s,a) - r(s,a)\right|\geq \wb{\beta}_{k}^{r}(3\delta/4\pi^{2}SAHk^{3},s,a) \right) \\
&\leq \sum_{k=1}^{+\infty}\sum_{s,a} \mathbb{P}\left( \left| \wb{r}_{k}(s,a) - r(s,a)\right|\geq \sqrt{\frac{2\ln(4\pi^{3}SAHk^{3}/3\delta)}{N_{k}(s,a)}}\right)\\
&\leq \sum_{k=1}^{+\infty}\sum_{s,a}\sum_{n=0}^{kH} \mathbb{P}\left( \left| \wb{r}_{k}(s,a) - r(s,a)\right|\geq \sqrt{\frac{2\ln(4\pi^{2}SAHk^{3}/3\delta)}{n}}\right)\leq \sum_{k=1}^{+\infty}\sum_{s,a}\sum_{n=0}^{kH} \frac{3\delta}{4\pi^{2}SHAk^{3}} \leq \frac{\delta}{8}
\end{align*}}
A similar result holds for the transition dynamics, i.e.,:
\todocout{should this be in L1 norm?}
\todoeout{yes}
\begin{align}
&\mathbb{P}\left(\bigcup_{k=1}^{+\infty}\bigcup_{s,a} \left|\left| \wb{p}_{k}(\cdot|s,a) - p(\cdot| s,a)\right|\right|_{1}\geq \wb{\beta}_{k}^{p}(3\delta/4\pi^{2}SAHk^{3},s,a) \right) \leq \frac{\delta}{8}
\end{align}

Thus we can define the good event $\mathcal{G}_{k}$ by:
\begin{align*}
\mathcal{G}_{k} = \bigcap_{l=1}^{k-1} \bigcap_{i=1}^{4} L_{i,l} &\cap \bigcap_{s,a} \left\{ \left| \wb{r}_{l}(s,a) - r(s,a)\right| \leq \wb{\beta}^{r}_{l}(3\delta/(4\pi^{2}SAHl^{3}),s, a)  \right\}&\\
&\cap \left\{ \left|\left| \wb{p}_{k}(\cdot| s,a) - p(\cdot| s,a)\right|\right|_{1} \leq \wb{\beta}^{p}_{k}(3\delta/(4\pi^{2}SAHl^{3}), s, a) \right\}
\end{align*}
Then $\mathbb{P}\left(\bigcap_{k=1}^{+\infty} \mathcal{G}_{k}\right) \geq 1 - \delta/2$
and $\mathcal{G}_{k} \subset \sigma(\mathcal{H}_{k})$ (\ie the history before episode $k$).

\paragraph{Optimism:}
For each episode $k$, the value function $V_{k,1}$ computed by \ldpucbvi is optimistic, that is to say: $V_{k,h}(s) \geq V^{\star}_{h}(s)$ for any $h$ and state $s$. We sum up this with the following lemma:
\todocout{In the paper do we use $V$ as our optimistic value function not $\hat V$?}
\todoeout{Right, modified it}
\begin{lemma}\label{lem:optimistic_value_function}
For any episode $k \in [k]$, the value function $V_{k,1}$ computed by running Alg.~\ref{alg:LDP-UCB-VI} is such that with probability $1-\delta$:
\begin{align}
\forall s\in \mathcal{S},h\in [1, H] \qquad V_{k,h}(s) \geq V^{\star}_{h}(s)
\end{align}
\end{lemma}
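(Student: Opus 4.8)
The plan is to establish optimism by backward induction on the stage index $h$, working throughout on the good event $\mathcal{G}_k$, on which the confidence bounds of Prop.~\ref{prop:concentration_ldp} hold simultaneously for every $(s,a)$, i.e. $|r(s,a)-\wt{r}_k(s,a)|\le \beta_k^r(s,a)$ and $\|p(\cdot|s,a)-\wt{p}_k(\cdot|s,a)\|_1\le\beta_k^p(s,a)$. The base case $h=H+1$ is immediate since $V_{H+1,k}(s)=0=V^\star_{H+1}(s)$. For the inductive step I would assume $V_{h+1,k}(s')\ge V^\star_{h+1}(s')$ for all $s'$ and aim to prove the stronger pointwise claim $Q_{h,k}(s,a)\ge Q^\star_h(s,a)$ for every $(s,a)$, where $Q^\star_h(s,a)=r(s,a)+p(\cdot|s,a)^\transp V^\star_{h+1}$, and then transfer this to the value functions.

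The central computation is the decomposition $Q_{h,k}(s,a)-Q^\star_h(s,a) = \big(\wt{r}_k(s,a)-r(s,a)\big) + b_{h,k}(s,a) + \big(\wt{p}_k(\cdot|s,a)^\transp V_{h+1,k} - p(\cdot|s,a)^\transp V^\star_{h+1}\big)$. The delicate point, and the step I expect to be the main obstacle, is the transition bracket: since $\wt{p}_k$ is only a signed sub-probability measure, the naive move of replacing $V_{h+1,k}$ by $V^\star_{h+1}$ inside $\wt{p}_k^\transp(\cdot)$ and invoking the induction hypothesis fails, because a negative coordinate of $\wt{p}_k$ would reverse the inequality. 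To sidestep this I would use the asymmetric split $\big(\wt{p}_k(\cdot|s,a)-p(\cdot|s,a)\big)^\transp V_{h+1,k} + p(\cdot|s,a)^\transp\big(V_{h+1,k}-V^\star_{h+1}\big)$, so that the induction hypothesis is applied against the \emph{true} probability vector $p(\cdot|s,a)\ge 0$, making the second summand nonnegative, while the first summand is controlled by H\"older's inequality: it is at least $-\|\wt{p}_k(\cdot|s,a)-p(\cdot|s,a)\|_1\,\|V_{h+1,k}\|_\infty \ge -\beta_k^p(s,a)(H-h)$, where $\|V_{h+1,k}\|_\infty\le H-h$ follows from the truncation $V_{h+1,k}(s')=\min\{H-h,\max_a Q_{h+1,k}(s',a)\}$.

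Combining these estimates with $\wt{r}_k(s,a)-r(s,a)\ge -\beta_k^r(s,a)$ and the definition $b_{h,k}(s,a)=(H-h+1)\beta_k^p(s,a)+\beta_k^r(s,a)$ gives $Q_{h,k}(s,a)-Q^\star_h(s,a)\ge -\beta_k^r(s,a)+(H-h+1)\beta_k^p(s,a)+\beta_k^r(s,a)-(H-h)\beta_k^p(s,a)=\beta_k^p(s,a)\ge 0$, so $Q_{h,k}\ge Q^\star_h$ pointwise. To finish I would pass to the value functions: $\max_a Q_{h,k}(s,a)\ge \max_a Q^\star_h(s,a)=V^\star_h(s)$, and since $V^\star_h(s)\le H-h+1$ the outer truncation is harmless, yielding $V_{h,k}(s)=\min\{H-h+1,\max_a Q_{h,k}(s,a)\}\ge V^\star_h(s)$ and closing the induction. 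Everything beyond the signed-measure split is routine bookkeeping with the bonus and the boundedness of the value functions, and the $1-\delta$ probability is inherited from $\mathcal{G}_k$.
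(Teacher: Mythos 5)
Your proof is correct and follows essentially the same route as the paper's: backward induction on $h$ under the good event, where the signed sub-probability issue is handled exactly as in the paper by applying the inductive hypothesis against the true kernel $p(\cdot|s,a)\ge 0$ and controlling $(\wt{p}_k-p)^\transp V_{h+1,k}$ via H\"older with the $\ell_1$ confidence bound $\beta_k^p(s,a)$. Your version is in fact slightly tidier — starting the induction at $h=H+1$, using the stage-dependent bound $\|V_{h+1,k}\|_\infty \le H-h$ matched to the bonus $(H-h+1)\beta_k^p$, and explicitly checking that the truncation $\min\{H-h+1,\cdot\}$ is harmless since $V^\star_h \le H-h+1$ — but these are presentational refinements, not a different argument.
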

\begin{proof}
Fix an episode $k$ then we proceed by backward induction conditioned on the event $\mathcal{G}_{k}$:
\begin{itemize}
\item For $h=H$, we have for any state $s$ and action $a$:
\begin{align}
V_{k,H}(s)\geq Q_{k,H}(s,a) \geq \wt{r}_{k}(s,a) + \beta_{k}^{r}(s,a) \geq r(s,a) \text{ thanks to Prop.~\ref{prop:concentration_ldp}}
\end{align}
\item For $h<H$ when the property is true for $h+1$, we get for any state-action $(s,a)$:
\begin{align}
V_{k,h}(s)\geq Q_{k,h}(s,a) &= \wt{r}_{k}(s,a) + \beta_{k}^{r}(s,a) + \wt{p}_{k}(\cdot| s,a)^{\intercal}V_{k,h+1} + H\beta_{k}^{p}(s,a)\\
&\geq r(s,a) + p(\cdot| s,a)^{\intercal}V_{k,h+1} \geq Q^{\star}_{h}(s,a)
\end{align}
where we used the fact that $\|(\wt{p}_{k}(\cdot| s,a) - p(\cdot|s,a))^{\intercal} V_{k,h+1}\| \leq \|\wh{p}_{k}(\cdot| s,a) - p(\cdot|s,a)\|_1 \|V_{k,h+1}\|_{\infty} \leq H\beta_{k}^{p}(s,a)$ and the inductive hypothesis.
\end{itemize}
\end{proof}

\paragraph{Regret Decomposition:}
We are now ready to analyze the regret of \ldpucbvi. Consider an episode $k$, then, conditioned on $\mathcal{G}_{k}$:
\begin{align*}
V^{\star}_{1}(s_{k,1}) - V^{\pi_{k}}_{1}(s_{k,1}) \leq V_{k,1}(s_{k,1}) - V^{\pi_{k}}_{1}(s_{k,1})
\leq \wt{r}_{k}(s_{k,1},a_{k,1}) + \beta_{k}^{r}(s_{k,1},a_{k,1}) - r(s_{k,1}, a_{k,1})&\\
+ \wt{p}_{k}(\cdot| s,a)^{\intercal}V_{k,2}
    - p(\cdot| s,a)^{\intercal}V^{\pi_{k}}_{2} + H\beta_{k}^{p}(s_{k,1}, a_{k,1})&\\
\end{align*}
where the last inequality follows from recursively applying the same technique.
Then, observe that $(\eta_{k,h})_{k,h}$ is a Martingale Difference Sequence with respect to the history before episode $k$ and thanks to Azuma-Hoeffding inequality we have that with probability at least $1-\delta/2$, $\sum_{k=1}^{K}\sum_{h=1}^{H-1} \eta_{k,h}\leq 2H\sqrt{KH\ln(2/\delta)}$. Therefore, we have with probability at least $1-\delta$:
\begin{align}
    R(\ldpucbvi, K) \leq 2\sum_{k=1}^{K}\sum_{h=1}^{H} \beta_{k}^{r}(s_{k,h}, a_{k,h}) + H\beta_{k}^{p}(s_{k,h},a_{k,h}) + \underbrace{2H\sqrt{T\ln(2/\delta)}}_{\text{MDS error term}}
\end{align}

Let $\nu_{k}(s,a) = \sum_{h=1}^{H} \mathds{1}_{\{s_{k,h}=s, a_{k,h}=a\}}$.
Then summing over the reward bonus and using the fact that $\alpha >1$, we get:
{\small\begin{equation}\label{eq:upper_reward_bonus}
\begin{aligned}
\sum_{k=1}^{K}\sum_{h=1}^{H} \beta_{k}^{r}(s_{k,h}, a_{k,h}) &= \sum_{s,a,k} \frac{\nu_{k}(s,a)L_{k,r}}{\sqrt{\wt{N}_{k}^{r}(s,a) + \alpha c_{k,2}\left(\varepsilon_{0}, \delta_{0}, \frac{3\delta}{2\pi^{2}k^{2}}\right)}}\\
&+ \sum_{s,a,k} \frac{\nu_{k}(s,a)(\alpha + 1)c_{k,2}\left(\varepsilon_{0}, \delta_{0}, \frac{3\delta}{2\pi^{2}k^{2}}\right)}{\alpha c_{k,2}\left(\varepsilon_{0}, \delta_{0}, \frac{3\delta}{2\pi^{2}k^{2}}\right) + \wt{N}_{k}^{r}(s,a)}\\
 &+ \sum_{s,a,k}\frac{\nu_{k}(s,a)c_{k,1}\left(\varepsilon_{0}, \delta_{0}, \frac{3\delta}{2\pi^{2}k^{2}}\right)}{\alpha c_{k,2}\left(\varepsilon_{0}, \delta_{0}, \frac{3\delta}{2\pi^{2}k^{2}}\right) + \wt{N}_{k}^{r}(s,a)}
\end{aligned}
\end{equation}}
where $L_{k,r} = \sqrt{2\ln\left(\frac{4\pi^{2}SAHk^{3}}{3\delta}\right)}$. Then, using that $\wt{N}_{k}^{r}(s,a) + c_{k,2}\left(\varepsilon_{0}, \delta_{0}, \frac{3\delta}{2\pi^{2}k^{2}}\right) \geq N_{k}(s,a)$ on the good event from $\mathcal{G}_{k}$:
{\small\begin{equation}\label{eq:temp}
    \begin{aligned}
 \eqref{eq:upper_reward_bonus} \leq \sum_{s,a,k}  \frac{\nu_{k}(s,a)L_{k,r}}{\sqrt{N_{k}(s,a) + (\alpha - 1)c_{k,2}\left(\varepsilon_{0}, \delta_{0}, \frac{3\delta}{2\pi^{2}k^{2}}\right)}} + \frac{\nu_{k}(s,a)(\alpha + 1)c_{k,2}\left(\varepsilon_{0}, \delta_{0}, \frac{3\delta}{2\pi^{2}k^{2}}\right)}{(\alpha-1) c_{k,2}\left(\varepsilon_{0}, \delta_{0}, \frac{3\delta}{2\pi^{2}k^{2}}\right) + N_{k}(s,a)}& \\
 +\sum_{s,a,k} \frac{\nu_{k}(s,a)c_{k,1}\left(\varepsilon_{0}, \delta_{0}, \frac{3\delta}{2\pi^{2}k^{2}}\right)}{(\alpha-1) c_{k,2}\left(\varepsilon_{0}, \delta_{0}, \frac{3\delta}{2\pi^{2}k^{2}}\right) + N_{k}(s,a)}&
    \end{aligned}
\end{equation}}
But because $c_{k,2}$ is non-decreasing in $k$, we have that,
\begin{equation}\label{eq:temp_2}
    \begin{aligned}
    \eqref{eq:temp} \leq \left((\alpha + 1)c_{K,2}\left(\varepsilon_{0}, \delta_{0}, \frac{3\delta}{2\pi^{2}K^{2}}\right) + c_{K,1}\left(\varepsilon_{0}, \delta_{0}, \frac{3\delta}{2\pi^{2}K^{2}}\right)\right)\sum_{k,s,a} \frac{\nu_{k}(s,a)}{N_{k}(s,a)}\\
                            + \sum_{s,a,k} \frac{\nu_{k}(s,a)L_{K,r}}{\sqrt{N_{k}(s,a)}}\\
    \end{aligned}
\end{equation}
Which can be rewritten as:
\begin{equation}
    \begin{aligned}
        \eqref{eq:temp_2} \leq 2\left((\alpha + 1)c_{K,2}\left(\varepsilon_{0}, \delta_{0}, \frac{3\delta}{2\pi^{2}K^{2}}\right) + c_{K,1}\left(\varepsilon_{0}, \delta_{0}, \frac{3\delta}{2\pi^{2}K^{2}}\right) \right)SA(\ln(2TSA) + H) \\
        + \sqrt{6\ln\left(14SAT/\delta\right)}\left(\sqrt{2SAT} + HSA\right)
    \end{aligned}
\end{equation}
where the last inequality comes from Lem. 19 in \citep{Jaksch10}. 
For the sum of the bonus on the transition dynamics we have that:
{\small
\begin{equation}\label{eq:upper_dynamics_bonus}
\begin{aligned}
\sum_{k=1}^{K}\sum_{h=1}^{H} H\beta_{k}^{p}(s_{k,h}, a_{k,h}) = \sum_{s,a,k} \frac{H\nu_{k}(s,a)L_{k,p}}{\sqrt{\wt{N}_{k}^{p}(s,a) + \alpha c_{k,3}\left(\varepsilon_{0}, \delta_{0}, \frac{3\delta}{2\pi^{2}k^{2}}\right)}} \\
+\sum_{s,a,k}\frac{HS\nu_{k}(s,a)c_{k,4}\left(\varepsilon_{0}, \delta_{0},\frac{3\delta}{2\pi^{2}k^{2}}\right)}{\alpha c_{k,3}\left(\varepsilon_{0}, \delta_{0}, \frac{3\delta}{2\pi^{2}k^{2}}\right) + \wt{N}_{k}^{p}(s,a)} \\
+\sum_{s,a,k}\frac{H\nu_{k}(s,a)(\alpha + 1)c_{k,3}\left(\varepsilon_{0}, \delta_{0}, \frac{3\delta}{2\pi^{2}k^{2}}\right)}{\alpha c_{k,3}\left(\varepsilon_{0}, \delta_{0}, \frac{3\delta}{2\pi^{2}k^{2}}\right) + \wt{N}_{k}^{p}(s,a)}
\end{aligned}
\end{equation}}
where $L_{k,p} = \sqrt{14S\ln\left(\frac{4\pi^{2}SAHk^{3}}{3\delta}\right)}$. Then similarly to the reasonning used to bound Eq.~\eqref{eq:upper_reward_bonus}, we have:
{\small
\begin{equation}
    \begin{aligned}
\eqref{eq:upper_dynamics_bonus} &\leq \sum_{s,a,k}  \frac{H\nu_{k}(s,a)L_{k,p}}{\sqrt{N_{k}(s,a) + (\alpha - 1)c_{k,3}\left(\varepsilon_{0}, \delta_{0}, \frac{3\delta}{2\pi^{2}k^{2}}\right)}}
+ \sum_{s,a,k}\frac{H\nu_{k}(s,a)(\alpha + 1)c_{k,3}\left(\varepsilon_{0}, \delta_{0}, \frac{3\delta}{2\pi^{2}k^{2}}\right)}{(\alpha-1) c_{k,3}\left(\varepsilon_{0}, \delta_{0}, \frac{3\delta}{2\pi^{2}k^{2}}\right) + N_{k}(s,a)}\\
&+\sum_{k,s,a}\frac{HSc_{k,4}\left(\varepsilon_{0}, \delta_{0}, \frac{3\delta}{2\pi^{2}k^{2}}\right)}{(\alpha-1) c_{k,3}\left(\varepsilon_{0}, \delta_{0}, \frac{3\delta}{2\pi^{2}k^{2}}\right) + N_{k}(s,a)}\nonumber\\
&\leq + \left((\alpha + 1)c_{K,3}\left(\varepsilon_{0}, \delta_{0}, \frac{3\delta}{2\pi^{2}K^{2}}\right) +
Sc_{K,4}\left(\varepsilon_{0}, \delta_{0},
\frac{3\delta}{2\pi^{2}K^{2}}\right)\right)
\sum_{k,s,a} \frac{H\nu_{k}(s,a)}{N_{k}(s,a)} \\
&\sum_{s,a,k} \frac{H\nu_{k}(s,a)L_{K,p}}{\sqrt{N_{k}(s,a)}} \\
&\leq 2SAH\left((\alpha + 1)c_{K,3}\left(\varepsilon_{0}, \delta_{0}, \frac{3\delta}{2\pi^{2}K^{2}}\right) + Sc_{K,4}\left(\varepsilon_{0}, \delta_{0}, \frac{3\delta}{2\pi^{2}K^{2}}\right)\right)\left(\ln(2TSA) + H\right) \\
&+ H\sqrt{46S\ln\left(14SAT/\delta\right)}\left(\sqrt{2SAT} + HSA\right)\nonumber
\end{aligned}
\end{equation}}
where the last inequality comes from~\citep[][Lem. 19]{Jaksch10} and~\citep[][Lem. 8]{fruit2020improved}.
Hence putting everything together, we get that with probability $1-\delta$:
{\small
\begin{align*}
R(\ldpucbvi, K) \leq H\sqrt{46S\ln(14SAT/\delta)}(\sqrt{2SAT} + HSA) + \sqrt{6\ln(14SAT/\delta)}(\sqrt{2SAT} + HSA)\\
+ 2SAH\left((\alpha + 1)c_{K,3}\left(\varepsilon_{0}, \delta_{0}, \frac{3\delta}{2\pi^{2}K^{2}}\right) + Sc_{K,4}\left(\varepsilon_{0}, \delta_{0}, \frac{3\delta}{2\pi^{2}K^{2}}\right)\right)\left(\ln(2TSA) + H\right) \\
+ 2\left((\alpha + 1)c_{K,2}\left(\varepsilon_{0}, \delta_{0}, \frac{3\delta}{2\pi^{2}K^{2}}\right) + c_{K,1}\left(\varepsilon_{0}, \delta_{0}, \frac{3\delta}{2\pi^{2}K^{2}}\right)\right)SA(\ln(2TSA) + H) + 2H\sqrt{T\ln(2/\delta)}&
\end{align*}}

In addition, because \algo has only access to the privatized data, that is to say it only uses the output of $\mathcal{M}(\{(s_{k,h},a_{k,h},r_{k,h})_{h\leq H}\})$ for each episode $k$, the LDP constraint is satsified as long as the privacy mechanism $\mathcal{M}$ satisfies Def.~\ref{def:RL-LDP}.

\paragraph{Note:} the proof of this regret upper-bound relies on concentration inequalities more generally used in the average reward regret minimization setting. Stated otherwise, we directly study the error between the estimated model and the true model, \ie $|\wt{r}_{k} - r|$ and  $||\wt{p}_{k}(.\mid s,a) - p(.\mid s,a)||_{1}$ for each $s,a$.
In the non-private setting, it is possible to get a more refined regret using more precise concentration inequalities, mainly Bernstein inequality and other tools introduced in~\citep{azar2017minimax}. However, in the private setting, using such results only leads to a gain in lower order terms and terms independent of $\varepsilon$ while the technical derivations are much more intricate.

\section{The Laplace Mechanism for Local Differential Privacy}\label{app:proof_algo}
\todomp{Check references}
In this appendix, we show how the well-known Laplace mechanism~\citep{dwork2006calibrating} can be used with \algo to ensure LDP and a sublinear regret.

\begin{algorithm}[h]
  \small
  \caption{Laplace mechanism for LDP}
  \label{alg:laplace_mechanism}
  \begin{algorithmic}
    \STATE {\bfseries Input:} Trajectory: $X = \{(s_{h}, a_{h}, r_{h}) \mid h\leq H\}$, Privacy Parameter: $\varepsilon_{0}$
    \STATE Draw $(Y_{i, X}(s,a))_{(s,a)\in \mathcal{S}\times \mathcal{A}, i\leq 2}$ i.i.d $\text{Lap}(1/\varepsilon_{0})$ and $(Z_{X}(s,a,s'))_{(s,a,s')\in \mathcal{S}\times \mathcal{A}\times \mathcal{S}}$ i.i.d $\text{Lap}(1/\varepsilon_{0})$ and independent from $Y_{i,X}$ for $i\in\{1,2\}$\;
    \FOR{$(s,a)\in \mathcal{S}\times \mathcal{A}$}
          \STATE $\wt{R}_{X}(s,a) = \sum_{h=1}^{H} r_{h}\mathds{1}_{\{s_{h}, a_{h}=s,a\}} + Y_{1, X}(s,a)$
          \STATE $\wt{N}_{X}^{r}(s,a) = \sum_{h=1}^{H} \mathds{1}_{\{s_{h}, a_{h}=s,a\}} + Y_{2, X}(s,a)$
          \FOR{$s'\in \mathcal{S}$}
            \STATE $\wt{N}_{X}^{p}(s,a,s') = \sum\limits_{h=1}^{H-1} \mathds{1}_{\{s_{h}, a_{h}, s_{h+1} = s,a,s'\}} + Z_{X}(s,a,s')$
          \ENDFOR
    \ENDFOR
    \STATE{\bfseries Return:} $(\wt{R}_{X}, \wt{N}_{X}^{r}, \wt{N}_{X}^{p})\in \mathbb{R}^{S\times A} \times \mathbb{R}^{S\times A}\times \mathbb{R}^{S\times A\times S}$
  \end{algorithmic}
\end{algorithm}

\subsection{The Laplace mechanism (Alg.~\ref{alg:laplace_mechanism}) satisfies local differential privacy (Asm.~\ref{assumption:concentration_privacy})}\label{app:proof_laplace_def_2}
We first prove Thm.~\ref{thm:ldp_laplace} which states that using Alg.~\ref{alg:laplace_mechanism} with parameter $\varepsilon_{0} = \varepsilon/6H$ guarantees $(\varepsilon,\delta)$-LDP.
\begin{theorem}\label{thm:ldp_laplace}
  For any $\varepsilon>0$, the Laplace mechanism described by Alg.~\ref{alg:laplace_mechanism} with parameter $\varepsilon_{0} = \varepsilon/6H$ is $(\varepsilon,0)$-LDP (and thus $(\varepsilon, \delta_{0})$-LDP for every $\delta_{0}\geq 0$).
\end{theorem}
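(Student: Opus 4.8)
The plan is to reduce the statement to the standard privacy guarantee of the Laplace mechanism \citep{dwork2014algorithmic} by computing the $\ell_{1}$-sensitivity of the vector-valued statistic the randomizer releases. Recall that, for a trajectory $X$, Alg.~\ref{alg:laplace_mechanism} outputs $\mathcal{M}(X) = (\wt{R}_{X}, \wt{N}^{r}_{X}, \wt{N}^{p}_{X})$, obtained by adding independent $\mathrm{Lap}(1/\varepsilon_{0})$ noise to every coordinate of the true statistics $(R_{X}, N^{r}_{X}, N^{p}_{X})$. Writing $f(X) := (R_{X}, N^{r}_{X}, N^{p}_{X}) \in \mathbb{R}^{S\times A}\times\mathbb{R}^{S\times A}\times\mathbb{R}^{S\times A\times S}$, I would first bound $\|f(X)\|_{1}$ for an arbitrary admissible trajectory. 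Since $f$ is additive over the $H$ steps, and each step contributes at most $1$ to the reward table (rewards lie in $[0,1]$), exactly one unit to the visit table, and at most one unit to the transition table, one has $\sum_{s,a} R_{X}(s,a) = \sum_{h} r_{h} \le H$, $\sum_{s,a} N^{r}_{X}(s,a) = H$ and $\sum_{s,a,s'} N^{p}_{X}(s,a,s') = H-1$, so $\|f(X)\|_{1} \le 3H$.

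In the local model, the neighbouring relation of Def.~\ref{def:RL-LDP} compares two arbitrary trajectories $X_{u}, X_{u'}$ of possibly distinct users, so the relevant sensitivity is $\Delta_{1} := \sup_{X, X'} \|f(X) - f(X')\|_{1}$, which the triangle inequality and the previous bound control by $\Delta_{1} \le 2\cdot 3H = 6H$. The second step is the textbook Laplace computation: denoting by $p_{X}$ the density of $\mathcal{M}(X)$ and using the product form of the Laplace density, for any output $y$ and trajectories $X, X'$,
\begin{align*}
\frac{p_{X}(y)}{p_{X'}(y)} = \exp\left( \varepsilon_{0} \sum_{i} \big( |y_{i} - f_{i}(X')| - |y_{i} - f_{i}(X)| \big) \right) \le \exp\left( \varepsilon_{0}\, \|f(X) - f(X')\|_{1} \right) \le \exp(\varepsilon_{0}\Delta_{1}).
\end{align*}
Substituting $\varepsilon_{0} = \varepsilon/6H$ gives a ratio bounded by $e^{\varepsilon_{0}\cdot 6H} = e^{\varepsilon}$; integrating the pointwise bound over any measurable $S \subset \{\mathcal{M}(\mathcal{X}_{u}) \mid u \in \mathcal{U}\}$ then yields exactly Eq.~\eqref{eq:LDP_RL} with $\delta = 0$, i.e. $(\varepsilon,0)$-LDP. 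The closing claim, $(\varepsilon,\delta_{0})$-LDP for all $\delta_{0}\ge 0$, is immediate since a nonnegative additive slack only relaxes the constraint.

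The calculation is essentially routine; the step needing the most care is pinning down the sensitivity constant $6H$. Two points must be handled correctly: (i) the three tables are released jointly, so one bounds the $\ell_{1}$-norm of the stacked vector rather than each table in isolation, the key observation being that the per-step contributions of the reward, visit and transition tables sum to at most $3$, hence at most $3H$ over the horizon; and (ii) because LDP treats \emph{any} two trajectories as neighbours (unlike the record-level relation of central DP), the factor $2$ from the triangle inequality is genuinely needed, producing $6H$ rather than $3H$. Once $\Delta_{1}\le 6H$ is in hand, the choice $\varepsilon_{0} = \varepsilon/6H$ is precisely what cancels the sensitivity. I would also remark that $\|f(X)\|_{1} \le 3H-1$ makes the bound slightly loose, but the clean constant $6H$ is the one that later propagates into the accuracy functions $c_{k,i}$.
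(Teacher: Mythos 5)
Your proof is correct and is essentially the paper's own argument in different packaging: the paper bounds the Laplace density ratio separately for each of the three tables (each with sensitivity $2H$, hence a factor $e^{\varepsilon/3}$) and multiplies using independence, while you stack the tables into one vector and bound the joint $\ell_{1}$-sensitivity by $6H$ at once --- the same triangle-inequality computation and the same arithmetic $3\times 2H\varepsilon_{0}=6H\varepsilon_{0}=\varepsilon$. Your handling of the two subtle points (joint release of the three tables, and the factor $2$ from comparing two arbitrary trajectories in the local model) matches exactly what the paper's explicit calculation does.
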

Formally, we need to show that, for any two trajectories $X$ and $X'$ and tuple $(r,n,n')$, the following inequality holds
\begin{equation}
  \mathbb{P}\Big( \mathcal{M}(X) = (r, n, n') \Big) \leq e^{\varepsilon} \mathbb{P}\Big( \mathcal{M}(X') = (r, n, n') \Big) + \delta
\end{equation}
where $r$, $n$, $n'$ are vectors of dimension $SA$, $SA$ and $S^2A$, respectively. See the LDP definition in Def.~\ref{def:RL-LDP}.

\begin{proof}[Proof of Thm.~\ref{thm:ldp_laplace}]
Let's consider two trajectories $X = \{(s_{h}, a_{h}, r_{h}) \mid h\leq H\}$ and $X' = \{ (s_{h}', a_{h}', r_{h}') \mid h\leq H\}$. We denote the output of the private randomizer $\mathcal{M}$ by $\mathcal{M}(X) = (\wt{R}_{X}, \wt{N}_{X}^{r}, \wt{N}_{X}^{p})$ and $\mathcal{M}(X') = (\wt{R}_{X'}, \wt{N}_{X'}^{r}, \wt{N}_{X'}^{p})$.
Recall that $\wt{R}_{X}(s,a) := \sum_{h=1}^{H}r_{h}\mathds{1}_{\{s_{h} = s, a_{h} = a\}} + Y_{1,X}(s,a)$ where $(Y_{1, X}(s,a))_{(s,a)\in\mathcal{S}\times\mathcal{A}}$ are independent Laplace variables with parameter $\varepsilon/(6H)$.
Consider a vector $r\in \mathbb{R}^{S\times A}$, then:
{\small\begin{align}
\frac{ \mathbb{P}\left(\forall (s,a),\wt{R}_{X}(s,a) = r_{s,a} \mid X\right)}{\mathbb{P}\left(\forall (s,a), \wt{R}_{X'}(s,a) = r_{s,a} \mid X'\right)} &=\prod_{s,a} \frac{\mathbb{P}\left(Y_{1,X}(s,a) = \sum_{h=1}^{H}r_{h}\mathds{1}_{\{s_{h} = s, a_{h} = a\}} - r_{s,a} \mid X\right)}{\mathbb{P}\left(Y_{1,X'}(s,a) = \sum_{h=1}^{H}r_{h}'\mathds{1}_{\{s_{h}' = s, a_{h}' = a\}} - r_{s,a}\mid X'\right)}
\end{align}}
since the Laplace distribution is symmetric.
But $Y_{1,X}(s,a)$ and $Y_{1,X'}(s,a)$ are independent 
random variables for any state-action pair. Thus:
{\small\begin{equation}\label{eq:dp_reward_laplace}
\begin{aligned}
\prod_{s,a} &\frac{\mathbb{P}\left(Y_{1,X}(s,a) = \sum_{h=1}^{H}r_{h}\mathds{1}_{\Big\{\begin{subarray}{l}s_{h} = s,\\a_{h} = a\end{subarray}\Big\}} - r_{s,a}\mid X\right)}{\mathbb{P}\left(Y_{1,X'}(s,a) = \sum_{h=1}^{H}r_{h}'\mathds{1}_{\Big\{\begin{subarray}{l}s_{h}' = s,\\ a_{h}' = a\end{subarray}\Big\}} - r_{s,a} \mid X'\right)}
= \prod_{s,a} \frac{e^{\left(\varepsilon_{0}\left|\sum_{h=1}^{H} (r_{h}\mathds{1}_{\Big\{\begin{subarray}{l}s_{h} = s,\\a_{h} = a\end{subarray}\Big\}} - r_{s,a} \right| \right)}}{e^{\left(\varepsilon_{0}\left|\sum_{h=1}^{H} (r_{h}'\mathds{1}_{\Big\{\begin{subarray}{l}s_{h}' = s,\\a_{h}' = a\end{subarray}\Big\}} - r_{s,a} \right| \right)}}\\
&\leq \exp\Bigg(\varepsilon_{0}\sum_{s,a} \Bigg|\sum_{h=1}^{H} (r_{h}\mathds{1}_{\{s_{h} = s,a_{h} = a\}} - r_{h}'\mathds{1}_{\{s_{h}' = s, a_{h}' = a\}}) \Bigg| \Bigg) \\
&\leq \exp\left(\varepsilon_{0}\sum_{s,a,h} (|r_{h}| \mathds{1}_{\{s_{h} = s, a_{h} = a\}} + |r_{h}'|\mathds{1}_{\{s_{h}' = s, a_{h}' = a\}})\right)\\
& = \exp\left(\varepsilon_{0}\sum_{h} (|r_{h}| + |r_{h}'|)\right) \leq \exp\left(2H\varepsilon_{0}\right) = \exp\left(\frac{\varepsilon}{3}\right)
\end{aligned}
\end{equation}}
where we used the definition of the Laplace distribution, $x\mapsto \frac{1}{2b}\exp(|x|/b)$. Let $n\in\mathbb{R}^{S\times A}$ and $n'\in \mathbb{R}^{S\times A\times S}$.
Similarly, since $\wt{N}_{X}^{r}(s,a) = \sum_{h=1}^{H} \mathds{1}_{\{s_{h}=s, a_{h}=a\}} + Y_{2, X}(s,a)$ and $\wt{N}_{X}^{p}(s,a,s') = \sum_{h=1}^{H-1} \mathds{1}_{\{s_{h}=s, a_{h}=a, s_{h+1} = s'\}} + Z_{X}(s,a,s')$, we have:
\begin{align}\label{eq:dp_state_action_laplace}
&\frac{\mathbb{P}\left(\forall (s,a),\wt{N}^{r}_{X}(s,a) = n_{s,a} \mid X\right)}{\mathbb{P}\left(\forall (s,a), \wt{N}^{r}_{X'}(s,a) = n_{s,a} \mid X'\right)} \leq \exp\left(\frac{\varepsilon}{3}\right)
\end{align}
and:
\begin{align}\label{eq:dp_state_action_state_laplace}
\frac{\mathbb{P}\left(\forall (s,a,s'),\wt{N}^{p}_{X}(s,a,s') = {\color{red}n'_{s,a,s'}} \mid X\right)}{\mathbb{P}\left(\forall (s,a,s'), \wt{N}^{p}_{X'}(s,a,s') = {\color{red}n'_{s,a,s'}} \mid X'\right)} \leq \exp\left(\frac{\varepsilon}{3}\right)
\end{align}
Then because $(Y_{i,X}(s,a))_{i\leq 2, (s,a)\in \mathcal{S}\times \mathcal{A}}$, $(Z_{X}(s,a,s'))_{(s,a,s')\in \mathcal{S}\times \mathcal{A}\times \mathcal{S}}$ are independent it holds that:
$$\mathbb{P}\left( \wt{R}_{X} = r, \wt{N}_{X}^{r} = n, \wt{N}_{X}^{p} = n'\mid X\right) = \mathbb{P}\left( \wt{R}_{X} = r\mid X\right)\mathbb{P}\left( \wt{N}_{X}^{r} = n\mid X\right)\mathbb{P}\left(\wt{N}_{X}^{p} = n'\mid X\right)$$
Thus for any $(r,n,n')\in \mathbb{R}^{S\times A}\times \mathbb{R}^{S\times A} \times \mathbb{R}^{S\times A\times S}$ and any two trajectories $X$ and $X'$:
\begin{align*}
\mathbb{P}\Big(\mathcal{M}(X) = (r,n,n')\mid X \Big) &= \mathbb{P}\left(\wt{R}_{X} = r, \wt{N}_{X}^{r} = n, \wt{N}^{p}_{X} = n'\mid X \right) \\
&= \mathbb{P}\left(\wt{R}_{X} = r\mid X\right) \mathbb{P}\left(\wt{N}_{X}^{r} = n\mid X\right) \mathbb{P}\left(\wt{N}^{p}_{X} = n'\mid X \right)
\label{eq:laplace_prob_pm_decomp}
\end{align*}
where we use the convention that $\wt R_X=r$ implies that $\wt R_X(s,a)=r_{x,a}$, and similarly for $\wt N_X^r=n, \wt N_X^p=n'$.
Therefore using inequalities \eqref{eq:dp_reward_laplace}, \eqref{eq:dp_state_action_laplace} and \eqref{eq:dp_state_action_state_laplace} in~\eqref{eq:laplace_prob_pm_decomp}, we have:
\begin{equation}
\begin{aligned}
\mathbb{P}\Big(\mathcal{M}(X) = (r,n,n')\mid X \Big) &=
\mathbb{P}\left(\wt{R}_{X} = r\mid X\right) \mathbb{P}\left(\wt{N}_{X}^{r} = n\mid X\right) \mathbb{P}\left(\wt{N}^{p}_{X} = n'\mid X \right)  \\
&\leq \exp(\varepsilon)\mathbb{P}\left(\wt{R}_{X'} = r\mid X'\right)
\mathbb{P}\left(\wt{N}_{X'}^{r} = n\mid X'\right)\mathbb{P}\left(\wt{N}^{p}_{X'} = n'\mid X' \right) \nonumber\\
&= \exp(\varepsilon)\mathbb{P}\left(\wt{R}_{X'} = r,\wt{N}_{X'}^{r} = n,  \wt{N}^{p}_{X'} = n'\mid X' \right) \nonumber\\
&= \exp(\varepsilon)\mathbb{P}\left(\mathcal{M}(X') = (r,n,n') \mid X' \right) \nonumber\\
\end{aligned}
\end{equation}
This concludes the proof.
\end{proof}

Now that we shown the Laplace mechanism ensures LDP with the reight parameter, let's show that the latter satisfies Asm.~\ref{assumption:concentration_privacy} by showing the following proposition:
\begin{proposition}\label{prop:laplace_assumption1}
  For any $\varepsilon>0$, the Laplace mechnism, Alg.~\ref{alg:laplace_mechanism}, with parameter $\varepsilon_{0} = \varepsilon/(6H)$ satisfies Def.~\ref{assumption:concentration_privacy} for any $\delta>0$ and $k\in \mathbb{N}$ with $c_{k,1}(\varepsilon, \delta) = c_{k,2}(\varepsilon, \delta)$, $c_{k,3}(\varepsilon, \delta) = \sqrt{S}c_{k,4}(\varepsilon, \delta)$ and:
\begin{align*}
&c_{k,1}(\varepsilon, \delta)  = \max\left\{\sqrt{k}, \ln\left(\frac{6SA}
{\delta}\right)\right\}\frac{\sqrt{8\ln\left(\frac{6SA}{\delta}\right)}}{\varepsilon/6H},\\
&c_{k,3}(\varepsilon,\delta) = \max\left\{\sqrt{kS}, \ln\left(\frac{6S^{2}A}{\delta}\right)\right\} \frac{\sqrt{8\ln\left(\frac{6S^{2}A}{\delta}\right)}}{\varepsilon/6H}
\end{align*}
\end{proposition}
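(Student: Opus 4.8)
The plan is to verify the four concentration bounds of Def.~\ref{assumption:concentration_privacy} directly from the noise model of the mechanism. By construction of Alg.~\ref{alg:laplace_mechanism} (the same construction used to establish Thm.~\ref{thm:ldp_laplace}), for each trajectory $X_{u_l}$ the private statistics are obtained by adding \emph{independent}, mean-zero Laplace noise of scale $b = 1/\varepsilon_0 = 6H/\varepsilon$ to every coordinate of $R_{X_{u_l}}$, $N^r_{X_{u_l}}$ and $N^p_{X_{u_l}}$; the factor $6H$ arises because the budget $\varepsilon$ is composed over the three statistics, each a histogram/sum whose $\ell_1$-sensitivity across trajectories is bounded by $2H$. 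Consequently each aggregated deviation is a sum of independent $\text{Lap}(b)$ variables: for instance $\wt{R}_k(s,a) - R_k(s,a) = \sum_{l<k}\xi_l$ with $\xi_l \sim \text{Lap}(b)$ i.i.d., and likewise coordinatewise for $\wt{N}^r_k - N^r_k$ and $\wt{N}^p_k - N^p_k$. Crucially, for the aggregated transition count, $\sum_{s'}(\wt{N}^p_k(s,a,s') - N^p_k(s,a,s'))$ is a sum of $(k-1)S$ independent $\text{Lap}(b)$ variables, since the noise is independent across the $S$ next-state coordinates.

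The main technical step is a sub-exponential tail bound for sums of Laplace variables. Using the Laplace moment generating function $\mathbb{E}[e^{\lambda\xi}] = (1-\lambda^2 b^2)^{-1} \leq e^{2\lambda^2 b^2}$ for $|\lambda| \leq 1/(2b)$, I would run a standard Chernoff argument to show that $Y = \sum_{i=1}^n \xi_i$ obeys $\mathbb{P}(|Y|\geq t) \leq 2\exp\big(-\tfrac12\min\{t^2/(4nb^2),\, t/(2b)\}\big)$. Inverting this at level $\delta''$ and merging the sub-Gaussian regime ($t \asymp b\sqrt{n\ln(1/\delta'')}$) with the sub-exponential tail ($t \asymp b\ln(1/\delta'')$) into a single expression gives, for $\ln(2/\delta'')\geq 2$, the clean bound $|Y| \leq \max\{\sqrt{n},\ln(2/\delta'')\}\,\sqrt{8\ln(2/\delta'')}\,b$ with probability at least $1-\delta''$. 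After substituting $n \leq k$ and $b = 6H/\varepsilon$, this is exactly the quantity that reproduces the claimed $c_{k,i}$.

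It then remains to budget the failure probability with union bounds. The plan is to apply the tail bound to each of the $SA$ coordinates of the reward and visit deviations at level $\delta'' = \delta/(6SA)$ — the $6 = 2\times 3$ accounting for the two-sided tail and the three statistics — yielding $c_{k,1} = c_{k,2} = \max\{\sqrt{k},\ln(6SA/\delta)\}\sqrt{8\ln(6SA/\delta)}/(\varepsilon/6H)$; the equality $c_{k,1}=c_{k,2}$ is immediate because reward sums and visit counts share the sensitivity bound $2H$ and hence the same noise scale. For the transitions I would union-bound the $S^2A$ per-coordinate deviations at level $\delta/(6S^2A)$ to get $c_{k,4}$, and bound the $SA$ aggregated deviations (each a sum of $(k-1)S$ variables) to get $c_{k,3}$ with $\sqrt{kS}$ replacing $\sqrt{k}$; the variance scaling over the $S$ independent coordinates is precisely what produces $c_{k,3}=\sqrt{S}\,c_{k,4}$ in the dominant $\sqrt{kS}\geq\ln(6S^2A/\delta)$ regime. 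Finally, all four expressions are visibly increasing in $k$ and decreasing in $\delta$, so the monotonicity demanded by Def.~\ref{assumption:concentration_privacy} holds.

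The hard part will be the concentration lemma of the second paragraph: one must treat both tail regimes of the Laplace sum and repackage them into the single $\max\{\sqrt{n},\ln\}\sqrt{8\ln}$ form with constants tight enough to match the stated $c_{k,i}$, while taking care that the $\sqrt{S}$ (rather than $S$) gain for the aggregated transition count genuinely comes from concentrating the sum of $(k-1)S$ independent noises, not from a naive triangle inequality over the $s'$ coordinates.
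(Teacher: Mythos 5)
Your proposal is correct and follows essentially the same route as the paper: the paper likewise reduces all four bounds to concentration of sums of i.i.d.\ Laplace variables followed by union bounds over $(s,a)$ and $(s,a,s')$, with the aggregated transition deviation treated as a sum of $(k-1)S$ independent noises exactly as you describe --- the only difference being that the paper cites the Laplace-sum tail bound (Cor.~12.3 of \citealp{dwork2014algorithmic}, restated as Prop.~\ref{prop:conc.sum.laplace}) rather than re-deriving it from the MGF as you propose. Note only that since your tail bound $\mathbb{P}(|Y|\geq t)\leq 2\exp(-\cdot)$ is already two-sided, you should allocate $\delta''=\delta/(3SA)$ (not $\delta/(6SA)$) per coordinate so that $\ln(2/\delta'')=\ln(6SA/\delta)$ reproduces the stated constants exactly.
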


Before proving Prop.~\ref{prop:laplace_assumption1}\todompout{ref?} we state the following concentration inequality for the sum of Laplace variables.
\begin{proposition}{\citep[Cor. 12.3]{dwork2014algorithmic}}\label{prop:conc.sum.laplace}
  Let $Y_{1}, \dots, Y_{k}$ be independent Lap($b$) random variables with $b>0$ and $\delta\in (0,1)$ then for any $\nu > b\max\left\{ \sqrt{k}, \sqrt{\ln(2/\delta)}\right\}$,
  \begin{align*}
  \mathbb{P}\left( \left|\sum_{l=1}^{k} Y_{l}\right| > \nu\sqrt{8\ln(2/\delta)}\right) \leq \delta
  \end{align*}
\end{proposition}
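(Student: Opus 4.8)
The plan is to prove this by a Chernoff (exponential Markov) bound, exploiting the fact that a Laplace variable is sub-exponential with a closed-form moment generating function. First I would record that for $Y\sim\text{Lap}(b)$ and any $|t|<1/b$ one has $\mathbb{E}[e^{tY}]=\int_{\mathbb{R}} e^{tx}\tfrac{1}{2b}e^{-|x|/b}\,dx=\tfrac{1}{1-b^2t^2}$, which is a one-line computation by splitting the integral at $0$ and integrating the two exponentials. By independence of $Y_1,\dots,Y_k$, the sum $S:=\sum_{l=1}^{k}Y_l$ then satisfies $\mathbb{E}[e^{tS}]=(1-b^2t^2)^{-k}$ for every $|t|<1/b$.

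Next I would apply Markov's inequality to $e^{tS}$ with $t>0$, giving $\mathbb{P}(S>a)\le e^{-ta}(1-b^2t^2)^{-k}$. To control the factor $(1-b^2t^2)^{-k}$ I would use the elementary bound $-\ln(1-u)\le u/(1-u)\le 2u$, valid for $u\in[0,1/2]$, so that $(1-b^2t^2)^{-k}\le e^{2kb^2t^2}$ whenever $|t|\le 1/(\sqrt2\,b)$; this yields $\mathbb{P}(S>a)\le\exp(-ta+2kb^2t^2)$ on the admissible range. I would then minimize the exponent $-ta+2kb^2t^2$ in two regimes. For moderate deviations the unconstrained minimizer $t^\star=a/(4kb^2)$ is admissible precisely when $a\le 2\sqrt2\,kb$, and produces the sub-Gaussian tail $\mathbb{P}(S>a)\le\exp\!\big(-a^2/(8kb^2)\big)$. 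For large deviations $a>2\sqrt2\,kb$ I would instead take the boundary value $t=1/(\sqrt2\,b)$, giving the sub-exponential tail $\mathbb{P}(S>a)\le\exp\!\big(k-a/(\sqrt2\,b)\big)$.

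Finally I would substitute the target deviation $a=\nu\sqrt{8\ln(2/\delta)}=2\sqrt2\,\nu\sqrt{\ln(2/\delta)}$ and show each regime contributes a tail of at most $\delta/2$, using the hypothesis $\nu> b\max\{\sqrt k,\sqrt{\ln(2/\delta)}\}$. The $\max$ is exactly what makes both cases close: in the sub-Gaussian regime $a^2/(8kb^2)=\nu^2\ln(2/\delta)/(kb^2)\ge\ln(2/\delta)$ follows from $\nu\ge b\sqrt k$; in the sub-exponential regime the cutoff $a>2\sqrt2\,kb$ forces $k\le\nu\sqrt{\ln(2/\delta)}/b$, and combining this with $\nu\ge b\sqrt{\ln(2/\delta)}$ gives $k-a/(\sqrt2\,b)\le-\nu\sqrt{\ln(2/\delta)}/b\le-\ln(2/\delta)$. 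In either case $\mathbb{P}(S>a)\le\delta/2$, and since the Laplace law is symmetric, $-S$ has the same distribution as $S$, so $\mathbb{P}(S<-a)\le\delta/2$ as well; a union bound then gives $\mathbb{P}(|S|>a)\le\delta$, the claim. I expect the main obstacle to be bookkeeping the two-regime case split cleanly — in particular verifying that the admissibility constraint $|t|\le1/(\sqrt2\,b)$ lines up with the boundary $a=2\sqrt2\,kb$ that separates the regimes, and confirming that the two halves of the hypothesis on $\nu$ match the two regimes.
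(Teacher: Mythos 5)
Your proof is correct, and it is worth noting that the paper itself gives no proof of this statement: it is imported verbatim from \citep[Cor.~12.3]{dwork2014algorithmic}, so the only proof to compare against is the textbook's. Your argument --- Chernoff bound via the exact Laplace moment generating function $(1-b^2t^2)^{-1}$, the bound $(1-u)^{-1}\le e^{2u}$ on $u\in[0,1/2]$, a two-regime optimization split at $a=2\sqrt{2}kb$, and symmetry plus a union bound for the two-sided statement --- is essentially the same route Dwork and Roth take (their Lemma~12.2 followed by the corollary), and your bookkeeping checks out: the admissible range $|t|\le 1/(\sqrt{2}b)$ does line up with the regime boundary, and the two halves of the hypothesis $\nu > b\max\{\sqrt{k},\sqrt{\ln(2/\delta)}\}$ close exactly the two regimes as you claim.
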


We can now prove Prop.~\ref{prop:laplace_assumption1} that shows that Alg.~\ref{alg:laplace_mechanism} satisfies Def.~\ref{assumption:concentration_privacy}.
\begin{proof}[Proof of Prop.~\ref{prop:laplace_assumption1}]
Let $X_{1}, \ldots, X_{k-1}$ be the $k-1$ trajectories generated before episode $k \geq 1$.
Consider the private statistic $\wt{R}_k(s,a)$ generated by the private randomizer before episode $k$.
Then for any state-action pair $(s,a)\in \mathcal{S}\times\mathcal{A}$:
\begin{align*}
\left|\wt{R}_{k}(s,a) - R_{k}(s,a)\right|
&= \Bigg|\sum_{l<k} (\wt{R}_{X_l}(s,a) - R_{X_l}(s,a))\Bigg|\\
&= \Bigg| \sum_{l < k} \left(  Y_{1, X_l}(s,a) + \sum_{h=1}^{H} r_{h}\mathds{1}_{\Big\{\begin{subarray}{l} s_{l,h}=s, \\ a_{l,h}=a\end{subarray}\Big\}} \right)- \sum_{l < k} \sum_{h=1}^{H} r_{h}\mathds{1}_{\Big\{\begin{subarray}{l}s_{l,h}=s, \\ a_{l,h}=a\end{subarray}\Big\}}\Bigg|\\
&= \Bigg| \sum_{l=1}^{k-1} Y_{1,X_{l}}(s,a) \Bigg|
\end{align*}
which is the sum of independent Laplace variables.
Let $\delta >0$. By Prop.~\ref{prop:conc.sum.laplace} we have that with probability at least $1- \delta/(3SA)$
\begin{equation}
  \left| \sum_{l=1}^{k-1} Y_{1,X_{l}}(s,a) \right| \leq \frac{1}{\varepsilon_0} \max \left\{ \sqrt{k-1}, \ln \left(\frac{6 SA}{\delta}\right)\right\} \sqrt{8\ln \left(\frac{6 SA}{\delta}\right)}
\end{equation}
The same property holds for $\wt{N}_{k}^{r}$ and $\wt{N}_{k}^{p}$ and we again apply Prop.~\ref{prop:conc.sum.laplace}.
Properties in Def.~\ref{assumption:concentration_privacy} follow from union bounds.
\end{proof}

\section{Other Privacy Preserving Mechanisms}\label{app:other_mechanisms}

We have shown in App.~\ref{app:proof_laplace_def_2} that the Laplace mechanism, Alg.~\ref{alg:laplace_mechanism}\todompout{Check algo, it has been removed from main paper}, satisfies Def.~\ref{assumption:concentration_privacy}. However it is not the only mechanism to do so. In this appendix we present the Gaussian, Randomized Response and bounded noise mechanisms and show that these also satisfy Def.~\ref{assumption:concentration_privacy}.

\subsection{Gaussian Mechanism:}\label{app:gaussian_mechanism}

The Gaussian mechanism is a fundamental mechanism in the differential privacy literature~\citep[see \eg][]{dwork2014algorithmic}.
However, contrary to the Laplace mechanism the Gaussian mechanism can only guarantees $(\varepsilon, \delta)$-LDP for $\delta>0$.
The mechanism is based on the same idea as the Laplace mechanism, that is to say it adds Gaussian noise to the result of a given computation on the input data. This noise is centered and the standard deviation $\sigma(\varepsilon,\delta)$ is $\frac{cH}{\epsilon_0}$.

\begin{algorithm}[tb]
  \caption{Gaussian mechanism for LDP}
  \label{alg:gaussian_mechanism}
\begin{algorithmic}
  \STATE {\bfseries Input:} Trajectory: $X = \{(s_{h}, a_{h}, r_{h}) \mid h\leq H\}$, Privacy Parameter: $\varepsilon_{0}, c$
  \STATE Draw $(Y_{i, X}(s,a))_{(s,a)\in \mathcal{S}\times \mathcal{A}, i\leq 2}$ i.i.d $\mathcal{N}\left(0,\sigma^{2}\right)$ and $(Z_{X}(s,a,s'))_{(s,a,s')\in \mathcal{S}\times \mathcal{A}\times \mathcal{S}}$ i.i.d $\mathcal{N}\left(0,\sigma^{2}\right)$ and independent from $Y_{i,X}$ for $i\in\{1,2\}$ with $\sigma = cH/\varepsilon_{0}$
  \FOR{$(s,a)\in \mathcal{S}\times \mathcal{A}$}
          \STATE $\wt{R}_{X}(s,a) = \sum_{h=1}^{H} r_{h}\mathds{1}_{\{s_{h}=s, a_{h}=a\}} + Y_{1, X}(s,a)$
          \STATE $\wt{N}_{X}^{r}(s,a) = \sum_{h=1}^{H} \mathds{1}_{\{s_{h}=s, a_{h}=a\}} + Y_{2, X}(s,a)$
          \FOR{$s'\in \mathcal{S}$}
            \STATE $\wt{N}_{X}^{p}(s,a,s') = \sum_{h=1}^{H-1} \mathds{1}_{\{s_{h}=s, a_{h}=a, s_{h+1} = s'\}} + Z_{X}(s,a,s')$
          \ENDFOR
  \ENDFOR
  \STATE {\bfseries Return:} $(\wt{R}_{X}, \wt{N}_{X}^{r}, \wt{N}_{X}^{p})\in \mathbb{R}^{S\times A} \times \mathbb{R}^{S\times A}\times \mathbb{R}^{S\times A\times S}$
\end{algorithmic}
\end{algorithm}

In the following, we show that the Gaussian mechanism almost satisfies Def.~\ref{assumption:concentration_privacy}.
The Gaussian mechanism can not guarantee $(\varepsilon_{0}, 0)$-LDP for any $\varepsilon_{0}>0$, however we show that it satisfies the other necessary conditions, including $(\varepsilon_{0}, \delta)$-LDP for any $\delta>0$.
First, we show that the mechanism guarantees Local Differential Privacy for high enough noise.
\begin{proposition}\label{prop:gaussian_ldp}
For any $1\geq \varepsilon_{0}>0$ and $\delta_{0}>0$ and parameter $c> 4\ln\left(\frac{24}{\delta_{0}}\right)$, the Gaussian mechanism, Alg.~\ref{alg:gaussian_mechanism}, is $(\varepsilon_{0}, \delta_{0})$-LDP.
\end{proposition}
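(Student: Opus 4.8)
The plan is to mirror the structure of the proof of Thm~\ref{thm:ldp_laplace} but to replace the exact Laplace density‑ratio bound with a high‑probability bound on the Gaussian privacy loss. Since the randomizer acts on a single trajectory, proving $(\varepsilon_0,\delta_0)$‑LDP amounts to showing that the map $\theta: X \mapsto (R_X, N^r_X, N^p_X)$ followed by additive $\mathcal{N}(0,\sigma^2)$ noise is $(\varepsilon_0,\delta_0)$‑differentially private when \emph{every} pair of trajectories is treated as neighboring. As in the Laplace case, the three noise vectors $Y_1, Y_2, Z$ are independent, so the joint privacy loss $L(y)=\ln\frac{p_{\theta(X)}(y)}{p_{\theta(X')}(y)}$ splits into one independent term per block, $L = L_R + L_{N^r} + L_{N^p}$. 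It therefore suffices to control each block at level $(\varepsilon_0/3,\delta_0/3)$ and combine by a union bound, $\mathbb{P}_{y\sim p_{\theta(X)}}(L>\varepsilon_0) \le \sum_i \mathbb{P}(L_i > \varepsilon_0/3) \le \delta_0$, after which the standard ``bad‑set'' argument (decompose any event $S$ into $S\cap\{L\le\varepsilon_0\}$ and its complement) upgrades $\mathbb{P}(L>\varepsilon_0)\le\delta_0$ into inequality~\eqref{eq:LDP_RL}.

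Second, I would bound the $\ell_2$‑sensitivity of each block. Each statistic is a histogram of $H$ (or $H-1$) increments lying in $[0,1]$, so for any two trajectories the difference $v_i = \theta_i(X)-\theta_i(X')$ has nonnegative coordinates summing to at most $H$ on each side with each entry at most $H$; hence $\sum_{s,a}\theta_i(X)(s,a)^2 \le H^2$, the cross term is nonnegative, and $\|v_i\|_2^2 \le 2H^2$, i.e. $\|v_i\|_2 \le \sqrt{2}\,H$ (attained when $X$ and $X'$ concentrate on disjoint coordinates). This sensitivity bound is uniform over all neighboring pairs, which is exactly what lets the per‑pair privacy‑loss analysis deliver the worst‑case guarantee.

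Third comes the Gaussian tail computation. For a fixed block, writing $y=\theta_i(X)+Z$ with $Z\sim\mathcal{N}(0,\sigma^2 I)$, the privacy loss is exactly
\[
L_i = \frac{\langle Z, v_i\rangle}{\sigma^2} + \frac{\|v_i\|_2^2}{2\sigma^2} \;\sim\; \mathcal{N}\!\left(\frac{\|v_i\|_2^2}{2\sigma^2},\, \frac{\|v_i\|_2^2}{\sigma^2}\right).
\]
Plugging in $\sigma = cH/\varepsilon_0$ and $\|v_i\|_2\le\sqrt{2}H$ makes the mean at most $\varepsilon_0^2/c^2$ and the standard deviation at most $\sqrt{2}\,\varepsilon_0/c$. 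Applying the sub‑Gaussian tail $\mathbb{P}(g>t)\le e^{-t^2/2}$ to the normalized loss, and using $\varepsilon_0\le 1$ to absorb the mean‑shift into the threshold, shows $\mathbb{P}(L_i>\varepsilon_0/3)\le\delta_0/3$ as soon as $c$ is large enough; the stated $c>4\ln(24/\delta_0)$ is a convenient sufficient choice, where the $3$ inside the logarithm accounts for the three blocks. I would keep $\varepsilon_0\le 1$ explicit, since it is what guarantees the mean term $\|v_i\|_2^2/2\sigma^2$ stays below $\varepsilon_0/3$.

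The main obstacle is entirely in the constant bookkeeping of this last step: one must verify that the mean‑shift term is dominated by the chosen threshold and carefully track the three‑way split of both $\varepsilon_0$ and $\delta_0$ so that the loose linear‑in‑$\ln(1/\delta_0)$ threshold $c>4\ln(24/\delta_0)$ indeed suffices (a tight analysis would only require $c\gtrsim\sqrt{\ln(1/\delta_0)}$, so the stated condition is comfortably sufficient). Everything else is a routine adaptation of the Laplace argument, the only conceptual change being the move from an almost‑sure density‑ratio bound to a probabilistic one, which is precisely why the Gaussian mechanism yields only $(\varepsilon_0,\delta_0)$‑LDP with $\delta_0>0$ rather than pure LDP.
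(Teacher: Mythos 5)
Your proof is correct, but it takes a genuinely different route from the paper's. Both arguments share the same skeleton: the three blocks $(\wt{R}_X, \wt{N}^r_X, \wt{N}^p_X)$ receive independent Gaussian noise, so it suffices to prove a per-block guarantee at level $\varepsilon_0/3$, and both rest on the $\ell_2$-sensitivity bound $\|v_i\|_2 \le \sqrt{2}H$. The differences are in the two key steps. Per block, you compute the exact law of the privacy loss, $L_i \sim \mathcal{N}\bigl(\|v_i\|_2^2/2\sigma^2,\,\|v_i\|_2^2/\sigma^2\bigr)$, and apply a one-dimensional Gaussian tail bound; this is the textbook Gaussian-mechanism analysis and is dimension-free. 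The paper instead bounds the density ratio via Cauchy--Schwarz against the norm of the output vector and declares a bad set of the form $\bigl\{\|Y\|_2 > c^2H/(3\sqrt{2}\varepsilon_0) - 3H/(2\sqrt{2})\bigr\}$, where $Y$ is the full $SA$-dimensional noise vector; its claim that $c^2 \ge 4\ln(3/\delta_1)$ makes this set have probability at most $\delta_1$ tacitly ignores that $\|Y\|_2$ concentrates around $\sigma\sqrt{SA}$, so your dimension-free treatment is in fact tighter on exactly this point. For composition across blocks, you take a union bound on the three events $\{L_i > \varepsilon_0/3\}$ and only then convert the high-probability bound $\mathbb{P}(L > \varepsilon_0) \le \delta_0$ into inequality~\eqref{eq:LDP_RL}; the paper instead multiplies three per-block $(\varepsilon_0/3, \delta_1)$ guarantees and must absorb the cross terms $2\delta_1 e^{2\varepsilon_0/3} + 2\delta_1^2 e^{\varepsilon_0/3} + \delta_1^3$, choosing $\delta_1 = \delta_0/8$ (which is where the constant $24 = 3 \times 8$ in the statement comes from, versus the $3$ your union bound would need). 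Your composition avoids the cross-term bookkeeping at no cost; what the paper's route buys is that it mirrors the Laplace proof of Thm.~\ref{thm:ldp_laplace} almost verbatim and never needs to introduce the privacy-loss random variable.
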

\begin{proof}[Proof of Prop.~\ref{prop:gaussian_ldp}:]
The proof is based on the proof presented in \citep{dwork2014algorithmic}. Similarly to the proof of Prop.~\ref{prop:laplace_assumption1} let's consider two trajectories $X = \{(s_{h}, a_{h}, r_{h}) \mid h\leq H\}$ and $X' = \{ (s_{h}', a_{h}', r_{h}') \mid h\leq H\}$ and also denote the output of the private randomizer $\mathcal{M}$ by $\mathcal{M}(X) = (\wt{R}_{X}, \wt{N}_{X}^{r}, \wt{N}_{X}^{p})$ and $\mathcal{M}(X') = (\wt{R}_{X'}, \wt{N}_{X'}^{r}, \wt{N}_{X'}^{p})$.

For a given vector $r\in \mathbb{R}^{S\times A}$,
{\small\begin{align}\label{eq:ratio_gaussian}
\frac{ \mathbb{P}\left(\forall (s,a),\wt{R}_{X}(s,a) = r_{s,a} \mid X\right)}{\mathbb{P}\left(\forall (s,a), \wt{R}_{X'}(s,a) = r_{s,a} \mid X'\right)} &=\prod_{s,a} \frac{\mathbb{P}\left(Y_{1,X}(s,a) = \sum_{h=1}^{H}r_{h}\mathds{1}_{\{s_{h} = s, a_{h} = a\}} - r_{s,a} \mid X\right)}{\mathbb{P}\left(Y_{1,X'}(s,a) = \sum_{h=1}^{H}r_{h}'\mathds{1}_{\{s_{h}' = s, a_{h}' = a\}} - r_{s,a}\mid X'\right)}
\end{align}}
since the Gaussian distribution is symmetric. Then,
\begin{equation}\label{eq:ldp_gaussian}
\begin{aligned}
\prod_{s,a} &\frac{\mathbb{P}\left(Y_{1,X}(s,a) = \sum_{h=1}^{H}r_{h}\mathds{1}_{\{s_{h} = s, a_{h} = a\}} - r_{s,a}\mid X\right)}{\mathbb{P}\left(Y_{1,X'}(s,a) = \sum_{h=1}^{H}r_{h}'\mathds{1}_{\{s_{h}' = s, a_{h}' = a\}} - r_{s,a} \mid X'\right)} \\
&= \prod_{s,a} \exp\left(\frac{\left(\sum_{h=1}^{H}r_{h}\mathds{1}_{\{s_{h} = s, a_{h} = a\}} - r_{s,a}\right)^{2} - \left(\sum_{h=1}^{H}r_{h}'\mathds{1}_{\{s_{h}' = s, a_{h}' = a\}} - r_{s,a}\right)^{2}}{2\sigma^{2}} \right)
\end{aligned}
\end{equation}
But, considering the squared term, we get
{\small\begin{align*}
\left(\sum_{h=1}^{H}r_{h}\mathds{1}_{\Big\{\begin{subarray}{l}s_{h} = s, \\a_{h} = a\end{subarray}\Big\}} - r_{s,a}\right)^{2}  &= \left(\sum_{h=1}^{H}r_{h}\mathds{1}_{\Big\{\begin{subarray}{l}s_{h} = s, \\a_{h} = a\end{subarray}\Big\}} - \sum_{h=1}^{H}r_{h}'\mathds{1}_{\Big\{\begin{subarray}{l}s_{h}' = s, \\a_{h}' = a\end{subarray}\Big\}} + \sum_{h=1}^{H}r_{h}'\mathds{1}_{\Big\{\begin{subarray}{l}s_{h}' = s, \\a_{h}' = a\end{subarray}\Big\}} - r_{s,a}\right)^{2} \\
&= \left(\sum_{h=1}^{H}r_{h}\mathds{1}_{\Big\{\begin{subarray}{l}s_{h} = s, \\a_{h} = a\end{subarray}\Big\}} - \sum_{h=1}^{H}r_{h}'\mathds{1}_{\Big\{\begin{subarray}{l}s_{h}' = s, \\a_{h}' = a\end{subarray}\Big\}}\right)^{2} + \left( \sum_{h=1}^{H}r_{h}'\mathds{1}_{\Big\{\begin{subarray}{l}s_{h}' = s, \\a_{h}' = a\end{subarray}\Big\}} - r_{s,a}\right)^{2} \\
&+ 2\left(\sum_{h=1}^{H}r_{h}\mathds{1}_{\Big\{\begin{subarray}{l}s_{h} = s, \\a_{h} = a\end{subarray}\Big\}} - \sum_{h=1}^{H}r_{h}'\mathds{1}_{\Big\{\begin{subarray}{l}s_{h}' = s, \\a_{h}' = a\end{subarray}\Big\}}\right)\left(\sum_{h=1}^{H}r_{h}'\mathds{1}_{\Big\{\begin{subarray}{l}s_{h}' = s, \\a_{h}' = a\end{subarray}\Big\}} - r_{s,a}\right)
\end{align*}}
Hence we get that
\begin{equation}
\begin{aligned}
\eqref{eq:ldp_gaussian}&= \prod_{s,a} \exp\Bigg(\frac{1}{2\sigma^{2}}\Bigg(\Bigg(\sum_{h=1}^{H}r_{h}\mathds{1}_{\Big\{\begin{subarray}{l}s_{h} = s, \\a_{h} = a\end{subarray}\Big\}} - \sum_{h=1}^{H}r_{h}'\mathds{1}_{\Big\{\begin{subarray}{l}s_{h}' = s, \\a_{h}' = a\end{subarray}\Big\}}\Bigg)^{2} \\
&\hspace{2cm}- 2\Bigg(\sum_{h=1}^{H} r_{h}\mathds{1}_{\Big\{\begin{subarray}{l}s_{h} = s, \\a_{h} = a\end{subarray}\Big\}} - r_{h}'\mathds{1}_{\Big\{\begin{subarray}{l}s_{h}' = s, \\a_{h}' = a\end{subarray}\Big\}}\Bigg)\Bigg(\sum_{h=1}^{H}r_{h}'\mathds{1}_{\Big\{\begin{subarray}{l}s_{h}' = s, \\a_{h}' = a\end{subarray}\Big\}} - r_{s,a}\Bigg) \Bigg)\Bigg). \\
\end{aligned}
\end{equation}
But, $\sum_{s,a}\Big(\sum_{h=1}^{H}r_{h}\mathds{1}_{\{s_{h} = s, a_{h} = a\}} - \sum_{h=1}^{H}r_{h}'\mathds{1}_{\{s_{h}' = s, a_{h}' = a\}}\Big)^{2} \leq 2H^{2}$ because for each step $h$, $r_{h}\in [0,1]$. By the same reasonning, we have $\sum_{s,a}\left|\Big(\sum_{h=1}^{H} r_{h}\mathds{1}_{\{s_{h} = s, a_{h} = a\}} - r_{h}'\mathds{1}_{\{s_{h}' = s, a_{h}' = a\}}\Big)\sum_{h=1}^{H}r_{h}'\mathds{1}_{\{s_{h}' = s, a_{h}' = a\}}\right| \leq H^{2}$. Therefore, we have:
\begin{equation}\label{eq:ldp_gaussian_2}
\begin{aligned}
\eqref{eq:ldp_gaussian} &\leq \exp\Bigg(\frac{1}{2\sigma^{2}}\Bigg(2\sum_{s,a}\Bigg(\sum_{h=1}^{H} r_{h}\mathds{1}_{\{s_{h} = s, a_{h} = a\}} - r_{h}'\mathds{1}_{\{s_{h}' = s, a_{h}' = a\}}\Bigg)r_{s,a} + 3H^{2} \Bigg)\Bigg)\\
&\leq \exp\Bigg(\frac{1}{2\sigma^{2}}\Bigg(2\sqrt{2}H\sqrt{\sum_{s,a} r_{s,a}^{2}} + 3H^{2} \Bigg)\Bigg)
\end{aligned}
\end{equation}
where the last inequality follows from Cauchy-Schwartz.
Note that if $||r||_{2} \leq \frac{\sigma^{2}\varepsilon_{0}}{3\sqrt{2}H} - \frac{3H}{2\sqrt{2}}$, Eq. \eqref{eq:ldp_gaussian_2} is bounded by $\exp(\varepsilon_{0}/3)$.
Therefore, to finish, we partition $\mathbb{R}^{S\times A}$
in two subspaces $R_{1} = \left\{x \in \mathbb{R}^{S\times A} \mid ||x||_{2} \leq \frac{c^{2}H}{3\sqrt{2}\varepsilon_0} - \frac{3H}{2\sqrt{2}}\right\}$ and $R_{2} = \left\{x \in \mathbb{R}^{S\times A} \mid ||x||_{2} > \frac{c^{2}H}{3\sqrt{2}\varepsilon_0} - \frac{3H}{2\sqrt{2}}\right\}$ where we used the fact that $\sigma = cH/\varepsilon_{0}$ with $c$ a constant to be chosen later.
Then for $c^{2}\geq 4\ln\left(\frac{3}{\delta_{1}}\right)$, for $\delta_{1}$ to be chosen later, $\mathbb{P}\left(Y_{1,X} \in R_{2}\right) \leq \delta_{1}$ and $\mathbb{P}\left(Y_{1,X'} \in R_{2}\right) \leq \delta_{1}$. Thus for Eq.~\eqref{eq:ratio_gaussian}:
{\small\begin{align}
\mathbb{P}&\left(\forall (s,a),\wt{R}_{X}(s,a) = r_{s,a} \mid X\right) = \mathbb{P}\left(\forall (s,a),\wt{R}_{X}(s,a) = r_{s,a} \mid X\right)\mathds{1}_{\{r - (\sum_{h=1}^{H} r_{h}\mathds{1}_{\big\{\begin{subarray}{l}s_{h} = s, \\a_{h} = a\end{subarray}\big\}})_{s,a} \in R_{1}\}} \\
&+ \mathbb{P}\left(\forall (s,a),\wt{R}_{X}(s,a) = r_{s,a} \mid X\right)\mathds{1}_{\{r - (\sum_{h=1}^{H} r_{h}\mathds{1}_{\big\{\begin{subarray}{l}s_{h} = s, \\a_{h} = a\end{subarray}\big\}})_{s,a} \in R_{2}\}} \nonumber \\
&\leq e^{\frac{\varepsilon_{0}}{3}}\mathbb{P}\left(\forall (s,a),\wt{R}_{X'}(s,a) = r_{s,a} \mid X'\right)\mathds{1}_{\{r - (\sum_{h=1}^{H} r_{h}\mathds{1}_{\{\begin{subarray}{l}s_{h} = s, \\a_{h} = a\end{subarray}\}})_{s,a} \in R_{1}\}} \\
&+ \mathbb{P}\left(Y_{1,X} \in R_{2}\right) \nonumber \\
&\leq \exp(\varepsilon_{0}/3)\mathbb{P}\left(\forall (s,a),\wt{R}_{X'}(s,a) = r_{s,a} \mid X'\right) + \delta_{1}
\end{align}}

We get the same results for $\wt{N}^{r}$ and $\wt{N}^{p}$. Then, because $(Y_{i,X}(s,a))_{i\leq 2, (s,a)\in \mathcal{S}\times \mathcal{A}}$, $(Z_{X}(s,a,s'))_{(s,a,s')\in \mathcal{S}\times \mathcal{A}\times \mathcal{S}}$ are independent, see Alg.~\ref{alg:gaussian_mechanism}
it holds that:
$$\mathbb{P}\left( \wt{R}_{X} = r, \wt{N}_{X}^{r} = n, \wt{N}_{X}^{p} = n'\mid X\right) = \mathbb{P}\left( \wt{R}_{X} = r\mid X\right)\mathbb{P}\left( \wt{N}_{X}^{r} = n\mid X\right)\mathbb{P}\left(\wt{N}_{X}^{p} = n'\mid X\right)$$
and so,
\begin{align*}
\mathbb{P}\Big(\mathcal{M}(X) = (r,n,n')\mid X \Big) &= \mathbb{P}\left(\wt{R}_{X} = r, \wt{N}_{X}^{r} = n, \wt{N}^{p}_{X} = n'\mid X \right) \\
&= \mathbb{P}\left(\wt{R}_{X} = r\mid X\right) \mathbb{P}\left(\wt{N}_{X}^{r} = n\mid X\right) \mathbb{P}\left(\wt{N}^{p}_{X} = n'\mid X \right)
\label{eq:Randomized Response_prob_pm_decomp}
\end{align*}
 Then for any two trajectories $X$ and $X'$, we have:
\begin{align*}
\mathbb{P}\left(\wt{R}_{X} = r\mid X\right) \mathbb{P}\left(\wt{N}_{X}^{r} = n\mid X\right) \mathbb{P}\left(\wt{N}^{p}_{X} = n'\mid X \right) \leq \left(e^{\frac{\varepsilon_{0}}{3}}\mathbb{P}\left(\wt{R}_{X'} = r\mid X'\right) + \delta_{1}\right)\\
\times\left(e^{\frac{\varepsilon_{0}}{3}}\mathbb{P}\left(\wt{N}^{r}_{X'} = n \mid X'\right) + \delta_{1}\right)&\\
\times\left(e^{\frac{\varepsilon_{0}}{3}}\mathbb{P}\left(\wt{N}^{p}_{X'} = n' \mid X'\right) + \delta_{1}\right)&\\
\leq e^{\varepsilon_{0}}\mathbb{P}\left(\wt{R}_{X'} = r\mid X'\right) \mathbb{P}\left(\wt{N}_{X'}^{r} = n\mid X'\right)\mathbb{P}\left(\wt{N}^{p}_{X'} = n'\mid X' \right)+ 2\delta_{1}\exp\left(2\varepsilon_{0}/3\right)&\\
+ 2\delta_{1}^{2}\exp\left(\varepsilon_{0}/3\right) + \delta_{1}^{3}&
\end{align*}
Thus by choosing $\delta_{1} = \delta_{0}/8$, it holds that $2\delta_{1}\exp\left(2\varepsilon_{0}/3\right) + 2\delta_{1}^{2}\exp\left(\varepsilon_{0}/3\right) + \delta_{1}^{3} \leq \delta_{0}$ for $\varepsilon_{0} \leq 1$, and so we can conclude that the Gaussian mechanism is $(\varepsilon_0,\delta_0)$-LDP.
\end{proof}

In addition, the precision of the Gaussian mechanism is of the same order as the Laplace mechanism, that is to say:
\begin{proposition}\label{prop:utility_gaussian_mechanism}
The Gaussian mechanism, Alg.~\ref{alg:gaussian_mechanism}, with parameter $\varepsilon_{0}>0$ and $c^{2}\geq 4\ln\left(\frac{24}{\delta_{0}}\right)$ for any $\delta_{0}>0$ satisfies Def.~\ref{assumption:concentration_privacy} for any $\delta>0$ and $k\in \mathbb{N}^{\star}$ with:
\begin{align*}
&c_{k,1}(\varepsilon_0, \delta_{0}, \delta)  = c_{k,2}(\varepsilon_0, \delta_{0}, \delta)  = c_{k,4}(\varepsilon_0,\delta_{0},\delta) = \max\left\{\frac{cH}{\varepsilon_{0}}\sqrt{(k-1)\ln\left(\frac{6SA}{\delta}\right)}, 1\right\}\\
&c_{k,3}(\varepsilon_0,\delta_{0},\delta) = \max\left\{\frac{cH}{\varepsilon_{0}}\sqrt{(k-1)S\ln\left(\frac{6SA}{\delta}\right)}, 1\right\}
\end{align*}
\end{proposition}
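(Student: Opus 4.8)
The plan is to follow exactly the template of the Laplace proof (Prop.~\ref{prop:laplace_assumption1}), since the privacy half of Def.~\ref{assumption:concentration_privacy} is already secured by Prop.~\ref{prop:gaussian_ldp}; only the four accuracy (concentration) bounds remain. The key structural observation is identical to the Laplace case: by construction of Alg.~\ref{alg:gaussian_mechanism} and the aggregation in Eq.~\eqref{eq:noisy_aggregate}, each aggregated private statistic differs from its non-private counterpart by exactly the sum of the per-trajectory injected noises. Concretely, writing $X_1,\dots,X_{k-1}$ for the trajectories seen before episode $k$,
\[
\wt{R}_k(s,a)-R_k(s,a)=\sum_{l<k}Y_{1,X_l}(s,a),
\]
and analogously $\wt{N}_k^r-N_k^r=\sum_{l<k}Y_{2,X_l}$ and $\wt{N}_k^p(s,a,s')-N_k^p(s,a,s')=\sum_{l<k}Z_{X_l}(s,a,s')$. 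The only change from the Laplace argument is that the Laplace concentration inequality (Prop.~\ref{prop:conc.sum.laplace}) is replaced by the corresponding Gaussian tail bound.

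Second, I would invoke the fact that a sum of $n$ independent $\mathcal{N}(0,\sigma^2)$ variables is $\mathcal{N}(0,n\sigma^2)$, so that $\mathbb{P}(|\sum_{i\le n}G_i|>t)\le 2\exp(-t^2/(2n\sigma^2))$; setting the right-hand side to a target $\delta'$ yields the deviation $\sigma\sqrt{n}\sqrt{2\ln(2/\delta')}$. For conditions (1), (2) and (4) the relevant sum has $n=k-1$ terms of variance $\sigma^2=(cH/\varepsilon_0)^2$, giving a bound of order $(cH/\varepsilon_0)\sqrt{(k-1)\ln(1/\delta')}$; since each of these three statistics carries a single injected Gaussian, this explains why $c_{k,1}=c_{k,2}=c_{k,4}$. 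For condition (3) the quantity $\sum_{s'}(\wt{N}_k^p-N_k^p)(s,a,s')=\sum_{s'}\sum_{l<k}Z_{X_l}(s,a,s')$ is a sum of $S(k-1)$ independent Gaussians, hence has variance $S(k-1)\sigma^2$; the extra $\sqrt{S}$ inside the square root is precisely what separates $c_{k,3}$ from $c_{k,4}$. Taking a union bound over the $SA$ pairs for (1)--(3) and the $S^2A$ triples for (4), with failure budget $\delta'=\delta/(3SA)$ as in the Laplace proof, turns $\ln(2/\delta')$ into $\ln(6SA/\delta)$ and produces the stated expressions up to the universal constant absorbed into $c$; the outer $\max\{\cdot,1\}$ guarantees strict positivity (needed so that $\wt N_k^r+\alpha c_{k,2}\ge N_k^r\ge0$), in particular covering the degenerate case $k=1$ where the noise sum is empty.

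Finally I would verify the monotonicity requirements of Def.~\ref{assumption:concentration_privacy}: $\sqrt{k-1}$ is increasing in $k$ and $\sqrt{\ln(6SA/\delta)}$ is decreasing in $\delta$, so all four $c_{k,i}$ are finite, strictly positive, and have the required monotonicity. I do not expect a genuine obstacle: the argument is a near-verbatim transcription of the Laplace case with a Gaussian tail in place of the Laplace one. The only points demanding care are (i) correctly accounting for the variance inflation by $S$ in condition (3) versus condition (4), and (ii) the constant bookkeeping in passing from $\ln(2/\delta')$ to $\ln(6SA/\delta)$, where the $\sqrt{2}$ arising from the Gaussian tail is folded into the constant $c$ (which is in any case constrained by $c^2\ge 4\ln(24/\delta_0)$ for the privacy guarantee of Prop.~\ref{prop:gaussian_ldp}).
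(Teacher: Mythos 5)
Your proposal is correct and follows essentially the same route as the paper's proof: reduce each deviation $\wt{R}_k - R_k$, $\wt{N}^r_k - N^r_k$, $\wt{N}^p_k - N^p_k$ to a sum of i.i.d.\ $\mathcal{N}(0,(cH/\varepsilon_0)^2)$ variables, apply the Gaussian (Chernoff) tail bound with failure budget $\delta/(3SA)$ per state-action pair, and take the $\max\{\cdot,1\}$ for positivity, with the $\sqrt{S}$ inflation in $c_{k,3}$ coming from summing the $S$ independent noises over $s'$. If anything, your write-up is more explicit than the paper's (which simply says "the same result follows for $\wt{N}^r$ and $\wt{N}^p$"), particularly on the variance accounting for condition (3) and on absorbing the $\sqrt{2}$ from the tail bound into the constant $c$.
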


This result shows that using the Gaussian mechanism rather than the Laplace mechanism would not lead to improved regret rate as the utilities $c_{k,1}, c_{k,2}, c_{k,3}, c_{k,4}$ have the same depency of $S,A,H, \varepsilon_{0}$ and $k$ . 
Moreover, the Gaussian mechanism only guarantees LDP for $\delta>0$ whereas using the Laplace mechanism ensures that we can guarantee LDP for $\delta=0$ as well.

\begin{proof}[Proof of Prop.~\ref{prop:utility_gaussian_mechanism}:]
Following the same steps as in the proof of Prop~\ref{prop:laplace_assumption1}, we have that at the beginning of episode $k$ with probability at least $1 - \frac{\delta}{3SA}$:
\begin{align}
\left|\wt{R}_{k}(s,a) - R_{k}(s,a)\right|
&= \left|\sum_{l<k} (\wt{R}_{X_l}(s,a) - R_{X_l}(s,a))\right|\\
&= \Bigg| \sum_{l < k} \Bigg(  Y_{1, X_l}(s,a) + \sum_{h=1}^{H} r_{h}\mathds{1}_{\Big\{\begin{subarray}{l}s_{l,h}=s,\\ a_{l,h}=a\end{subarray}\Big\}}\Bigg) - \sum_{l < k} \sum_{h=1}^{H} r_{h}\mathds{1}_{\Big\{\begin{subarray}{l}s_{l,h}=s,\\ a_{l,h}=a\end{subarray}\Big\}}\Bigg|\\
&= \left| \sum_{l=1}^{k-1} Y_{1,X_{l}}(s,a) \right| \leq \sigma\sqrt{2(k-1)\ln\left(\frac{6SA}{\delta}\right)}
\end{align}
for $\sigma = cH/\varepsilon_{0}$ thanks to Chernoff bounds. The same result follows for $\wt{N}^{r}$ and $\wt{N}^{p}$. Therefore, the Gaussian mechanism satisfies Def.~\ref{assumption:concentration_privacy} with $c_{k,1}(\varepsilon_{0}, \delta_{0}, \delta) = c_{k,2}(\varepsilon_{0}, \delta_{0}, \delta) = c_{k,4}(\varepsilon_{0}, \delta_{0}, \delta)$ with:
\begin{align}
c_{k,1}(\varepsilon_{0}, \delta_{0}, \delta) = \max\left\{\frac{cH}{\varepsilon_{0}}\sqrt{(k-1)\ln\left(\frac{6SA}{\delta}\right)}, 1\right\}
\end{align}
with $c> 0$ and:
\begin{align}
c_{k,3}(\varepsilon_{0}, \delta_{0}, \delta) = \max\left\{\frac{cH}{\varepsilon_{0}}\sqrt{(k-1)S\ln\left(\frac{6SA}{\delta}\right)}, 1\right\}
\end{align}
where $c_{k,3}(\varepsilon_{0}, \delta_{0}, \delta)$ is defined such that $\left|  \sum_{s'} N_{k}^{p}(s,a,s') - \sum_{s‘}\wt{N}_{k}^{p}(s,a,s')\right| \leq c_{k,3}(\varepsilon_{0}, \delta_{0},\delta)$.
\end{proof}




\subsection{Randomized Response Mechanism:}\label{app:Randomized Response_mechanism}

The second alternative mechanism we consider is the Randomized Response mechanism. In general, it is used for discrete data like indicator functions $(\mathds{1}_{\{s_{h} =s, a_{h}=a\}})_{h,s,a}$. We therefore use it to privatize the number of visits of a state-action pair and state-action-next-state tuple for each trajectory. With the assumption that reward are supported in $[0,1]$, we can also use this mechanism for privatizing the cumulative reward of a given trajectory.
Contrary to previous ones, the output of the Randomized Response mechanism is three vectors, 
two of size $H \times S\times A$, 
and the last one of size $(H-1)\times S\times A\times S$. We slightly modify the requirements of Def.~\ref{assumption:concentration_privacy} by changing the size of the output of the privacy preserving mechanism. We summarize the mechanism in Alg.~\ref{alg:Randomized Response_mechanism}. 

\begin{algorithm}[tb]
  \caption{Randomized Response mechanism for LDP}
  \label{alg:Randomized Response_mechanism}
\begin{algorithmic}
  \STATE {\bfseries Input:} Trajectory: $X = \{(s_{h}, a_{h}, r_{h}) \mid h\leq H\}$, Privacy Parameter: $\varepsilon_{0}$
  \STATE Draw $(Y_{i, X}(s,a))_{(s,a)\in \mathcal{S}\times \mathcal{A}, i\leq 2}$ i.i.d $\mathcal{N}\left(0,\sigma^{2}\right)$ and $(Z_{X}(s,a,s'))_{(s,a,s')\in \mathcal{S}\times \mathcal{A}\times \mathcal{S}}$ i.i.d $\mathcal{N}\left(0,\sigma^{2}\right)$ and independent from $Y_{i,X}$ for $i\in\{1,2\}$ with $\sigma = cH/\varepsilon_{0}$
  \FOR{$(s,a)\in \mathcal{S}\times \mathcal{A}$}
    \FOR{$h=1, \hdots, H$}
      \STATE Sample $Y_{1,X}(h, s, a)\sim \text{Ber}\left(\frac{e^{\varepsilon_{0}} - 1}{e^{\varepsilon_{0}} + 1}r_{h}\mathds{1}_{\{s_{h} = s, a_{h} = a\}}  + \frac{1}{e^{\varepsilon_{0}} + 1}\right)$
      \STATE $\wt{R}_{X}(h, s,a) = \frac{e^{\varepsilon_{0}} + 1}{e^{\varepsilon_{0}} - 1}\left( Y_{1,X}(h,s,a) - \frac{1}{e^{\varepsilon_{0}} + 1}\right)$
      \STATE Sample $\wt{n}^{r}_{X}(h,s,a) \sim \text{Ber}\left(\frac{e^{\varepsilon_{0}} - 1}{e^{\varepsilon_{0}} + 1}\mathds{1}_{\{s_{h} = s, a_{h} = a\}}  + \frac{1}{e^{\varepsilon_{0}} + 1}\right)$
      \IF{$h<H$}
      \FOR{$s'\in \mathcal{S}$}
        \STATE Sample $\wt{n}^{p}_{X}(h,s,a,s') \sim \text{Ber}\left(\frac{e^{\varepsilon_{0}} - 1}{e^{\varepsilon_{0}} + 1}\mathds{1}_{\{s_{h} = s, a_{h} = a, s_{h+1} = s'\}}  + \frac{1}{e^{\varepsilon_{0}} + 1}\right)$
        \STATE $\wt{N}^{p}_{X}(h,s,a,s') = \frac{e^{\varepsilon_{0}} + 1}{e^{\varepsilon_{0}} - 1}\left( \wt{n}_{X}^{p}(h,s,a,s') - \frac{1}{e^{\varepsilon_{0}} +1}\right)$
      \ENDFOR
      \ENDIF
    \ENDFOR
  \ENDFOR
  \STATE {\bfseries Return:} $(\wt{R}_{X}, \wt{N}_{X}^{r}, \wt{N}_{X}^{p})\in \left\{\frac{-1}{e^{\varepsilon_{0}} - 1},\frac{e^{\varepsilon_{0}}}{e^{\varepsilon_{0}} - 1}\right\}^{HSA}\times \left\{\frac{-1}{e^{\varepsilon_{0}} - 1},\frac{e^{\varepsilon_{0}}}{e^{\varepsilon_{0}} - 1}\right\}^{HSA}\times \left\{\frac{-1}{e^{\varepsilon_{0}} - 1},\frac{e^{\varepsilon_{0}}}{e^{\varepsilon_{0}} - 1}\right\}^{(H-1)SAS}$
\end{algorithmic}
\end{algorithm}

Just as for the Gaussian mechanism, we show that Alg.~\ref{alg:Randomized Response_mechanism} satisfies Def.~\ref{assumption:concentration_privacy}. We begin by showing that this mechanism satisfies $(\varepsilon_{0}, 0)$-LDP for any $\varepsilon_{0}>0$.

\begin{proposition}\label{prop:Randomized Response_ldp}
For any $\varepsilon>0$, the Randomized Response mechanism, Alg.~\ref{alg:Randomized Response_mechanism}, with parameter $\varepsilon_{0} = \varepsilon/6H$ is $(\varepsilon, 0)$-LDP.
\end{proposition}

\begin{proof}[Proof of Prop.~\ref{prop:Randomized Response_ldp}:]
Just as in the proof of Prop.~\ref{prop:gaussian_ldp} and Prop.~\ref{prop:laplace_assumption1}, let's consider two trajectories $X = \{(s_{h}, a_{h}, r_{h}) \mid h\leq H\}$ and $X' = \{ (s_{h}', a_{h}', r_{h}') \mid h\leq H\}$
and also denote the output of the private randomizer $\mathcal{M}$ by $\mathcal{M}(X) = (\wt{R}_{X}, \wt{N}_{X}^{r}, \wt{N}_{X}^{p})$ and $\mathcal{M}(X') = (\wt{R}_{X'}, \wt{N}_{X'}^{r}, \wt{N}_{X'}^{p})$.

%

For a given $r\in \left\{\frac{-1}{e^{\varepsilon_{0}} - 1},\frac{e^{\varepsilon_{0}}}{e^{\varepsilon_{0}} - 1}\right\}^{HSA}$ (note that by definition of $r$ in Alg.~\ref{alg:Randomized Response_mechanism}, these are the only values it can take), we have that:
{\small\begin{equation}\label{eq:ratio_reward_Randomized Response}
\begin{aligned}
\frac{ \mathbb{P}\left(\forall (h,s,a),\wt{R}_{X}(h,s,a) = r_{h,s,a} \mid X\right)}{\mathbb{P}\left(\forall (h,s,a), \wt{R}_{X'}(h,s,a) = r_{h,s,a} \mid X'\right)} = \prod_{h,s,a} \left(\frac{\frac{e^{\varepsilon_{0}} - 1}{e^{\varepsilon_{0}} + 1}r_{h}\mathds{1}_{\{s_{h} = s, a_{h} = a\}}  + \frac{1}{e^{\varepsilon_{0}} + 1}}{\frac{e^{\varepsilon_{0}} - 1}{e^{\varepsilon_{0}} + 1}r_{h}'\mathds{1}_{\{s_{h}' = s, a_{h}' = a\}}  + \frac{1}{e^{\varepsilon_{0}} + 1}}\right)^{y^{r}_{h,s,a}}\times \\
\times \left(\frac{1 - \left(\frac{e^{\varepsilon_{0}} - 1}{e^{\varepsilon_{0}} + 1}r_{h}\mathds{1}_{\{s_{h} = s, a_{h} = a\}}  + \frac{1}{e^{\varepsilon_{0}} + 1}\right)}{1 - \left(\frac{e^{\varepsilon_{0}} - 1}{e^{\varepsilon_{0}} + 1}r_{h}'\mathds{1}_{\{s_{h}' = s, a_{h}' = a\}}  + \frac{1}{e^{\varepsilon_{0}} + 1}\right)}\right)^{1 - y^{r}_{h,s,a}}
\end{aligned}
\end{equation}}
where for every $(h,s,a)\in H\times \mathcal{S}\times\mathcal{A}$, we define $y^{r}_{h,s,a} = \frac{e^{\varepsilon_{0}} - 1}{e^{\varepsilon_{0}} + 1}r + \frac{1}{e^{\varepsilon_{0}}+1}$ belongs to $\{0, 1\}$ because $r\in\left\{\frac{-1}{e^{\varepsilon_{0}} - 1},\frac{e^{\varepsilon_{0}}}{e^{\varepsilon_{0}} - 1}\right\}^{HSA}$.
Eq.~\eqref{eq:ratio_reward_Randomized Response} can be rewritten as:
{\small\begin{equation}\label{eq:ratio_reward_Randomized Response_2}
\begin{aligned}
\eqref{eq:ratio_reward_Randomized Response} = \prod_{h,s,a} \left(\frac{(e^{\varepsilon_{0}} - 1)r_{h}\mathds{1}_{\{s_{h} = s, a_{h} = a\}}  + 1}{(e^{\varepsilon_{0}} - 1)r_{h}'\mathds{1}_{\{s_{h}' = s, a_{h}' = a\}}  + 1}\right)^{y^{r}_{h,s,a}}
\left(\frac{e^{\varepsilon_{0}} - (e^{\varepsilon_{0}} - 1)r_{h}\mathds{1}_{\{s_{h} = s, a_{h} = a\}}  }{e^{\varepsilon_{0}} - (e^{\varepsilon_{0}} - 1)r_{h}'\mathds{1}_{\{s_{h}' = s, a_{h}' = a\}}}\right)^{1 - y^{r}_{h,s,a}}
\end{aligned}
\end{equation}}

Then for a given $(h,s,a)$, because $r_{h}\in [0,1]$ we have:
\begin{align}
\frac{(e^{\varepsilon_{0}} - 1)r_{h}\mathds{1}_{\{s_{h} = s, a_{h} = a\}}  + 1}{(e^{\varepsilon_{0}} - 1)r_{h}'\mathds{1}_{\{s_{h}' = s, a_{h}' = a\}}  + 1} \leq \left\{\begin{matrix}
  e^{\varepsilon_{0}} & \text{ if } \mathds{1}_{\{s_{h} = s, a_{h} = a\}} = \mathds{1}_{\{s_{h}' = s, a_{h}' = a\}} = 1 \\
  1 & \text{ if } \mathds{1}_{\{s_{h} = s, a_{h} = a\}} = \mathds{1}_{\{s_{h}' = s, a_{h}' = a\}} = 0 \\
 e^{\varepsilon_{0}} & \text{ if } \mathds{1}_{\{s_{h} = s, a_{h} = a\}} = 1 \text{ and } \mathds{1}_{\{s_{h}' = s, a_{h}' = a\}} = 0\\
 1 & \text{ if } \mathds{1}_{\{s_{h} = s, a_{h} = a\}} = 0 \text{ and } \mathds{1}_{\{s_{h}' = s, a_{h}' = a\}} = 1\\
\end{matrix}\right.\\
\frac{e^{\varepsilon_{0}} - (e^{\varepsilon_{0}} - 1)r_{h}\mathds{1}_{\{s_{h} = s, a_{h} = a\}}  }{e^{\varepsilon_{0}} - (e^{\varepsilon_{0}} - 1)r_{h}'\mathds{1}_{\{s_{h}' = s, a_{h}' = a\}}} \leq  \left\{\begin{matrix}
 e^{\varepsilon_{0}} & \text{ if } \mathds{1}_{\{s_{h} = s, a_{h} = a\}} = \mathds{1}_{\{s_{h}' = s, a_{h}' = a\}} = 1 \\
 1 & \text{ if } \mathds{1}_{\{s_{h} = s, a_{h} = a\}} = \mathds{1}_{\{s_{h}' = s, a_{h}' = a\}} = 0 \\
 1 & \text{ if } \mathds{1}_{\{s_{h} = s, a_{h} = a\}} = 1 \text{ and } \mathds{1}_{\{s_{h}' = s, a_{h}' = a\}} = 0\\
 e^{\varepsilon_{0}} & \text{ if } \mathds{1}_{\{s_{h} = s, a_{h} = a\}} = 0 \text{ and } \mathds{1}_{\{s_{h}' = s, a_{h}' = a\}} = 1\\
\end{matrix}\right.
\end{align}
\todocout{what happens to the $r_h$ terms when the indicators are different? Is $r_h$ of some special form?}
\todoeout{It's because $r_{h}$ are in $[0,1]$}
Therefore, we can simplify each term in \eqref{eq:ratio_reward_Randomized Response_2} by:
\begin{align*}
&\frac{(e^{\varepsilon_{0}} - 1)r_{h}\mathds{1}_{\{s_{h} = s, a_{h} = a\}}  + 1}{(e^{\varepsilon_{0}} - 1)r_{h}'\mathds{1}_{\{s_{h}' = s, a_{h}' = a\}}  + 1} \leq \exp\left(\varepsilon_{0}\left(\mathds{1}_{\{s_{h} = s, a_{h} = a\}}+ \mathds{1}_{\{s_{h}' = s, a_{h}' = a\}} \right)\right)\\
&\frac{e^{\varepsilon_{0}} - (e^{\varepsilon_{0}} - 1)r_{h}\mathds{1}_{\{s_{h} = s, a_{h} = a\}}  }{e^{\varepsilon_{0}} - (e^{\varepsilon_{0}} - 1)r_{h}'\mathds{1}_{\{s_{h}' = s, a_{h}' = a\}}} \leq \exp\left(\varepsilon_{0}\left(\mathds{1}_{\{s_{h} = s, a_{h} = a\}}+ \mathds{1}_{\{s_{h}' = s, a_{h}' = a\}} \right)\right)
\end{align*}
Hence, using the two inequalities above:
{\small\begin{align*}
\eqref{eq:ratio_reward_Randomized Response_2} &\leq \prod_{h,s,a} \exp\left(y^{r}_{h,s,a}\varepsilon_{0}\left(\mathds{1}_{\Big\{\begin{subarray}{l}s_{h} = s, \\ a_{h} = a\end{subarray}\Big\}}+ \mathds{1}_{\Big\{\begin{subarray}{l}s_{h}' = s,\\ a_{h}' = a\end{subarray}\Big\}} \right) + (1 - y^{r}_{h,s,a})\varepsilon_{0}\left(\mathds{1}_{\Big\{\begin{subarray}{l}s_{h}' = s,\\ a_{h}' = a\end{subarray}\Big\}}+ \mathds{1}_{\Big\{\begin{subarray}{l}s_{h} = s, \\ a_{h} = a\end{subarray}\Big\}} \right)\right)\\
&=\prod_{h,s,a} \exp\left(\varepsilon_{0}\left(\mathds{1}_{\Big\{\begin{subarray}{l}s_{h} = s,\\ a_{h} = a\end{subarray}\Big\}}+ \mathds{1}_{\Big\{\begin{subarray}{l}s_{h}' = s,\\ a_{h}' = a\end{subarray}\Big\}} \right) \right)\\
&= \exp\left(2\varepsilon_{0}H \right)
\end{align*}}

In addition, let's consider $m\in\left\{ \frac{-1}{e^{\varepsilon_{0}} - 1}, \frac{e^{\varepsilon_{0}}}{e^{\varepsilon_{0}} - 1}\right\}^{H\times S\times A}$
and $y = \frac{e^{\varepsilon_{0}} - 1}{e^{\varepsilon_{0}} + 1}m + \frac{1}{e^{\varepsilon_{0}}+1} \in \{0, 1\}$, we then have that:
{\small\begin{equation}\label{eq:ratio_counters_Randomized Response}
\begin{aligned}
\frac{ \mathbb{P}\left(\forall (h,s,a),\wt{N}^{r}_{X}(h,s,a) = m_{h,s,a} \mid X\right)}{\mathbb{P}\left(\forall (h,s,a), \wt{N}_{X'}^{r}(h,s,a) = m_{h,s,a} \mid X'\right)} = \prod_{h,s,a} \left(\frac{\frac{e^{\varepsilon_{0}} - 1}{e^{\varepsilon_{0}} + 1}\mathds{1}_{\{s_{h} = s, a_{h} = a\}}  + \frac{1}{e^{\varepsilon_{0}} + 1}}{\frac{e^{\varepsilon_{0}} - 1}{e^{\varepsilon_{0}} + 1}\mathds{1}_{\{s_{h}' = s, a_{h}' = a\}}  + \frac{1}{e^{\varepsilon_{0}} + 1}}\right)^{y_{h,s,a}}\times& \\
\times \left(\frac{1 - \left(\frac{e^{\varepsilon_{0}} - 1}{e^{\varepsilon_{0}} + 1}\mathds{1}_{\{s_{h} = s, a_{h} = a\}}  + \frac{1}{e^{\varepsilon_{0}} + 1}\right)}{1 - \left(\frac{e^{\varepsilon_{0}} - 1}{e^{\varepsilon_{0}} + 1}\mathds{1}_{\{s_{h}' = s, a_{h}' = a\}}  + \frac{1}{e^{\varepsilon_{0}} + 1}\right)}\right)^{1 - y_{h,s,a}}&
\end{aligned}
\end{equation}}
Which can be rewritten as:
{\small\begin{equation}\label{eq:ratio_counters_Randomized Response_2}
\begin{aligned}
\frac{ \mathbb{P}\left(\forall (h,s,a),\wt{N}^{r}_{X}(h,s,a) = m_{h,s,a} \mid X\right)}{\mathbb{P}\left(\forall (h,s,a), \wt{N}_{X'}^{r}(h,s,a) = m_{h,s,a} \mid X'\right)} = \prod_{h,s,a} \left(\frac{(e^{\varepsilon_{0}} - 1)\mathds{1}_{\{s_{h} = s, a_{h} = a\}}  + 1}{(e^{\varepsilon_{0}} - 1)\mathds{1}_{\{s_{h}' = s, a_{h}' = a\}}  + 1}\right)^{y_{h,s,a}}\times& \\
\times \left(\frac{e^{\varepsilon_{0}} - (e^{\varepsilon_{0}} - 1)\mathds{1}_{\{s_{h} = s, a_{h} = a\}}  }{e^{\varepsilon_{0}} - (e^{\varepsilon_{0}} - 1)\mathds{1}_{\{s_{h}' = s, a_{h}' = a\}}}\right)^{1 - y_{h,s,a}}&
\end{aligned}
\end{equation}}
Thus for a given $(h,s,a)$:
\begin{align}
\frac{(e^{\varepsilon_{0}} - 1)\mathds{1}_{\{s_{h} = s, a_{h} = a\}}  + 1}{(e^{\varepsilon_{0}} - 1)\mathds{1}_{\{s_{h}' = s, a_{h}' = a\}}  + 1} = \left\{\begin{matrix}
 1 & \text{ if } \mathds{1}_{\{s_{h} = s, a_{h} = a\}} = \mathds{1}_{\{s_{h}' = s, a_{h}' = a\}} \\
 e^{\varepsilon_{0}} & \text{ if } \mathds{1}_{\{s_{h} = s, a_{h} = a\}} = 1 \text{ and } \mathds{1}_{\{s_{h}' = s, a_{h}' = a\}} = 0\\
 e^{-\varepsilon_{0}} & \text{ if } \mathds{1}_{\{s_{h} = s, a_{h} = a\}} = 0 \text{ and } \mathds{1}_{\{s_{h}' = s, a_{h}' = a\}} = 1\\
\end{matrix}\right.\\
\frac{e^{\varepsilon_{0}} - (e^{\varepsilon_{0}} - 1)\mathds{1}_{\{s_{h} = s, a_{h} = a\}}  }{e^{\varepsilon_{0}} - (e^{\varepsilon_{0}} - 1)\mathds{1}_{\{s_{h}' = s, a_{h}' = a\}}} = \left\{\begin{matrix}
 1 & \text{ if } \mathds{1}_{\{s_{h} = s, a_{h} = a\}} = \mathds{1}_{\{s_{h}' = s, a_{h}' = a\}} \\
 e^{-\varepsilon_{0}} & \text{ if } \mathds{1}_{\{s_{h} = s, a_{h} = a\}} = 1 \text{ and } \mathds{1}_{\{s_{h}' = s, a_{h}' = a\}} = 0\\
 e^{\varepsilon_{0}} & \text{ if } \mathds{1}_{\{s_{h} = s, a_{h} = a\}} = 0 \text{ and } \mathds{1}_{\{s_{h}' = s, a_{h}' = a\}} = 1\\
\end{matrix}\right.
\end{align}
Therefore, here again we can simplify each term in \eqref{eq:ratio_counters_Randomized Response_2} by:
\begin{align*}
&\frac{(e^{\varepsilon_{0}} - 1)\mathds{1}_{\{s_{h} = s, a_{h} = a\}}  + 1}{(e^{\varepsilon_{0}} - 1)\mathds{1}_{\{s_{h}' = s, a_{h}' = a\}}  + 1} \leq \exp\left(\varepsilon_{0}\left(\mathds{1}_{\{s_{h} = s, a_{h} = a\}}- \mathds{1}_{\{s_{h}' = s, a_{h}' = a\}} \right)\right)\\
&\frac{e^{\varepsilon_{0}} - (e^{\varepsilon_{0}} - 1)\mathds{1}_{\{s_{h} = s, a_{h} = a\}}  }{e^{\varepsilon_{0}} - (e^{\varepsilon_{0}} - 1)\mathds{1}_{\{s_{h}' = s, a_{h}' = a\}}} \leq \exp\left(\varepsilon_{0}\left(\mathds{1}_{\{s_{h} = s, a_{h} = a\}}- \mathds{1}_{\{s_{h}' = s, a_{h}' = a\}} \right)\right)
\end{align*}
Therefore:
{\small\begin{align*}
\eqref{eq:ratio_counters_Randomized Response_2} &= \prod_{h,s,a} \exp\left(y_{h,s,a}\varepsilon_{0}\left(\mathds{1}_{\Big\{\begin{subarray}{l}s_{h} = s,\\ a_{h} = a\end{subarray}\Big\}}- \mathds{1}_{\Big\{\begin{subarray}{l}s_{h}' = s,\\ a_{h}' = a\end{subarray}\Big\}}\right) + (1 - y_{h,s,a})\varepsilon_{0}\left(\mathds{1}_{\Big\{\begin{subarray}{l}s_{h}' = s,\\ a_{h}' = a\end{subarray}\Big\}} - \mathds{1}_{\Big\{\begin{subarray}{l}s_{h} = s,\\ a_{h} = a\end{subarray}\Big\}} \right)\right)\\
&=\prod_{h,s,a} \exp\left((2y_{h,s,a} - 1)\varepsilon_{0}\left(\mathds{1}_{\{s_{h} = s, a_{h} = a\}}- \mathds{1}_{\{s_{h}' = s, a_{h}' = a\}} \right) \right)\\
&\leq \exp\left(2\varepsilon_{0}H \right)
\end{align*}}
\todocout{same questions here}
\todoeout{Done}
Using the same reasonning we have that for any $m'\in \left\{ - \frac{1}{e^{\varepsilon_{0}} - 1}, \frac{e^{\varepsilon_{0}}}{e^{\varepsilon_{0}} - 1}\right\}^{(H-1)\times S\times A\times S}$:
\begin{align}
\frac{\mathbb{P}\left(\forall (h,s,a,s'),\wt{N}^{p}_{X}(h,s,a,s') = m_{h,s,a,s'}' \mid X\right)}{\mathbb{P}\left(\forall (h,s,a,s'), \wt{N}_{X'}^{p}(h,s,a,s') = m_{h,s,a,s'}' \mid X'\right)} \leq \exp(2\varepsilon_{0}H)
\end{align}
We conclude the proof the same way as the proof of Prop.~\ref{thm:ldp_laplace}.
\end{proof}

In addition, the precision $c_{k,1}$, $c_{k,2}$, $c_{k,3}$ and $c_{k,4}$ of the Randomized Response mechanism are still of order $\sqrt{k}$ just as the Gaussian and Laplace mechanisms.
Contrary to any of those two, the dependence is exponential on $\varepsilon_{0}$ which is closer to the lower bound of Sec.~\ref{sec:lower_bound}. Indeed, we have an additional factor $S$ for $c_{k,3}$ compared to the other mechanisms but those terms scale with $1/(e^{\varepsilon_{0}} - 1)$ instead of the worse dependency $1/\varepsilon$.

\begin{proposition}\label{prop:utility_Randomized Response_mechanism}
The Randomized Response mechanism, Alg.~\ref{alg:Randomized Response_mechanism}, with parameter $\varepsilon_{0}>0$ satisfies Def.~\ref{assumption:concentration_privacy} for any $\delta>0$ and $k\in \mathbb{N}^{\star}$ with:
\begin{align*}
&c_{k,1}(\varepsilon_0, \delta) = c_{k,2}(\varepsilon_0, \delta)  =  \max\left\{1, \frac{2e^{\varepsilon_{0}} - 1}{e^{\varepsilon_{0}} - 1}\sqrt{\frac{(k-1)H}{2}\ln\left(\frac{4SA}{\delta}\right)}\right\}\\
&c_{k,3}(\varepsilon_0,\delta) =\max\left\{1, \frac{S(2e^{\varepsilon_{0}} - 1)}{e^{\varepsilon_{0}} - 1}\sqrt{\frac{(k-1)H}{2}\ln\left(\frac{4SA}{\delta}\right)}\right\}\\
&c_{k,4}(\varepsilon_0, \delta)  = \max\left\{1, \frac{2e^{\varepsilon_{0}} - 1}{e^{\varepsilon_{0}} - 1}\sqrt{\frac{(k-1)H}{2}\ln\left(\frac{4S^{2}A}{\delta}\right)}\right\}
\end{align*}
\end{proposition}

\begin{proof}[Proof of Prop.~\ref{prop:utility_Randomized Response_mechanism}:]

Let's consider a given state-action-next state tuple, $(s,a,s')$, then when summing over $h$:
\begin{align}
\left|\sum_{h=1}^{H} \wt{N}_{k}^{r}(h,s,a) - \sum_{l<k}\sum_{h=1}^{H} \mathds{1}_{\{s_{l,h} = s, a_{l,h} = a\}} \right| = \left|\sum_{h=1}^{H}\sum_{l<k} \wt{N}_{X_{l}}^{r}(h,s,a) - \mathds{1}_{\{s_{l,h} = s, a_{l,h} = a\}} \right|
\end{align}
We now construct a filtration $(\mathcal{F}_{k,h})_{k,h}$ such that $(\wt{N}_{X_{l}}^{r}(h,s,a) - \mathds{1}_{\{s_{l,h} = s, a_{l,h} = a\}})_{l,h}$ is a Martingale Difference Sequence. For an episode $k$ and step $h$, define $\mathcal{F}_{k,h} = \sigma(\{(s_{l,j}, a_{l,j}, r_{l,j})_{j\leq H}, \mathcal{M}((s_{l,j}, a_{l,j}, r_{l,j})_{j\leq H})\} \mid l < k\} \cup \{(s_{k,j}, a_{k,j}, r_{k,j})_{j\leq h}\})$
 to be the filtration that contains the history before episode $k$. Then 
 $\mathds{1}_{\{s_{k,h}= s, a_{k,h} = a\}}$ is $\mathcal{F}_{k,h}$-measurable and thus we have:
 \begin{align*}
\mathbb{E}\left(\wt{N}_{X_{k}}^{r}(h,s,a) - \mathds{1}_{\{s_{k,h}= s, a_{k,h} = a\}}\mid \mathcal{F}_{k,h} \right) = \frac{e^{\varepsilon_{0}}+1}{e^{\varepsilon_{0}}-1}\left(\mathbb{E}\left(\wt{n}_{X_{k}}(h,s,a)\mid \mathcal{F}_{k,h} \right) - \frac{1}{e^{\varepsilon_{0}} + 1}\right)& \\
- \mathds{1}_{\{s_{k,h} = s, a_{k,h} = a\}}= 0&
 \end{align*}
 where $\tilde{n}_{X_{k}}(h,s,a)$ is a Randomized Response random variable generated by Alg.~\ref{alg:Randomized Response_mechanism} for each step $h$, state $s$, action $a$ and trajectory $X_{k}$.
 \todocout{remind us what $\tilde n$ is}
 \todoeout{Done}
 And $\left| \wt{N}_{X_{k}}^{r}(h,s,a) - \mathds{1}_{\{s_{k,h}= s, a_{k,h} = a\}}\right| \leq \frac{2e^{\varepsilon_{0}} - 1}{e^{\varepsilon_{0}} - 1}$. Then thanks to Azuma-Hoeffding inequality we have that with probability at least $1 - \delta/(4SA)$:
 \begin{align}
  \left|\sum_{h=1}^{H} \wt{N}_{k}^{r}(h,s,a) - \sum_{l<k}\sum_{h=1}^{H} \mathds{1}_{\{s_{l,h} = s, a_{l,h} = a\}} \right| \leq \frac{2e^{\varepsilon_{0}} - 1}{e^{\varepsilon_{0}} - 1}\sqrt{\frac{(k-1)H}{2}\ln\left(\frac{4SA}{\delta}\right)}
 \end{align}
With the same reasonning, we have with probability at least $1 - \delta/4S^{2}A$:
{\small\begin{align}
\left|\sum_{h=1}^{H} \wt{N}_{k}^{p}(h,s,a,s') - \sum_{l<k}\sum_{h=1}^{H-1} \mathds{1}_{\{s_{l,h} = s, a_{l,h} = a, s_{l,h+1} = s'\}} \right| \leq \frac{2e^{\varepsilon_{0}} - 1}{e^{\varepsilon_{0}} - 1}\sqrt{\frac{(k-1)H}{2}\ln\left(\frac{4S^{2}A}{\delta}\right)}
\end{align}}
Also, we have:
\begin{align}
 \left|\sum_{h=1}^{H} \wt{R}_{k}^{r}(h,s,a) - \sum_{l<k}\sum_{h=1}^{H} r_{h}\mathds{1}_{\{s_{l,h} = s, a_{l,h} = a\}} \right| \leq \frac{2e^{\varepsilon_{0}} - 1}{e^{\varepsilon_{0}} - 1}\sqrt{\frac{(k-1)H}{2}\ln\left(\frac{4SA}{\delta}\right)}
\end{align}
with $\wt{R}_{k}^{r}(h,s,a) = \sum_{l<k} \wt{R}_{X_{l}}$.
Finally, with probability at least $1 - \delta/4SA$:
{\small\begin{align}
\left|\sum_{h=1}^{H}\sum_{s'} \wt{N}_{k}^{p}(h,s,a,s') - \sum_{s'}\sum_{l<k}\sum_{h=1}^{H-1} \mathds{1}_{\Big\{\begin{subarray}{l}s_{l,h} = s,\\ a_{l,h} = a,\\ s_{l,h+1} = s'\end{subarray}\Big\}}\right| \leq \frac{S(2e^{\varepsilon_{0}} - 1)}{e^{\varepsilon_{0}} - 1}\sqrt{\frac{(k-1)H}{2}\ln\left(\frac{4SA}{\delta}\right)}
\end{align}}
Compared to the bounds we derived for previous mechanisms there is an additional factor $\sqrt{S}$. This comes from using a triangular inequality instead of using concentration inequalities like in previous mechanisms.
Then thanks to a union bound over the state-action pair and the state-action-next state tuple we have that the Randomized Response mechanism satisfies Def.~\ref{assumption:concentration_privacy} with:
\begin{align}
c_{k,1}(\varepsilon_{0}, \delta) = c_{k,2}(\varepsilon_{0}, \delta) = \max\left\{1, \frac{2e^{\varepsilon_{0}} - 1}{e^{\varepsilon_{0}} - 1}\sqrt{\frac{(k-1)H}{2}\ln\left(\frac{4SA}{\delta}\right)}\right\}\\
c_{k,3}(\varepsilon_{0}, \delta) = \max\left\{1, \frac{S(2e^{\varepsilon_{0}} - 1)}{e^{\varepsilon_{0}} - 1}\sqrt{\frac{(k-1)H}{2}\ln\left(\frac{4SA}{\delta}\right)}\right\},\\
c_{k,4}(\varepsilon_{0}, \delta) = \max\left\{1, \frac{2e^{\varepsilon_{0}} - 1}{e^{\varepsilon_{0}} - 1}\sqrt{\frac{(k-1)H}{2}\ln\left(\frac{4S^{2}A}{\delta}\right)}\right\}
\end{align}
\end{proof}




\subsection{Bounded Noise Mechanism for DP:}\label{app:bounded_noise}

Recently, \cite{dagan2020bounded} showed how to construct a differential privacy with an almost surely bounded noise mechanism.
This mechanism, $\mathcal{M}$, computes an $(\varepsilon, \delta)$-DP approximation of the average of a dataset $\mathcal{D} = \{x_{1}, \hdots, x_{n}\}\subset \mathbb{R}^{n\times k}$, for any $\varepsilon>0$ and $\delta\in [\exp(- k/\log(k)^{8}), 1/2]$ (see Theorem $1.1$ in \citep{dagan2020bounded}).
In the local differentially private setting in RL, we apply this bounded noise mechanism to each user $k$ in order to compute the cumulative reward for each
state-action $(s,a)$, the number of visits to $(s,a)$ and the number of visits to state-action-next state tuple $(s,a,s')$.

This noise mechanism is similar to the Laplace or Gaussian mechanism and add a noise drawn from a well-chosen distribution, $\mu_{\text{DE}, R}$
 supported on $(-R, R)$ for any $R$, whose density at $\eta\in (-R, R)$ is:
\begin{align}
  \frac{\exp(- f_{\text{DE}, \text{R}}(\eta))}{Z_{\text{DE}, \text{R}}} \text{ with } f_{\text{DE}, \text{R}}(\eta) = \exp\left(\frac{R^{2}}{R^{2} - \eta^{2}} \right) \text{ and } Z_{\text{DE}, \text{R}} = \int_{-R}^{R} e^{- f_{\text{DE}, \text{R}}(\eta)} d\eta
\end{align}

\cite{dagan2020bounded} shows that when taking $\delta \geq \exp(-k/\log(k)^{8})$ and $\varepsilon\in (0,1)$ there exists
 a universal constant $C>0$ such that when taking $R = \frac{C}{\varepsilon n}\sqrt{k\log\left(\frac{1}{\delta}\right)}$ adding noise
 from $\mu_{\text{DE}, \text{R}}$ ensures $(\varepsilon, \delta)$-DP to the average of $n$ data of dimension $k$.

\begin{algorithm}[tb]
  \caption{Bounded Noise Mechanism for LDP}
  \label{alg:bounded_noise_dp}
  \begin{algorithmic}
  \STATE {\bfseries Input:} Trajectory: $X = \{(s_{h}, a_{h}, r_{h}) \mid h\leq H\}$, Privacy Parameter: $\varepsilon, \delta$, Constant: $C$
  \STATE Set $R_{1} = \frac{C}{\varepsilon}\sqrt{SA\ln(1/\delta)}$ and $R_{2} = \frac{CS}{\varepsilon}\sqrt{A\ln(1/\delta)}$
  \FOR{$(s,a)\in \mathcal{S}\times \mathcal{A}$}
    \STATE Sample $Y_{1,X}(s, a)\sim \mu_{\text{DE}, \text{R}_{1}}$
    \STATE $\wt{R}_{X}(s,a) = Y_{1,X}(s,a) + \sum_{h=1}^{H} r_{h}\mathds{1}_{\{s_{h} = s, a_{h} = a\}}$
    \STATE Sample $\wt{n}^{r}_{X}(s,a) \sim \mu_{\text{DE}, \text{R}_{1}}$
    \STATE $\wt{N}^{r}_{X}(s,a) = \wt{n}^{r}_{X}(s,a) + \sum_{h=1}^{H} \mathds{1}_{\{s_{h} = s, a_{h} = a\}}$
    \FOR{$s'\in \mathcal{S}$}
        \STATE Sample $\wt{n}^{p}_{X}(s,a, s') \sim \mu_{\text{DE}, \text{R}_{2}}$
        \STATE $\wt{N}^{p}_{X}(s,a,s') = \wt{n}^{r}_{X}(s,a, s') + \sum_{h=1}^{H-1} \mathds{1}_{\{s_{h} = s, a_{h} = a, s_{h+1} = s'\}}$
    \ENDFOR
  \ENDFOR
  \STATE {\bfseries Return:} $(\wt{R}_{X}, \wt{N}_{X}^{r}, \wt{N}_{X}^{p})\in \mathbb{R}^{S\times A}\times \mathbb{R}^{S\times A}\times \mathbb{R}^{S\times A\times S}$
\end{algorithmic}
\end{algorithm}

Similarly to the previous mechanisms we studied we can show the following proposition, which states the parameter we need to use
to ensure $(\varepsilon, \delta)$-DP.

\begin{proposition}\label{prop:privacy_bounded_noise}
  For any $\varepsilon\in (0,1)$, $\delta_{0} \geq \exp( - SA/ \log(SA)^{8})$ and $\delta_{1} \geq \exp( - S^{2}A/ \log(S^{2}A)^{8})$ then the bounded noise mechanism, Alg.~\ref{alg:bounded_noise_dp}, is $(3H\varepsilon, \delta')$-LDP
  with $\delta_{0}' =  \delta_{0} \frac{e^{H\varepsilon} - 1}{e^{\varepsilon} - 1}$, $\delta_{1}' = \delta_{1} \frac{e^{H\varepsilon} - 1}{e^{\varepsilon} - 1}$ and $\delta' = \delta_{1}'e^{2H\varepsilon}
  + 2\delta_{0}'e^{2H\varepsilon}  + 2\delta_{0}'\delta_{1}'e^{H\varepsilon} + (\delta_{0}')^{2}e^{H\varepsilon} + (\delta_{0}')^{2}\delta_{1}'$.
\end{proposition}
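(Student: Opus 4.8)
The plan is to establish $(3H\varepsilon,\delta')$-LDP in three stages: obtain a \emph{single-step} DP guarantee for each of the three statistics from the cited bounded-noise result, amplify it to the whole trajectory by group privacy, and finally compose the three independent noise injections exactly as in the proof of Prop.~\ref{prop:gaussian_ldp}.

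First I would reinterpret each statistic as a sum over the $H$ steps of per-step contributions. For the reward vector, step $h$ contributes $r_h$ at coordinate $(s_h,a_h)$, a vector of $\ell_2$-norm at most $1$ since $r_h\in[0,1]$; the same holds for the per-step contributions to $\wt N^r_X$ and to $\wt N^p_X$ (dimension $S^2A$). Thus $\wt R_X(s,a)=\sum_h r_h\mathds 1_{\{s_h=s,a_h=a\}}+Y_{1,X}(s,a)$ is $H$ times the \emph{average} of $H$ unit-norm data points plus the injected noise. Since $R_1=\frac{C}{\varepsilon}\sqrt{SA\ln(1/\delta_0)}=H\cdot\frac{C}{\varepsilon H}\sqrt{SA\ln(1/\delta_0)}$, applying Theorem~$1.1$ of~\citep{dagan2020bounded} with $n=H$, $k=SA$ and average-radius $R_1/H$, then post-processing by multiplication by $H$ (a bijection, hence DP-preserving), shows that $\wt R_X$ and $\wt N^r_X$ are each $(\varepsilon,\delta_0)$-DP with respect to changing a \emph{single} step, while $\wt N^p_X$ (with $k=S^2A$, radius $R_2=\frac{C}{\varepsilon}\sqrt{S^2A\ln(1/\delta_1)}$) is $(\varepsilon,\delta_1)$-DP. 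The hypotheses $\delta_0\ge e^{-SA/\log(SA)^8}$ and $\delta_1\ge e^{-S^2A/\log(S^2A)^8}$ are precisely the admissibility condition $\delta\ge e^{-k/\log(k)^8}$ of that theorem for the two relevant dimensions.

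Next I would pass from single-step to whole-trajectory privacy using the standard group-privacy argument (see~\citep{dwork2014algorithmic}): any two trajectories $X,X'$ differ in at most all $H$ of their steps, i.e.\ they form a group of size $H$, and an $(\varepsilon,\delta)$-DP mechanism is automatically $\big(H\varepsilon,\tfrac{e^{H\varepsilon}-1}{e^{\varepsilon}-1}\delta\big)$-DP for groups of size $H$. This upgrades $\wt R_X$ and $\wt N^r_X$ to $(H\varepsilon,\delta_0')$-DP and $\wt N^p_X$ to $(H\varepsilon,\delta_1')$-DP, with $\delta_0',\delta_1'$ exactly as defined in the statement.

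Finally, since $Y_{1,X}$, $\wt n^r_X$ and $\wt n^p_X$ are drawn independently, I would factorise $\mathbb P\big(\mathcal M(X)=(r,n,n')\mid X\big)$ into the product of its three marginals, as in Eq.~\eqref{eq:bernoulli_prob_pm_decomp}, bound each marginal by $e^{H\varepsilon}$ times the corresponding marginal for $X'$ plus its additive slack ($\delta_0'$, $\delta_0'$, $\delta_1'$ respectively), and multiply out the three factors. Expanding the product and using that every marginal probability is at most $1$ collects precisely the five terms $\delta_1'e^{2H\varepsilon}+2\delta_0'e^{2H\varepsilon}+2\delta_0'\delta_1'e^{H\varepsilon}+(\delta_0')^2e^{H\varepsilon}+(\delta_0')^2\delta_1'=\delta'$, yielding the claimed $(3H\varepsilon,\delta')$-LDP. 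The main obstacle I anticipate is the first stage: carefully justifying the sum-versus-average rescaling and the $\ell_2$-sensitivity bookkeeping so that the radii $R_1,R_2$ of Alg.~\ref{alg:bounded_noise_dp} line up with the single-step $(\varepsilon,\delta_0)$ and $(\varepsilon,\delta_1)$ guarantees; once that is pinned down, the group-privacy amplification and the product expansion are purely mechanical.
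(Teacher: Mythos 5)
Your proposal is correct and takes essentially the same route as the paper: the paper likewise extracts a per-step $(\varepsilon,\delta_0)$ (resp.\ $(\varepsilon,\delta_1)$) guarantee from the result of \citet{dagan2020bounded}, obtains the factor $\frac{e^{H\varepsilon}-1}{e^{\varepsilon}-1}$ by iterating over hybrid trajectories $X_I$ that change one step at a time (which is exactly the group-privacy amplification you invoke as a standard lemma), and then composes the three independent statistics via the same factorisation and product expansion, bounding marginal probabilities by $1$ to collect the five terms of $\delta'$. The only difference is presentational: you cite group privacy as a black box and are somewhat more explicit about the sum-versus-average rescaling, whereas the paper unrolls the iteration by hand and applies the cited theorem directly to trajectories differing in a single step.
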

\begin{proof}{of Prop.~\ref{prop:privacy_bounded_noise}}

  For any $\varepsilon\in (0,1)$ and $\delta_{0}\geq \exp( - SA/ \log(SA)^{8})$, for any $r\in \mathbb{R}^{S\times A}$ and two trajectories $X = \{(s_{h}, a_{h}, r_{h})_{h\leq H}\}$ and $X' = \{(s_{h}', a_{h}', r_{h}')_{h\leq H}\}$
  let's define $R_{X}(s,a) = \sum_{h=1}^{H} r_{h}\mathds{1}_{\{s_{h} = s, a_{h} = a\}}$ the cumulative reward in state-action $(s,a)$ associated to trajectory
  $X$. Finally, let's define for a set of indexes $I \subset [H] \rrbracket$ the new trajectory $X_{I}$ where for $h\in I$, $(X_{I})_{h} = (s_{h}, a_{h}, r_{h})$ and
  for $h\not\in I$, $(X_{I})_{h} = (s_{h}', a_{h}', r_{h}')$. Therefore, using Theorem $3.2$ from \cite{dagan2020bounded}, we have that for $I = [H-1] \rrbracket$ and $\wt R_X$ defined as in Alg.~\ref{alg:bounded_noise_dp},
  \begin{align}
    \mathbb{P}\left(\wt{R}_{X} = r \right) &\leq \exp(\varepsilon)\mathbb{P}\left(\wt{R}_{X_{I}} = r \right) + \delta_{0}\\
    &\leq \exp(\varepsilon)\left( \exp(\varepsilon) \mathbb{P}\left( \wt{R}_{X_{[H-2]}} = r\right) + \delta_{0}\right) + \delta_{0}
  \end{align}
  Therefore repeating the same argument $H$ times, we have that:
  \begin{align}
    \mathbb{P}\left(\wt{R}_{X} = r \right) &\leq \exp(H\varepsilon)\mathbb{P}\left(\wt{R}_{X'} = r \right) + \delta_{0}\sum_{h=0}^{H-1} \exp(h\varepsilon)
   \\&= \exp(H\varepsilon)\mathbb{P}\left(\wt{R}_{X'} = r \right) + \delta_{0}\frac{\exp(H\varepsilon) - 1}{\exp(\varepsilon) - 1}
  \end{align}

In addition, we have with the same reasoning that for any $n\in\mathbb{R}^{S\times A}$ and $n^{p}\in \mathbb{R}^{S\times A\times S}$ that:
\begin{align}
  \mathbb{P}\left(\wt{N}_{X}^{r} = n \right)  &\leq \exp(H\varepsilon)\mathbb{P}\left(\wt{N}_{X'} = n\right) + \delta_{0}\frac{\exp(H\varepsilon) - 1}{\exp(\varepsilon) - 1}
\end{align}
and for any $\delta_{1}\geq \exp( - S^{2}A/ \log(S^{2}A)^{8})$:
\begin{align}
  \mathbb{P}\left(\wt{N}_{X}^{p} = n^{p} \right)  &\leq \exp(H\varepsilon)\mathbb{P}\left(\wt{N}_{X'}^{p} = n^{p}\right) + \delta_{1}\frac{\exp(H\varepsilon) - 1}{\exp(\varepsilon) - 1}
\end{align}

Therefore we have that:
{\small\begin{align*}
  \mathbb{P}&\left(\wt{R}_{X} = r, \wt{N}_{X}^{r} = n, \wt{N}_{X}^{p} = n^{p} \mid X \right) = \mathbb{P}\left(\wt{R}_{X} = r\mid X\right)\mathbb{P}\left(\wt{N}_{X}^{r} = n \mid X\right) \mathbb{P}\left(\wt{N}_{X}^{p} = n^{p}\mid X\right)\\
  &\leq \left( e^{H\varepsilon}\mathbb{P}\left(\wt{R}_{X'} = r \right) + \delta_{0}\frac{e^{H\varepsilon} - 1}{e^{\varepsilon} - 1} \right)
  \left( e^{H\varepsilon}\mathbb{P}\left(\wt{N}_{X'} = n\right) + \delta_{0}\frac{e^{H\varepsilon} - 1}{e^{\varepsilon} - 1}\right)\times\\
  &\times \left( e^{H\varepsilon}\mathbb{P}\left(\wt{N}_{X'}^{p} = n^{p}\right) + \delta_{1}\frac{e^{H\varepsilon} - 1}{e^{\varepsilon} - 1}\right)\\
  &\leq   e^{3H\varepsilon}\mathbb{P}\left( \wt{R}_{X'} = r, \wt{N}_{X'}^{r} = n, \wt{N}_{X'}^{p} = n^{p}\right) + \delta_{1}'e^{2H\varepsilon}\mathbb{P}\left(\wt{R}_{X'} = r \right)\mathbb{P}\left(\wt{N}_{X'}^{r} = n \right)\\
  &+ \delta_{0}'e^{2H\varepsilon} \mathbb{P}\left( \wt{N}_{X'}^{p} = n^{p}\right)\left( \mathbb{P}\left( \wt{N}_{X'}^{r} = n\right) + \mathbb{P}\left( \wt{R}_{X'} = r\right)\right) \\
  &+ \delta_{0}'\delta_{1}'e^{H\varepsilon}\left( \mathbb{P}\left( \wt{N}_{X'}^{r} = n\right) + \mathbb{P}\left( \wt{R}_{X'} = r\right)\right) + (\delta_{0}')^{2}e^{H\varepsilon}\mathbb{P}\left( \wt{N}_{X'}^{p} = n^{p}\right) + (\delta_{0}')^{2}\delta_{1}'
\end{align*}}
with $\delta_{0}' = \delta_{0} \frac{e^{H\varepsilon} - 1}{e^{\varepsilon} - 1}$ and $\delta_{1}' = \delta_{1} \frac{e^{H\varepsilon} - 1}{e^{\varepsilon} - 1}$. Therefore, we have that the mechanism
is $(3H\varepsilon, \delta')$-LDP that is to say:
{\small\begin{align*}
  \mathbb{P}\left(\wt{R}_{X} = r, \wt{N}_{X}^{r} = n, \wt{N}_{X}^{p} = n^{p} \mid X \right) &\leq e^{3H\varepsilon}\mathbb{P}\left( \wt{R}_{X'} = r, \wt{N}_{X'}^{r} = n, \wt{N}_{X'}^{p} = n^{p}\right) + \delta_{1}'e^{2H\varepsilon}\\
  &+ 2\delta_{0}'e^{2H\varepsilon}  + 2\delta_{0}'\delta_{1}'e^{H\varepsilon} + (\delta_{0}')^{2}e^{H\varepsilon} + (\delta_{0}')^{2}\delta_{1}'
\end{align*}}
with $\delta_{0}' =  \delta_{0} \frac{e^{H\varepsilon} - 1}{e^{\varepsilon} - 1}$, $\delta_{1}' = \delta_{1} \frac{e^{H\varepsilon} - 1}{e^{\varepsilon} - 1}$ and $\delta' = \delta_{1}'e^{2H\varepsilon}
+ 2\delta_{0}'e^{2H\varepsilon}  + 2\delta_{0}'\delta_{1}'e^{H\varepsilon} + (\delta_{0}')^{2}e^{H\varepsilon} + (\delta_{0}')^{2}\delta_{1}'$.
\end{proof}

In addition, because the noise is bounded we can apply standard sub-gaussian concentration inequalities to show that Alg.~\ref{alg:bounded_noise_dp} satisfies
Def.~\ref{def:RL-LDP}.

\begin{proposition}\label{prop:concentration_bounded_noise}
  The bounded noise mechanism, Alg.~\ref{alg:bounded_noise_dp}, with parameter $\varepsilon_{0}>0$ satisfies Def.~\ref{assumption:concentration_privacy} for any $\delta>0$ and $k\in \mathbb{N}^{\star}$ with:
  \begin{align*}
  &c_{k,1}(\varepsilon_0, \delta) = c_{k,2}(\varepsilon_0, \delta)  =  R\sqrt{2(k-1)\ln\left(\frac{6SA}{\delta}\right)}\\
  &c_{k,3}(\varepsilon_0,\delta) = R_{2}\sqrt{2S(k-1)\ln\left(\frac{6S^{2}A}{\delta}\right)}\\
  &c_{k,4}(\varepsilon_0, \delta)  = R_{2}\sqrt{2(k-1)\ln\left(\frac{6S^{2}A}{\delta}\right)}
  \end{align*}
  with $R = \frac{1}{\varepsilon}\sqrt{SA\ln(1/\delta_{0})}$ and $R_{2} = \frac{S}{\varepsilon}\sqrt{A\ln(1/\delta_{0})}$
\end{proposition}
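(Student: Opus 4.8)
The plan is to exploit the \emph{almost sure boundedness} of the noise added by Alg.~\ref{alg:bounded_noise_dp}. Every sample from $\mu_{\text{DE},R}$ lies in $(-R,R)$, and its density $\eta\mapsto \exp(-f_{\text{DE},R}(\eta))/Z_{\text{DE},R}$ is even, since $f_{\text{DE},R}(\eta)=\exp(R^2/(R^2-\eta^2))$ depends only on $|\eta|$. Hence each noise variable is symmetric, and therefore centered, and bounded in $[-R,R]$; in particular it is sub-Gaussian, so Hoeffding's inequality applies to sums of such variables, with the non-private counter playing the role of the mean.

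First I would rewrite each aggregated private statistic as its non-private counterpart plus a sum of independent noise terms. By construction of Alg.~\ref{alg:bounded_noise_dp},
\begin{align*}
\wt{R}_k(s,a) - R_k(s,a) &= \sum_{l<k} Y_{1,X_l}(s,a), \\
\wt{N}_k^r(s,a) - N_k^r(s,a) &= \sum_{l<k} \wt{n}^r_{X_l}(s,a), \\
\wt{N}_k^p(s,a,s') - N_k^p(s,a,s') &= \sum_{l<k} \wt{n}^p_{X_l}(s,a,s'),
\end{align*}
where $Y_{1,X_l}(s,a)$ and $\wt{n}^r_{X_l}(s,a)$ are i.i.d.\ draws from $\mu_{\text{DE},R_1}$ and the $\wt{n}^p_{X_l}(s,a,s')$ are i.i.d.\ draws from $\mu_{\text{DE},R_2}$, with $R_1$ the value denoted $R$ in the statement.

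Second, I would apply Hoeffding's inequality coordinate by coordinate. Each of the first two differences is a sum of $k-1$ independent variables bounded in $[-R_1,R_1]$, so with probability at least $1-\delta/(3SA)$ the deviation is at most $R_1\sqrt{2(k-1)\ln(6SA/\delta)}$; a union bound over the $SA$ pairs then yields $c_{k,1}=c_{k,2}=R_1\sqrt{2(k-1)\ln(6SA/\delta)}$. For the fourth condition the single-coordinate difference is a sum of $k-1$ variables bounded in $[-R_2,R_2]$, and a union bound over the $S^2A$ tuples (each at level $\delta/(3S^2A)$) gives $c_{k,4}=R_2\sqrt{2(k-1)\ln(6S^2A/\delta)}$. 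For the third condition, $\sum_{s'}\big(\wt{N}_k^p-N_k^p\big)(s,a,s')=\sum_{s'}\sum_{l<k}\wt{n}^p_{X_l}(s,a,s')$ is a sum of $S(k-1)$ independent variables bounded in $[-R_2,R_2]$, so Hoeffding produces the extra factor $\sqrt{S}$ inside the root, namely $c_{k,3}=R_2\sqrt{2S(k-1)\ln(6S^2A/\delta)}$. Splitting the budget $\delta$ across the four families of events and intersecting them shows all inequalities of Def.~\ref{assumption:concentration_privacy} hold simultaneously with probability at least $1-\delta$; the required monotonicity in $k$ and in $\delta$ is immediate from these closed forms.

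The main obstacle—though a mild one—is the bookkeeping rather than any deep estimate: one must verify that $\mu_{\text{DE},R}$ is symmetric (hence centered, so that the non-private counter is exactly the mean subtracted in Hoeffding), and then correctly count the number of independent noise terms in each sum, which is $k-1$ for $c_{k,1},c_{k,2},c_{k,4}$ but $S(k-1)$ for $c_{k,3}$ owing to the additional summation over next states. The privacy guarantee itself is not needed here, as it is established separately in Prop.~\ref{prop:privacy_bounded_noise}.
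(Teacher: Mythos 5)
Your proposal is correct and follows essentially the same route as the paper's proof: write each aggregated private statistic as the non-private counter plus a sum of $k-1$ (respectively $S(k-1)$ for the aggregated next-state counts) i.i.d.\ noise terms bounded by $R_1$ or $R_2$, apply Hoeffding coordinate by coordinate, and union bound over the $SA$ and $S^2A$ coordinates with the same $\delta/(3SA)$ and $\delta/(3S^2A)$ splits. Your explicit check that $\mu_{\text{DE},R}$ is symmetric, hence centered, is a detail the paper leaves implicit but is indeed needed for Hoeffding to control the deviation from the true counters.
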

\begin{proof}{of Prop.~\ref{prop:concentration_bounded_noise}}
  For any $\delta>0$ and at the beginning of episode $k$, we have thanks to Hoeffding inequality that with probability at least $1 - \frac{\delta}{3SA}$ for any state-action $(s,a)\in\mathcal{S}\times \mathcal{A}$:
  \begin{align}
    \left|\wt{R}_{k}(s,a) - R_{k}(s,a)\right| = \left|\sum_{l=1}^{k-1} Y_{1, X_{l}}(s,a)\right| \leq R\sqrt{2(k-1)\ln\left(\frac{6SA}{\delta}\right)}
  \end{align}
  with $(Y_{1, X_{l}}(s,a))_{l\leq k-1}$ are i.i.d distributed according to $\mu_{\text{DE}, \text{R}_{1}}$. With the same reasonning, we have that
  with probability at least $1 - \frac{\delta}{3SA}$:
  \begin{align}
    \left|\wt{N}_{k}^{r}(s,a) - N_{k}^{r}(s,a)\right| = \left|\sum_{l=1}^{k-1} \wt{n}_{X_{l}}^{r}(s,a)\right| \leq R\sqrt{2(k-1)\ln\left(\frac{6SA}{\delta}\right)}
  \end{align}
Finally, still using Hoeffding inequality, and definning $R_{2} = \frac{CS}{\varepsilon}\sqrt{A\ln(1/\delta)}$, we have that with probability at least $1 - \frac{\delta}{3S^{2}A}$:
\begin{align}
  \left|\wt{N}_{k}^{p}(s,a,s') - \sum_{l<k}\sum_{h=1}^{H-1} \mathds{1}_{\{s_{l,h} = s, a_{l,h} = a, s_{l,h+1} = s'\}} \right| \leq R_{2}\sqrt{2(k-1)\ln\left(\frac{6S^{2}A}{\delta}\right)}
\end{align}
And finally with probability at least $1 - \frac{\delta}{3SA}$:
{\small\begin{align}
  \left|\sum_{s'\in \mathcal{S}}\wt{N}_{k}^{p}(s,a,s') - \sum_{s'\in \mathcal{S}}\sum_{l<k}\sum_{h=1}^{H-1} \mathds{1}_{\{s_{l,h} = s, a_{l,h} = a, s_{l,h+1} = s'\}} \right| \leq R_{2}\sqrt{2S(k-1)\ln\left(\frac{6S^{2}A}{\delta}\right)}
\end{align}}
\end{proof}

\subsection{Experimental Results:}

  We show empirical results for three mechanisms discussed in the RandomMDP
  environment in Figures~\ref{fig:epsilon_0_2},~\ref{fig:epsilon_2} and~\ref{fig:epsilon_20}.

  \begin{figure}[!h]
  \begin{minipage}{0.45\linewidth}
    \centering
    \includegraphics[width=\linewidth]{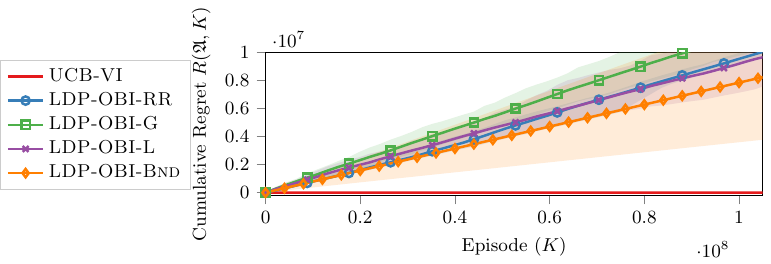}
    \caption{$\varepsilon = 0.2$ and $\delta = 0.1$ (\textit{only} for the Gaussian and bounded noise mechanism)}
    \label{fig:epsilon_0_2}
  \end{minipage}\hfill
  \begin{minipage}{0.45\linewidth}
    \centering
    \includegraphics[width=\linewidth]{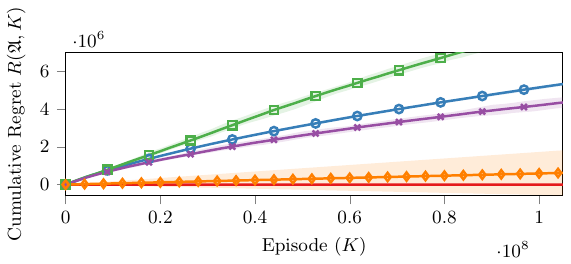}
    \caption{$\varepsilon = 2$ and $\delta = 0.1$ (\textit{only} for the Gaussian and bounded noise mechanism)}
    \label{fig:epsilon_2}
  \end{minipage}\\
  \begin{minipage}{\linewidth}
    \centering
    \includegraphics[width=0.5\linewidth]{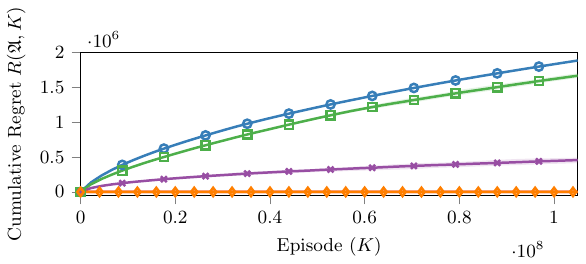}
    \caption{$\varepsilon = 20$ and $\delta = 0.1$ (\textit{only} for the Gaussian and bounded noise mechanism)}
    \label{fig:epsilon_20}
  \end{minipage}\\
  \end{figure}

  As we have seen in Fig.~\ref{fig:exp.randommdp}, the LDP constraint has a significant impact on the
  regret especially as $\varepsilon$ decreases. In particular for $\varepsilon = 0.2$,
  \algol, \algog, \algob, \algobnd have not reached the usual square root growth phase of the regret
  usually seen in \ucbvi or other regret minimizing algorithm.

From figures ~\ref{fig:epsilon_0_2},~\ref{fig:epsilon_2} and~\ref{fig:epsilon_20}, we can observe that the
bounded noise mechanism has a lower impact on the regret compared to the Laplace, Gaussian and Randomized Response mechanisms.
However, this benefit does not appear in the regret bound of Table~\ref{tab:algovariants.guarantees}.
This suggests that the regret analysis of Sec.~\ref{sec:regret} may be improved to show this empirically observed advantage.

\section{Posterior Sampling for Local Differential Privacy}\label{app:psrl_ldp}
The Posterior Sampling for Reinforcement Learning algorithm~\citep[\psrl,][]{Osband2013more} is a Thompson Sampling based algorithm for Reinforcement Learning. It works by maintaining a Bayesian posterior distribution over MDPs.
We focus on a particular instantiation of \psrl where for each state-action pair $(s,a)$ we have an independent Gaussian prior for the reward distribution and a Dirichlet prior for the transition dynamics.
With those priors, the posterior distributions are Normal-Gamma and Dirichlet distributions.

Let $\alpha_0(s,a)$ denote the parameters of the prior distribution over the transition dynamics, so the prior is given by $\text{Dir}(\alpha_{0}(s,a))$.  In addition, let $\mu_{0}(s,a)\in\mathbb{R}$,
$\lambda_{0}(s,a)\in \mathbb{R}_{+}^{\star}$, $\nu_{0}(s,a)\in\mathbb{R}_{+}^{\star}$ and $\beta_{0}(s,a)\in\mathbb{R}^{\star}_{+}$ be the parameters of the Normal-Gamma prior distribution that we place on the rewards.
Then, at the beginning of episode $k$ and for a given pair $(s,a)\in \mathcal{S}\times \mathcal{A}$, let $\alpha_{k}(s,a)\in (\mathbb{R}^{\star}_{+})^{S}$ be such that the posterior distribution over the transition dynamics is $\text{Dir}(\alpha_{k}(s,a))$. We then define $\mu_{k}(s,a)\in\mathbb{R}$,
$\lambda_{k}(s,a)\in \mathbb{R}_{+}^{\star}$, $\nu_{k}(s,a)\in\mathbb{R}_{+}^{\star}$ and $\beta_{k}(s,a)\in\mathbb{R}^{\star}_{+}$ to the parameters of the Normal-Gamma posterior distributions. Using standard results from Bayesian Learning we have that, for all state $s'\in\mathcal{S}$:
{\small\begin{align}
&\alpha_{k}(s,a) = \alpha_{0}(s,a) + N_{k}(s,a,s') \label{eq:updates_alpha}\\
&\lambda_{k}(s,a) = \lambda_{0}(s,a) + N_{k}(s,a)\label{eq:update_lambda}\\
&\nu_{k}(s,a) = \nu_{0}(s,a) + \frac{N_{k}(s,a)}{2} \label{eq:update_nu}\\
&\mu_{k}(s,a) = \frac{\lambda_{0}(s,a)\mu_{0}(s,a) + N_{k}(s,a)\hat{R}_{k}(s,a)}{\lambda_{0}(s,a) + N_{k}(s,a)} \label{eq:update_mu}\\
&\beta_{k}(s,a) = \beta_{0}(s,a) + \frac{1}{2}\wh{\text{Var}}(R(s,a)) + \frac{N_{k}(s,a)\lambda_{0}(s,a)}{2(\lambda_{0}(s,a)+ N_{k}(s,a))}\left(\hat{R}_{k}(s,a) - \mu_{0}(s,a)\right)^{2} \label{eq:update_beta}
\end{align}}
where $\alpha_{0}, \mu_{0}, \lambda_{0}, \nu_{0}, \beta_{0}$ are prior parameters provided at the beginning of the algorithm. We denote by $N_{k}(s,a)$, the number of visits to the state-action pair $(s,a)$, $N_{k}(s,a,s')$ the number visits to $(s,a,s')$, $\hat{R}_{k}(s,a)$ the average reward observed for $(s,a)$ and $\wh{\text{Var}}(R(s,a))$ the empirical variance for $(s,a)$.

At each episode $k$, \psrl samples an MDP from the posterior distributions, then computes the optimal policy and executes it in the true MDP. \citep{Osband2013more} showed that the \textit{Bayesian} regret of this algorithm is bounded by $\tilde{O}\left(HS\sqrt{AT}\right)$.

\begin{figure}[t]
   \begin{minipage}{0.65\linewidth}
         \centering
         \includegraphics[width=\linewidth]{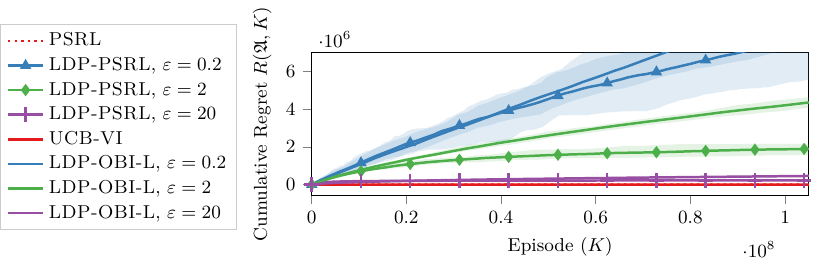}

   \end{minipage}\hfill
   \begin{minipage}{0.3\linewidth}
      \centering
      \includegraphics[width=\linewidth]{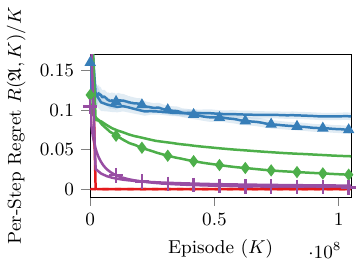}
   \end{minipage}
   \caption{Evaluation of \ppsrl in the RandomMDP environment. \emph{Left)} Cumulative regret. \emph{Right)} per-step regret ($k \mapsto R_{k}/k$). Results are averaged over $20$ runs and the {\color{black} the confidence intervals are the minimum and maximum runs}. While the regret looks almost linear for $\varepsilon=0.2$, the decreasing trend of the per-step regret shows that the algorithms are learning.}
      \label{fig:exp.randommdp_psrl}
\end{figure}

\paragraph{Locally Differentially Private Posterior Sampling for Reinforcement Learning:}
We now discuss how to adapt PSRL to ensure it is locally differentially private.
Def.~\ref{def:RL-LDP} states that LDP is ensured at the collection time of trajectories therefore it is enough for us to design a LDP posterior sampling algorithm which takes as input the trajectories outputted by a mechanism similar to Alg.~\ref{alg:laplace_mechanism}.
Here, we use the LDP mechanism to pertub the statistics used to define the parameters of the posterior distribution in \psrl.
More precisely, we replace the aggregate counts in Eqs.~\ref{eq:updates_alpha}-\ref{eq:update_beta} by noisy counts provided by an LDP mechanism. In order to do this, we need to modify the initial values of those parameters to guarantee they are non-negative.

In this appendix, we assume that the privacy-preserving mechanism $\mathcal{M}$ is such that for a given trajectory $X$, $\mathcal{M}(X) = (\wt{R}_{X}, \wt{R}_{2,X}, \wt{N}^{r}_{X}, \wt{N}^{p}_{X})$ where $\wt{R}_{X}, \wt{R}_{2,X}, \wt{N}^{r}_{X}$ and $\wt{N}^{p}_{X}$ are noisy version of the following aggregate statistics:
\begin{align*}
&R_X(s,a) = \sum_{h=1}^{H} r_{h}\mathds{1}_{\{s_{h} = s, a_{h} = a\}},
&&R_{2,X}(s,a) = \sum_{h=1}^{H} r_{h}^{2}\mathds{1}_{\{s_{h} = s, a_{h} = a\}} \\
& N^r_X(s,a) = \sum_{h=1}^{H} \mathds{1}_{\{s_{h} = s, a_{h} = a\}},
&& N^p_X(s,a,s') = \sum_{h=1}^{H-1} \mathds{1}_{\{s_{h} = s, a_{h} = a, s_{h+1} = s'\}}
\end{align*}
In particular, $\wt{R}_{X},  \wt{N}^{r}_{X}$ and $\wt{N}^{p}_{X}$ are defined as for the optimistic algorithm in Section~\ref{sec:privmech} and $\wt{R}_{2,X}$ is a privatized version of $R_{2,X}(s,a) = \sum_{h=1}^{H} r_{h}^{2}\mathds{1}_{\{s_{h} = s, a_{h} = a\}}$ for a trajectory $X$.s

The posterior updates we use in \ppsrl are then for all $s'\in \mathcal{S}$:
{\small\begin{equation}\label{eq:updates_perturb_posterior_1}
   \begin{aligned}
&\wt{\alpha}_{k}(s,a) = \alpha_{0}(s,a) + \wt{N}^{p}_{k}(s,a,s')\\
&\wt{\mu}_{k}(s,a) = \frac{\lambda_{0}(s,a)\mu_{0}(s,a) + \wt{R}_{k}(s,a)}{\lambda_{0}(s,a) + \wt{N}^{r}_{k}(s,a)}\\
&\wt{\lambda}_{k}(s,a) = \lambda_{0}(s,a) + \wt{N}^{r}_{k}(s,a) \\
%
&\wt{\nu}_{k}(s,a) = \tilde{\nu}_{0}(s,a) + \frac{\wt{N}^{r}_{k}(s,a)}{2}\\
&\wt{\beta}_{k}(s,a) = \beta_{0}(s,a) + \frac{\lambda_{0}(s,a)\wt{N}_{k}^{r}(s,a)\mu_{0}^{2}(s,a) - \wt{R}_{k}^{2}(s,a)}{2( \lambda_{0}(s,a) + \wt{N}_{k}^{r}(s,a))} \\
&\hspace{6cm}+ \frac{1}{2}\sum_{l\leq k-1} \wt{R}_{2,l} - \frac{\mu_{0}(s,a)\wt{R}_{k}(s,a)}{\lambda_{0}(s,a) + \wt{N}_{k}^{r}(s,a)}
\end{aligned}
\end{equation}}

In the following, we choose the Laplace mechanism as our privacy-preserving mechanism for \ppsrl, although we believe that it should be possible to use one of the other mechanisms we discussed. For each trajectory $X$, we add independent Laplace variables
to $(R_{X}(s,a), R_{X,2}(s,a), N^{r}_{X}(s,a), N_{X}^{p}(s,a))$ with parameter $8H/\varepsilon$. Following the same argument outlined in the proof of Thm.~\ref{thm:ldp_laplace},\todompout{Check reference} we can show that this privacy-preserving mechanism is $(\varepsilon, 0)$-LDP.

To ensure positivity, by concentration of Laplace variables we set the initial values of the parameters of the posterior distributions to:
\begin{align}\label{eq:initial_values_psrl}
&\alpha_{0}(s,a,s') = \max\{\sqrt{KS}, \ln(6S^{2}A/\delta)\}\frac{\sqrt{8\ln\left(6S^{2}A/\delta\right)}}{\varepsilon_{0}}\\
&\mu_{0}(s,a) = 0\\
&\lambda_{0}(s,a) = \max\{\sqrt{K}, \ln(6SA/\delta)\}\frac{\sqrt{8\ln\left(6SA/\delta\right)}}{\varepsilon_{0}}\\
&\nu_{0}(s,a) = \max\{\sqrt{K}, \ln(6SA/\delta)\}\frac{\sqrt{8\ln\left(6SA/\delta\right)}}{\varepsilon_{0}}\\
&\beta_{0}(s,a) = 5\max\{\sqrt{K}, \ln(6SA/\delta)\}\frac{\sqrt{8\ln\left(6SA/\delta\right)}}{\varepsilon_{0}}
\end{align}
where $K$ is the total number of episodes.
The pseudocode of \ppsrl is reported in Alg.~\ref{alg:privatepsrl}.

\paragraph{Empirical results}
We show empirical results for the \ppsrl algorithm in the RandomMDP environment in Figure~\ref{fig:exp.randommdp_psrl}.
While we have shown that this algorithm is $\varepsilon$-LDP and empirically outperforms optimistic approaches, we leave the regret analysis to future work.

\begin{algorithm}[t]
    \caption{\ppsrl}
    \label{alg:privatepsrl}
 \begin{algorithmic}
    \STATE {\bfseries Input:} Initial values: $\alpha_{0}, \mu_{0}, \lambda_{0}, \nu_{0}$ and $\beta_{0}$
    \FOR{episodes $k=1, \dots, K$}
            \STATE Draw empirical MDP, $\theta_{k}$ from the posterior and compute $\pi_{k}$ as the optimal policy for MDP $\theta_{k}$
            \STATE User $u_{k}$ executes policy $\pi_{k}$, collect trajectory $X_{k} = \{(s_{k,h}, a_{k,h}, r_{k,h})\mid h\leq H\}$
            \STATE Update noisy counts with $(\wt{R}_{X_{k}}(s,a), \wt{R}_{X_{k},2}(s,a), \wt{N}^{r}_{X_{k}}(s,a), \wt{N}_{X_{k}}^{p}(s,a))$ and posterior distribution
    \ENDFOR
 \end{algorithmic}
\end{algorithm}

\section{Additional Experiment}\label{app:additional_exp}
\todocout{is  this mentioned at all in the main text?}
\todoeout{No, changed it}
In this section, we explore a second experiment, in which we use the same the RandomMDP environment with the same parameters as in Sec.~\ref{sec:experiments} in order to
 investigate the effect of differential privacy on the learning process. For this, we run the \ucbvi algorithm for $K = 10^{3}$ episodes and collect the aggregate noisy statistics, $(\wt{R}_{K}(s,a))_{(s,a)\in \mathcal{S}\times\mathcal{A}}, (\wt{N}_{K}^{r}(s,a))_{(s,a)\in \mathcal{S}\times\mathcal{A}}$ and $(\wt{N}_{K}^{p}(s,a,s'))_{(s,a,s')\in \mathcal{S}\times\mathcal{A}\times\mathcal{S}}$
that have been generated by using the Laplace mechanism for each episode. We collect those statistics, $10^{3}$ times. We compare the histogram of those noisy statistics to that of the noiseless statistics used by \ucbvi in Fig.~\ref{fig:histogram_statistic}. This demonstrates that, as expected, there is much more variation in the statistics provided by the private mechanism. 
In Fig.~\ref{fig:histo_two_traj}, we applied the Laplace mechanism to two different random trajectories, $X$ and $X'$. We can see that, after applying the Laplace mechanism, the two distinct trajectories become almost indistinguishable. These two figures combined demonstrate the difficulty of learning from locally differentially private data.


\begin{figure}[t]
\begin{minipage}{0.45\linewidth}
\centering
\includegraphics[width=\linewidth]{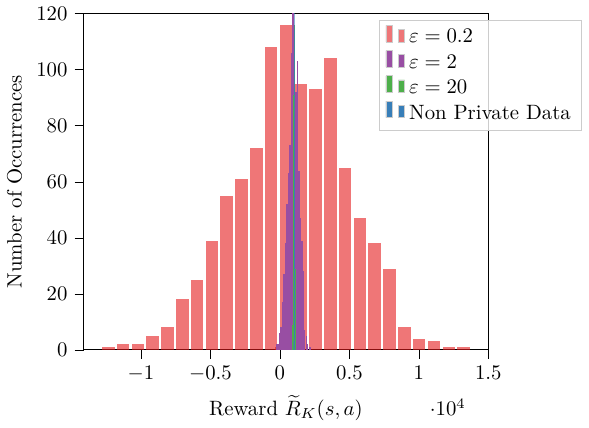}
\caption{Aggregate reward for privatized data with $\varepsilon\in \{0.2, 2, 20\}$ and non-privatized data for state $0$ and action $1$}
\label{fig:histogram_statistic}
\end{minipage}\hfill
\begin{minipage}{0.45\linewidth}
\centering
\includegraphics[width=0.87\linewidth]{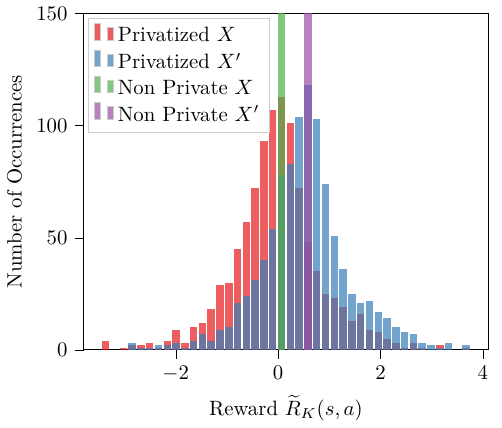}
\caption{Privatized cumulative reward over an episode for a given state-action pair and two different trajectories $X$ and $X'$ with $\varepsilon = 20$ for state $0$ and action $1$}
\label{fig:histo_two_traj}
\end{minipage}
\end{figure}

\section{Privacy Amplification by Shuffling in RL}\label{app:shuffle_model_rl}

In recent years, the shuffle model for privacy \citep{Cheu_2019,feldman2020hiding,chen2020distributed, balle2019privacy, erlingsson2020encode, erlingsson2020amplification}
 has attracted a lot of attention thanks its amplification property tof the differential privacy guarantees of locally differential data.

In this model of privacy, we consider $n$ users equipped with a local differential privacy mechanism, each user submits a locally private report to a random shuffler
which computes a random permutation of the users' reports. Those randomly shuffled reports are then sent to a analyzer which computes functions of
interests based on them. This setting was first introduced in \cite{bittau_2017} and was named the \emph{ESA} model (Encode-Shuffle-Analyze) and motivated
by need for anonymous data collection. \citep{erlingsson2020amplification} later provided an analysis of the amplification of privacy thanks to the combined
use of shuffling and local differential privacy showing that the shuffling model of privacy
is able to strike a middle ground between the totally decentralized but somewhat sample inefficient
\emph{local} model and the centralized but more sample efficient central model of privacy.

The  shuffling model has then been refined to study the impact on the size of the reports sent by users, i.e., how the accuracy of
a shuffling protocol can be improved when user are allowed to have higher communication threshold \citep{Cheu_2019, balle2019differentially}.
It has also been studied for different analyzer function, for instance histograms \citep{balcer2020separating} or summation \citep{Cheu_2019, balle2019privacy},
obtaining optimal protocol with better accuracy and lesser communication costs (\ie the number of messages or the size of those messages sent by a user).
Finally, the shuffle model has inspired  a privacy amplification algorithm for learning in distributed setting without server-initiated communication  \citep{balle2019privacy}.

Overall, the most attractive feature of this privacy model is that it offers a smooth transition in terms of privacy/utility tradeoff between stringent LDP requirements and
differential privacy requirements (see \citep{feldman2020hiding} for an example of this transition in the problem of estimating a distribution).

Formally, in our RL setting each episode $k$ represents a user $u_{k}$ which completes a trajectory $X_{u_{k}}$ in the MDP. The user computes a locally private version of its trajectory thanks to a privacy-preserving mechanism
$\mathcal{M}$. The result $\mathcal{M}(X_{u_{k}})$ is passed to a shuffler $\mathcal{R}$. This shuffler stores all the previous privatized trajectories before the current episode $k$, $(\mathcal{M}(X_{u_{l}}))_{l < k}$,
computes a random permutation $\sigma: [k-1] \rightarrow [k-1]$ and sends the permuted
set of privatized trajectories, $(\mathcal{M}(X_{u_{\sigma(l)}}))_{l\leq k-1}$ to an RL algorithm like \algo. This interaction protocol is detailed in Alg.~\ref{alg:shuffling_mechanism}.

\begin{algorithm}[tb]
  \caption{Shuffling Protocol}
  \label{alg:shuffling_mechanism}
  \begin{algorithmic}
    \STATE {\bfseries Input:} number of episodes $K$, horizon $H$,
    failure probability $\delta \in (0,1)$, bias $\alpha>1$, private randomizer $\mathcal{M}_{\text{sh}}$
    with LDP parameters $(\epsilon_0,\delta_0)$
    \FOR{$k=1$ {\bfseries to} $K$}
      \STATE Shuffler $\mathcal{R}$ sends $(\mathcal{M}_{\text{sh}}(X_{u_{\sigma_{k}(l)}}))_{l\leq k-1}$ with $\sigma_{k}$ a random permutatioon at each episode
      \STATE \algo computes policy $\pi_{k}$ based on $(\mathcal{M}_{\text{sh}}(X_{u_{\sigma_{k}(l)}}))_{l\leq k-1}$
      \STATE User $u_{k}$ executes policy $\pi_{k}$ in the environment, collects trajectory $X_{k} = \{(s_{k,h}, a_{k,h}, r_{k,h})_{h\leq H}\}$ and sends the privatized trajectory $\mathcal{M}_{\text{sh}}(X_{k})$ to $\mathcal{R}$
    \ENDFOR
  \end{algorithmic}
\end{algorithm}

In the specific case of RL, thanks to \citep{vietri2020privaterl} we know that any regret minimizing algorithm using $(\varepsilon, \delta)$-DP counters,
like $(N_{k}^{p})_{k\leq K}$ is $(\varepsilon, \delta)$-joint differentially private.

\subsection{Privacy-preserving mechanism $\mathcal{M}_{\text{sh}}$}

  A trajectory $X_{u} := \{(s_{h}, a_{h}, r_{h}) \mid h\leq H\}$ is a
  sequence of $H$ states, actions and rewards. In order to build
  a model of the MDP, \algo uses counters of the numbers of occurrences of
  each tuple of state-action $(s,a)$ and
  state, actions and next-state $(s,a,s')$. We adapt to the RL setting,
  the algorithm for bit-sum protocol presented in \citep{Cheu_2019}.
  The first step of the process $\mathcal{M}_{\text{sh}}$ is to apply a
  one-hot encoding the trajectory for each state-action. Let $x\in \{0,1\}^{H\times S \times A}$ and $y\in\{0,1\}^{(H-1)\times S \times A \times S}$ such that for each $(s,a,s')\in\mathcal{S}\times\mathcal{A}\times\mathcal{S}$
  \begin{align}\label{eq:temp_one_hot}
    \forall h\in \llbracket 1, H\rrbracket, \qquad x_{h,s,a} = \mathds{1}_{\{s_{h} = s, a_{h} = a\}}, \text{ and } y_{h,s,a,s'} = \mathds{1}_{\{s_{h} = s, a_{h}=a, s_{h+1} = s'\}}
  \end{align}
  To encode the reward, we first compute the reward for each state-action pair, $\left(r_{h}\mathds{1}_{\{s_{h} = s, a_{h} = a\}}\right)_{(h,s,a)\in\llbracket 1, H\rrbracket\times\mathcal{S}\times\mathcal{A}}$ then given a parameter $m\in\mathbb{N}^{\star}$ for each state-action pair $(s,a)$, we compute $b_{h,s,a}\in\{0,1\}^{m}$ such that
  for $j\in\llbracket 1, m\rrbracket$:
  \begin{align}\label{eq:temp_one_hot_reward}
    (b_{h,s,a})_{j} = \left\{\begin{matrix}
      1 & \text{ if } j < \mu_{h,s,a}\\
      \text{Ber}\left( p_{h,s,a}\right) & \text{ if } j = \mu_{h,s,a}\\
      0 & \text{ if } j>\mu_{h,s,a}\\
     \end{matrix} \right.
  \end{align}
  with $\mu_{h,s,a} = \left\lceil m r_{h}\mathds{1}_{\{s_{h} = s, a_{h} = a\}}\right\rceil$ and $p_{h, s,a} = m r_{h}\mathds{1}_{\{s_{h} = s, a_{h} = a\}} - \mu_{h, s,a} + 1$.

  \begin{algorithm}[tb]
    \caption{Local randomizer $R_{p}^{0/1}$}
    \label{alg:local_randomizer}
    \begin{algorithmic}
      \STATE {\bfseries Input:} randomization probability: $p\in[0,1]$, $x\in\{0,1\}$
      \STATE Let $b\sim \text{Ber}(p)$
      \IF{$b = 0$}
        \STATE Return $x$
      \ELSE
        \STATE Return $\text{Ber}(1/2)$
      \ENDIF
    \end{algorithmic}
  \end{algorithm}

  It is a well known result, \citep{Cheu_2019} that Alg.~\ref{alg:local_randomizer} with parameter $p$ guarantees $\ln(2/p - 1)$ differential privacy.
  Finally, the privacy-preserving mechanism $\mathcal{M}_{\text{sh}}$ is described by Alg.~\ref{alg:shuffle_randomizer}.

  \begin{algorithm}[tb]
    \caption{Privacy-preserving mechanism $\mathcal{M}_{\text{sh}}$}
    \label{alg:shuffle_randomizer}
    \begin{algorithmic}
      \STATE {\bfseries Input:} trajectory $\tau = \{(s_{h}, a_{h}, r_{h})_{h\leq H}\}$, privacy parameter $\varepsilon > 0$, parameter $m\in\mathbb{N}^{\star}$
      \STATE Compute $x$ and $y$ as in Eq.~\eqref{eq:temp_one_hot} and $(b_{h, s,a})_{(s,a)\in\mathcal{S}\times\mathcal{A}}$ as in Eq.~\eqref{eq:temp_one_hot_reward}
      \STATE Set $p = \frac{2}{\exp(\varepsilon) + 1}$
      \STATE Return $(R_{p}^{0/1}(x_{h,s,a}))_{(h,s,a)}$, $(R_{p}^{0/1}(y_{h,s,a,s'}))_{(h,s,a,s')}$ and $( (R_{p}^{0/1}((b_{h, s,a})_{j})_{j\leq m})_{(h, s,a)} $
    \end{algorithmic}
  \end{algorithm}

  Using standard analysis, we can show that this local mechanism $R_{p}^{0/1}$ is
  roughly $H\varepsilon$-LDP for any $\varepsilon>0$.Upon receiving
  the shuffled privatized, the algorithm \algo computes the different
  counts $(\tilde{N}_{k}^{p}(s,a,s'))_{(s,a,s')}$, $(\tilde{N}_{k}^{r}(s,a))_{(s,a)}$ and $(\tilde{R}_{k}(s,a))_{(s,a)}$. For any $(s,a,s')\in \mathcal{S}\times\mathcal{A}\times \mathcal{S}$, we define the counters as:
\begin{align}
  &\tilde{N}_{k}^{r}(s,a) = \frac{1}{1 - p}\left( \sum_{l=1}^{k-1} \sum_{h=1}^{H} \left[R_{p}^{0/1}(x_{h,s,a}) - \frac{p}{2}\right]\right)\\
  &\tilde{N}_{k}^{p}(s,a,s') = \frac{1}{1 - p}\left( \sum_{l=1}^{k-1} \sum_{h=1}^{H} \left[R_{p}^{0/1}(y_{h,s,a,s'}) - \frac{p}{2}\right]\right)\\
  &\tilde{R}_{k}^{r}(s,a) = \frac{1}{m(1 - p)}\left( \sum_{j=1}^{m}\sum_{l=1}^{k-1} \sum_{h=1}^{H} \left[R_{p}^{0/1}((b_{h,s,a})_{j}) - \frac{p}{2}\right]\right)
\end{align}

Therefore, thanks to Claim~$4.6$ of \citep{Cheu_2019}, we have at the beginning of episode $k$,$(\tilde{N}_{k}^{r}(s,a))_{(s,a)}$ and $(\tilde{N}_{k}^{p}(s,a,s'))_{(s,a,s')}$ are $(\varepsilon_{k,c}, \delta_{0})$-DP with any $\delta_{0}>0$ and:
\begin{align}
  \varepsilon_{k,c} = \frac{32\log(4/\delta_{0})/\sqrt{(k-1)H}}{\sqrt{p - \sqrt{\frac{2p\log(2/\delta_{0})}{(k-1)H}}}}\left(1 - \left(p - \sqrt{\frac{2p\log(2/\delta_{0})}{(k-1)H}}\right) \right)
\end{align}
with $p\in \left[\frac{14}{(k-1)H}\log(4/\delta_{0}), 1\right]$.
But we have that with probability at least $1 - \delta$, for any $\delta>0$, that:
{\small\begin{align*}
  &\left|\sum_{l=1}^{k-1}\sum_{h=1}^{H} \mathds{1}_{\{s_{l,h}=s, a_{l,h} = a\}}  - \tilde{N}_{k}^{r}(s,a)\right| \leq \frac{1}{1 - p}\left(\sqrt{(k-1)Hp(1 - p/2)\ln(1/\delta)} + \frac{2\ln(1/\delta)}{3}\right)\\
  &\left|\sum_{l=1}^{k-1}\sum_{h=1}^{H-1} \mathds{1}_{\Big\{\begin{subarray}{l}s_{k,h}=s, \\a_{k,h} = a\\, s_{k,h+1}= s'\end{subarray}\Big\}}  - \tilde{N}_{k}^{p}(s,a,s')\right| \leq \frac{1}{1 - p}\left(\sqrt{(k-1)Hp(1 - p/2)\ln(1/\delta)} + \frac{2\ln(1/\delta)}{3}\right)
\end{align*}}

The same type of result of result holds for the cumulative reward in each state-action pair $(s,a)$, albeit some
small technical difficulties due the estimated sum being in $\mathbb{R}$ and not an integer contrary to the counters for the number of visits.

\subsection{Impact on the Regret}
  We have mentioned that thanks to the shuffling mechanism the counters $(\tilde{R}_{k}(s,a))_{(s,a)}$, $(\tilde{N}_{k}^{r}(s,a))_{(s,a)}$, $(\tilde{N}_{k}^{p}(s,a,s'))_{(s,a,s')}$
  enjoy a $(\varepsilon_{c},\delta)$-DP guarantee, in addition to the
  $\epsilon_{0}$-LDP guarantee. But the utility bound in the last subsection
  highlights that for a strict constraint on the level of local differential privacy
  the utility of each counters is of order $\frac{\sqrt{kH}}{\exp(\varepsilon_{0}) - 1}$ therefore
  using Thm.~\ref{thm:regret_any_mechanism}, the regret of \algo coupled with
  $\mathcal{M}_{\text{sh}}$ is bounded with high probability by $\frac{H^{2}S^{2}A\sqrt{KH}}{\exp(\varepsilon_{0}/H) - 1}$. This result is similar to the result of \citep{feldman2020hiding} of Sec.~$5.1$ about density estimation where the shuffle model recovers the known rate of convergence
  of $\mathcal{O}(1/\varepsilon\sqrt{n})$ under an $\varepsilon$-LDP constraint
  with $n$ samples.

  However, in the reinforcement learning setting the shuffle model
  might allow to interpolate between LDP setting presented in this paper
  and the joint differential privacy setting of \cite{shariff2018differentially,
  vietri2020privaterl}. One difficulty here being that because each user interacts only once with the RL algorithm
  the probability used by the local randomizer $R_{p}^{0/1}$ ha to be dependent on the number of previous episode to ensure
  a good $(\varepsilon, \delta)$-JDP guarantee. In other words, for the very first episodes the privacy amplification of the shuffle
  model is negligible therefore the privacy parameter for those early users has to be stronger than for the latter ones which are somewhat hidden by the crowd.
  Albeit this minor issue, a good choice of the probabilities $(p_{i})_{k\leq K}$ may be able to guarantee $(\varepsilon, \delta)$-JDP (for any $\varepsilon>0$ and $\delta>0$) and a regret of order
  $\mathcal{O}(\sqrt{K} + \frac{\log(K)}{\varepsilon})$.



\end{document}